\definecolor{yuting}{RGB}{255,69,0}
\definecolor{yxc}{RGB}{21,199,133}
\newcommand{\papertitle}{Stochastic Runge-Kutta Methods: \\ Provable Acceleration of Diffusion Models}
\renewcommand{\hat}{\widehat}
\renewcommand{\bar}{\overline}
\title{\papertitle}
\author{%
	Yuchen Wu\thanks{Department of Statistics and Data Science,  University of Pennsylvania; email: \texttt{\{wuyc14,yuxinc,ytwei\}@wharton.upenn.edu}.}\and
 Yuxin Chen\footnotemark[1]
 \and
Yuting Wei\footnotemark[1] 
}
\date{\today}
\begin{document}

\maketitle

\begin{abstract}

Diffusion models play a pivotal role in contemporary generative modeling, claiming state-of-the-art performance across various domains. 
Despite their superior sample quality, mainstream diffusion-based stochastic samplers like DDPM often require a large number of score function evaluations, incurring considerably higher computational cost compared to
single-step generators like generative adversarial networks. 
While several acceleration methods have been proposed in practice, the theoretical foundations for accelerating diffusion models remain underexplored. 
In this paper, we propose and analyze a training-free acceleration algorithm for SDE-style diffusion samplers,  based on the stochastic Runge-Kutta method. 
The proposed sampler provably attains $\varepsilon^2$ error---measured in KL divergence---using  $\widetilde O(d^{3/2} / \varepsilon)$ score function evaluations (for sufficiently small $\varepsilon$), strengthening the state-of-the-art guarantees $\widetilde O(d^{3} / \varepsilon)$ in terms of dimensional dependency. 
Numerical experiments validate the efficiency of the proposed method.

\end{abstract}

\tableofcontents

\section{Introduction}

Initially introduced by \cite{sohl2015deep} in the context of thermodynamics modeling, diffusion models now play a pivotal role in modern generative modeling, a task that aims to generate new data instances that resemble the training data in distribution. 
Remarkably, diffusion models are capable of producing high-quality synthetic samples, and have claimed the state-of-the-art performance across various domains, ranging from image generation \citep{song2019generative,ho2020denoising,song2020denoising,dhariwal2021diffusion,nichol2021glide,ho2022cascaded,rombach2022high,saharia2022photorealistic,ho2022classifier}, text generation \citep{austin2021structured,li2022diffusion,ramesh2022hierarchical}, speech synthesis,  \citep{popov2021grad,kim2022guided}, time series imputation \citep{tashiro2021csdi,alcaraz2022diffusion}, reinforcement learning \citep{pearce2023imitating,hansen2023idql}, and molecule modeling \citep{anand2022protein,xu2022geodiff,trippediffusion}. 
Remarkably, diffusion models have served as crucial components of mainstream content generators including Stable Diffusion \citep{rombach2022high}, DALL-E \citep{ramesh2022hierarchical}, and Imagen \citep{saharia2022photorealistic}, among others, achieving superior  performance in the now rapidly growing field of generative artificial intelligence. 
We refer the interested reader to \cite{yang2023diffusion} for a comprehensive survey of methods and applications pertinent to diffusion models, and to \cite{tang2024score,chen2024overview} for overviews of recent theoretical development. 

\subsection{Diffusion model overview}

On a high level, diffusion models take into consideration two processes: 

\begin{enumerate}
    \item[1)] a forward process 
    \begin{align*}
        X_{0} \to X_{1} \to \cdots \to X_{K}
    \end{align*}
    that sequentially diffuses the target data distribution into an easy-to-sample prior, typically chosen as a standard Gaussian distribution;

    \item[2)] a learned reverse process
    \begin{align*}
        Y_{0} \to Y_{1} \to \cdots \to Y_K 
    \end{align*}
    that transforms the prior  (e.g., standard Gaussian) back into a distribution that resembles the target distribution, with the aim of  achieving $X_k \overset{\mathrm{d}}{\approx} Y_k$ for all $k = 0, 1, \cdots, K$. 
    A key component that enables  the construction of a faithful reverse process is the estimated (Stein) score functions \citep{song2020score}, typically represented by pre-trained neural networks. 
    During the sampling phase, only the reverse process is implemented to generate new data instances. 
\end{enumerate}

\noindent 
Constructing the forward process is generally straightforward which often amounts to successively injecting noise into the data; 
in contrast,  the reverse process is far more complicated, which generally involves  evaluating large-scale denoising neural networks recursively (for the purpose of computing the estimated score functions) to restore the target distribution.  
Viewed in  this light, the number of function evaluations (NFE)---more precisely, the number of times needed to compute the output of, say, denoising neural networks---oftentimes dictates the efficiency of diffusion-based samplers. 
 
There are at least two primary approaches concerned with the construction of the reverse processes: stochastic differential equation (SDE)-based samplers, and ordinary differential equation (ODE)-based samplers \citep{song2020score}. 
These samplers are  based on discrete-time processes that approximate the dynamics of certain diffusion SDEs and ODEs, such that when initialized at the prior, the solutions of these differential equations are designed to have marginal distributions that match the target distribution.
Prominent examples of SDE-based and ODE-based samplers include the Denoising
Diffusion Probabilistic Model (DDPM) \citep{ho2020denoising} and the Denoising Diffusion Implicit Model (DDIM) \citep{song2020denoising}, respectively.
Empirically, ODE-based samplers offer faster sampling speeds compared to the SDE-based counterpart, 
while SDE-based samplers often generates higher-quality samples given sufficient runtime \citep{song2020denoising,nichol2021improved}. The respective advantages of these two approaches motivate researchers to explore both types of samplers.

\begin{figure}[t]
    \centering
    \includegraphics[width=0.9\linewidth]{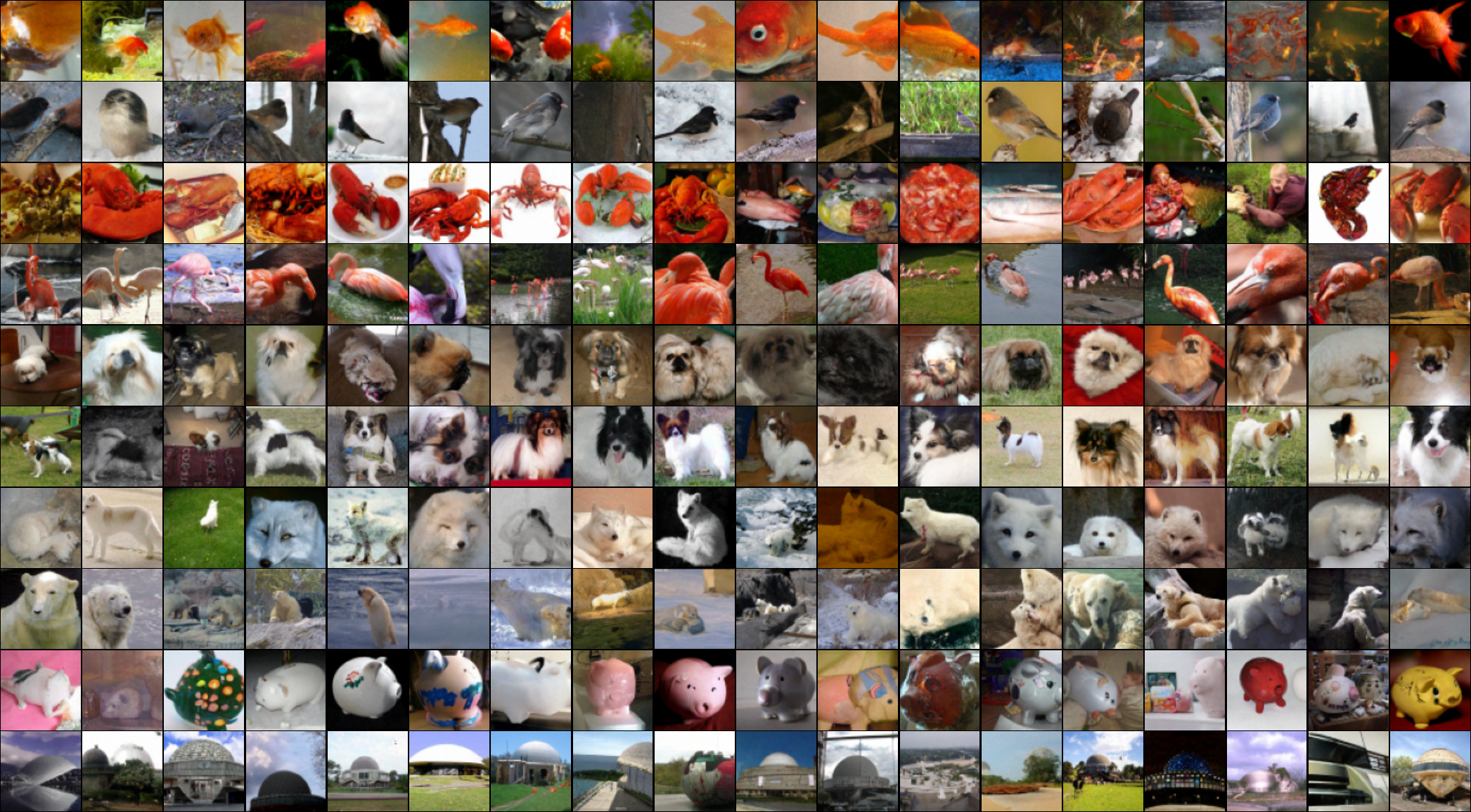}
    \caption{Class-conditional ImageNet $64 \times 64$ samples generated using 250 sampling steps with our method (\cref{alg:stochastic-RK}).}
    \label{fig:patch}
\end{figure}

\subsection{Accelerating diffusion models}

While mainstream diffusion-based samplers like DDPM are known to generate high-fidelity samples, they often suffer from low sampling speed, 
requiring a large number of 
score function evaluations (oftentimes being neural network evaluations) to generate samples. For this reason, diffusion models incur considerably higher computational costs compared to single-step generators like generative adversarial networks (GANs) \citep{goodfellow2014generative} or variational auto-encoders (VAEs) \citep{kingma2013auto}, thus constraining their practicality in real-world applications that demand real-time data generation.

To remedy this efficiency issue, researchers have proposed several acceleration schemes to speed up the sampling process of diffusion models. 
Prominent examples include the training-based method, such as model distillation \citep{luhman2021knowledge,salimansprogressive,meng2023distillation}, noise level or sample trajectory learning \citep{nichol2021improved,san2021noise}, and consistency models \citep{song2023consistency,li2024towards}. 
Despite their impressive performance, training-based acceleration methods incur enormous additional computational costs for training, and can be challenging to implement for large-scale pre-trained diffusion models.
In contrast, an alternative class of acceleration methods is based on modifying the original diffusion models without additional training, offering the flexibility to wrap around any pre-trained diffusion models (see,  e.g., \citet{lu2022dpm,zheng2023dpm,zhao2024unipc}). More detailed discussions about these training-free acceleration methods are deferred to Section~\ref{sec:related-work}.

Despite their empirical successes, most theoretical guarantees of diffusion acceleration
are established based on ODE-based algorithms (e.g., \citet{lee2023convergence,li2024accelerating,huang2024convergence}).
In comparison, rigorous convergence analysis for SDE-based acceleration remains largely underexplored, in spite of extensive theoretical investigation for the first-order unaccelerated solvers \citep{chen2023improved,chen2023sampling,lee2022convergence,benton2024nearly,li2023towards,liang2024non,li2024d,li2024adapting}. 
Given the popularity of stochastic samplers \citep{song2020score,lu2022dpm++,gonzalez2024seeds} and the fact that they tend to generate higher fidelity samples compared to their ODE-based analog, 
it is of great interest to design principled SDE-based acceleration schemes and demonstrate their provable advantages.



\subsection{Our contributions}

In this paper, we design a high-order SDE-based sampler, leveraging upon idea of stochastic Runge-Kutta methods.  
Our algorithm is training-free in nature. Each step only requires a single score function evaluation, introducing no extra per-step cost compared to DDPM.  %
For a broad family of target data distributions in $\mathbb{R}^d$, 
it only takes $\widetilde O(d^{3/2} / \varepsilon)$ score function evaluations for our proposed sampler to yield a distribution that is $\varepsilon^2$ close to the target distribution  in KL divergence, provided that the score estimates are sufficiently accurate and that $\varepsilon$ is sufficiently small. 
Compared to prior theory for accelerated SDE-based samplers, our result strengthens the state-of-the-art guarantees $\widetilde O(d^{3} / \varepsilon)$ in terms of dimensional dependency. 
More precise comparisons between our results and previous theory on SDE-based samplers are provided in Table~\ref{table:literature}. 
%
%
To demonstrate the practical efficiency of the proposed method, we conduct a series of numerical experiments, as illustrated in Figure~\ref{fig:patch}. More details can be found in Section~\ref{sec:experiments}.




\begin{table}[t]
\centering
\scalebox{0.97}{
\begin{tabular}{c  c  l  c  c}
    \toprule
    \textbf{Sampler} &\textbf{Distribution} & \textbf{Score estimation} &  \textbf{Complexity} & \textbf{Reference} \\
    \midrule
    SDE-based & Finite second moment  & \, $\ell_2$ score error &  $\widetilde{O}(d / {\eps^2})$ & \cite{benton2024nearly} \\ \midrule
    SDE-based & Bounded  & \, $\ell_2$ score error &  $\widetilde{O}(d^3 / {\eps})$ & \cite{li2024accelerating} \\ \midrule
    \textcolor{CornflowerBlue}{SDE-based} & \textcolor{CornflowerBlue}{Bounded} & \, \textcolor{CornflowerBlue}{$\ell_2$ score error} & \textcolor{CornflowerBlue}{$\widetilde O(d^{3 / 2}  / {\eps})$} & \textcolor{CornflowerBlue}{This work} \\ 
    \bottomrule
\end{tabular}}
\caption{ The number of score function evaluations  required to attain $\eps^2$ error measured in KL divergence. 
In this table, we ignore the impact of score estimation errors, and focus only on SDE-based samplers. 
We only emphasize the dependency on $d$ and $\eps$, omitting logarithmic factors and other constants. 
}
\label{table:literature}
\end{table}

\subsection{Other related works}
\label{sec:related-work}

Here, we briefly discuss several other prior theory on multiple aspects of diffusion models. 

\paragraph{Training-free acceleration schemes.}
A recent strand of works seeks to speed up ODE-based samplers via efficient ODE solvers. 
In particular, \citet{zhangfast} proposes DEIS, building on the semi-linear structure of the reverse process and utilizing the exponential integrator \citep{hochbruck2010exponential}. Similarly, \cite{lu2022dpm} introduces the DPM-solver by combining high-order ODE solvers with the semi-linear framework, 
and further develop DPM-Solver++ to enhance stability in guided sampling \citep{lu2022dpm++}.
Additionally, \cite{zhao2024unipc} establishes a predictor-corrector framework to accelerate diffusion sampling. 
In comparison, training-free acceleration for SDE-based samplers are considerably less explored. 
\cite{jolicoeur2021gotta} designs an SDE solver based on stochastic Improved Euler’s method. 
\cite{karras2022elucidating} proposes a a stochastic sampler that comines ODE integrator with a Langevin step. 
Motivated by Taylor expanding diffusion processes, \cite{lu2022dpm++} proposes SDE-DPM-Solver++. 
\cite{xue2024sa} presents the SA-solver, leveraging the stochastic Adams method to accelerate sampling speed. 
The theoretical underpinnings about these stochastic acceleration methods, however, remain far from complete.  

\paragraph{Theory for ODE-based acceleration.} 
In comparison to the theory for DDPM, the theoretical support for the ODE-based samplers has only been established fairly recently 
\citep{chen2023restoration,li2024towards,benton2023error,chen2024probability,li2024sharp,gao2024convergence,huang2024convergence}, 
where the state-of-the-art convergence guarantees for the probability flow ODE are established by \cite{li2024sharp}. 
A first attempt towards the design of provably accelerated training-free ODE-based methods is made by   \cite{li2024accelerating}, which proposes and analyzes both ODE- and SDE-based acceleration algorithms. 
The accelerated ODE sampler proposed therein leverages a momentum-like term to enhance sample efficiency, and their accelerated SDE sampler is constructed using higher-order expansions of the conditional density. Both of these samplers come with improved non-asymptotic convergence guarantees  compared to prior theory for the unaccelerated counterpart \citep{benton2024nearly,li2024sharp}. 
Furthermore, \cite{huang2024convergence} establish convergence guarantees for high-order ODE solvers in the context of diffusion models. 
\cite{gupta2024faster} proposes to accelerate ODE-based samplers by incorporating a randomized midpoint method, achieving state-of-the-art dependency on the problem dimension. 
To bypass the complexity of developing an end-to-end theory for diffusion models, these studies often establish non-asymptotic convergence results assuming access to accurate score function estimates. 

\paragraph{Theory for score matching/estimation.} 
In addition to the sampling phase, the score matching phase plays a crucial role in determining the sample quality \citep{hyvarinen2005estimation,lu2022maximum,koehler2022statistical}. 
To understand the finite-sample error of score function estimation, \cite{block2020generative} provides estimation guarantees under the $\ell_2$ metric in terms of the Rademacher complexity of a certain concept class. 
\cite{chen2023score}, \cite{oko2023diffusion}  and \cite{tang2024adaptivity} characterize the sample complexity of diffusion models when the target distribution resides within some low-dimensional linear space, the Besov space, and low-dimensional manifold, respectively. 
More broadly,  progress has been made within the theoretical community towards addressing multiple aspects arising in score estimation (see, e.g., \cite{oko2023diffusion,chen2024learning,wibisono2024optimal,dou2024optimal,zhang2024minimax,mei2023deep,feng2024optimal}).
From a more optimization perspective,
\cite{han2024neural}
studies the optimization error of using two-layer neural networks for score estimation.

\subsection{Notation}
\label{sec:notation}

For any positive integer $n$, we denote $[n]\coloneqq \{1,\dots,n\}$. 
For two sequences of non-negative real numbers $\{a_n\}_{n \in \NN_+}$ and $\{b_n\}_{n \in \NN_+}$, we employ the notation $a_n \lesssim b_n$  (resp.~$a_n \gtrsim b_n$) to indicate the existence of a universal constant $C$, such that $a_n \leq C b_n$  (resp.~$a_n \geq C b_n$) holds for all sufficiently large $n$.  
The notation $a_n=O(b_n)$ means $a_n\lesssim b_n$, and
$\widetilde O(\cdot)$ hides a factor that is polynomial in $(\log d, \log \eps^{-1}, \log \delta^{-1})$. 
For any tensor $T \in \RR^{d_1 \times d_2 \times d_3}$ and  matrix $M \in \RR^{d_2 \times d_3}$, we define $T[M]$ to be a vector in $\RR^{d_1}$, such that the $i$-th entry of this vector is given by
$$
	\big(T[M]\big)_i = \sum_{j \in [d_2], k \in [d_3]}T_{ijk}M_{jk} \eqqcolon \big\langle T(i,\cdot,\cdot),M \big\rangle.
$$
For any vector $v \in \RR^{d_3}$, we define $T v$ to be a matrix in $\RR^{d_1 \times d_2}$, such that the $(i, j)$-th entry of this matrix is 
$$
	\big(Tv\big)_{i,j} = \sum_{k \in [d_3]} T_{ijk} v_k \eqqcolon \big\langle T(i,j,\cdot), v \big\rangle.
$$
Similarly, for any fourth order tensor $T \in \RR^{d_1 \times d_2 \times d_3 \times d_4}$ and third order tensor $A \in \RR^{d_2 \times d_3 \times d_4}$, we define $T[A]$ to be a vector in $\RR^{d_1}$, with the $i$-th entry given by 
\begin{align*}
    \big(T[A] \big)_i = \sum_{j \in [d_2], k \in [d_3], \ell \in [d_4]} T_{ijk\ell} A_{jk\ell} =: \big\langle T(i,\cdot,\cdot,\cdot), A \big\rangle. 
\end{align*}
For any two random objects $X$ and $Y$, we say $X \indep Y$ if and only if they are statistically independent of each other.
For two distributions $\mu$ and $\nu$, we employ $\mu \otimes \nu$ to represent the product distribution of $\mu$ and $\nu$. 
For any random object $X$, we use $\cL(X)$ to denote its law (i.e., distribution). 
Moreover, for any vector-valued function $s(t,x): \RR \times \RR^d \rightarrow \RR^d$ whose two arguments are in $\RR$ and $\RR^d$, respectively,  
we denote by $\nabla_x s(t,x) \in \RR^{d\times d}$ (resp.~$\nabla_x^2 s(t,x) \in \RR^{d\times d\times d}$) the Jacobian matrix (resp.~Hessian) w.r.t.~the second argument. 
For any two distributions, 
we denote by $\KL(p \parallel q)$ the Kullback-Leibler(KL) divergence from $q$ to $p$, and use $\TV(p, q)$ to represent the total-variation (TV) distance between $p$ and $q$. 
For any positive integer $n$, we also use $\mathrm{perm}(n)$ to denote the set of permutations of $\{1,\dots,n\}$.


\section{Algorithm: a stochastic Runge-Kutta method}

In this section, 
we present the rationale underlying the design of stochastic Runge-Kutta methods, 
following some preliminaries about diffusion models from an SDE perspective.


\subsection{Background: diffusion models through the lens of SDEs}
\label{sec:diffusion-model}

As mentioned previously, the diffusion generative modeling comprises a forward process and a reverse process. 
A widely adopted choice of the forward process can be described via the Ornstein–Uhlenbeck (OU) process 
\begin{align}
\label{eq:forward}
	\dd X_t = -X_t \dd t + \sqrt{2}\, \dd B_t^{\mathsf{f}}, \qquad X_0 \sim q_0, \qquad 0 \leq t \leq T ,
\end{align}
with $q_0$ the target data distribution, and $(B_t^{\mathsf{f}})_{0 \leq t \leq T}$ a standard Brownian motion in $\RR^d$ independent from $X_0$.
As is well-known, the solution to \eqref{eq:forward} enjoys the following marginal distribution
\begin{align}
\label{eq:forward-dist-lambda-sigma}
	X_t \overset{\mathrm{d}}{=} \lambda_t X_0 + \sigma_t Z \qquad \text{with } \lambda_t \coloneqq e^{-t} ~\text{and}~\sigma_t \coloneqq \sqrt{1 - e^{-2t}}
\end{align}
for any $0 \leq t \leq T$, where $X_0 \sim q_0$ and $Z\sim \mathcal{N}(0,I_d)$ are independently generated. 
Throughout this paper, we shall denote by $q_t$ the distribution of $X_t$.


How to reverse the above forward process \eqref{eq:forward} 
can be illuminated via a classical result in the SDE literature.  
To be precise, consider the following SDE
\begin{align}
\label{eq:reverse-OU}
    \dd Y_t = \big[Y_t + 2s(t, Y_t)\big] \dd t + \sqrt{2}\, \dd B_t, \qquad   \qquad 0 \leq t \leq T,  
\end{align}
where $(B_t)_{0 \leq t \leq T}$ is also a standard Brownian motion in $\RR^d$ independent of $Y_0$, 
and 
\begin{align}
	s(t, x) \coloneqq \nabla_x \log q_{T - t}(x) 
\end{align}
stands for the (Stein) score function.  Classical results \citep{anderson1982reverse,haussmann1986time} tell us that the distribution match between the above two stochastic processes in the sense that $$Y_t \overset{\mathrm{d}}{=} X_{T - t}, \qquad  0 \leq t \leq T$$ 
as long as $Y_0 \sim q_T$, thus unveiling that \eqref{eq:reverse-OU} forms the reverse process of \eqref{eq:forward}. 
As an important implication, if we have exact access to the score functions $\{s(t,\cdot)\}_{0\leq t\leq T}$ as well as $q_T$,  
then running the SDE \eqref{eq:reverse-OU} from $Y_0$ suffices to yield a point $Y_T$ that exhibits the target distribution $q_0$. 

Nevertheless, there are multiple implementation issues that prevent one from running the reverse process \eqref{eq:reverse-OU} in an exact manner. To begin with, it is unrealistic to assume exact access to the score functions; instead, one only has, for the most part, imperfect score estimates at hand. Secondly, due to the computational cost of evaluating each score function (which might involve, say, computing the output of a large neural network or transformer), it is preferable to solve the SDE \eqref{eq:reverse-OU} approximately with only a small number of score function evaluations; 
as a consequence, time-discretization of the SDE \eqref{eq:reverse-OU} is oftentimes necessary, in spite of the discretization error it inevitably incurs.  Thirdly, the SDE \eqref{eq:reverse-OU} is typically not initialized to $Y_0\sim q_T$, but instead, some generic data-independent distribution like $\mathcal{N}(0,I_d)$ (given that $q_T$ can be fairly close to $\mathcal{N}(0,I_d)$ for large enough $T$). 
These issues constitute three sources that result in the discrepancy between $q_0$ and the distribution of $Y_T$, 
with the first two sources (i.e., the score estimation error and the discretization error) having the most significant effects.

\subsection{A stochastic Runge-Kutta method}
\label{sec:runge-kutta}

Runge-Kutta methods are a widely used family of iterative algorithms for approximating solutions to differential equations \citep{runge1895numerische,kutta1901beitrag},  
which enable the construction of high-order numerical solvers without requiring higher-order derivatives of the functions involved.
Stochastic Runge-Kutta methods refer to a family of specialized adaptation of the general Runge-Kutta methods, designed specifically for solving SDEs \citep{burrage1996high}. 
Motivated by the practical effectiveness of Runge-Kutta-type algorithms, we propose a high-order stochastic Runge-Kutta method---in conjunction with the use of the exponential integrator---for solving the reverse process described in \cref{eq:reverse-OU}. 
Here and throughout, we select $K$ discretization time points $0 = t_0 < t_1 < \cdots < t_K < T$, and define 
\begin{align}
	\Delta_k \coloneqq t_{k + 1} - t_k \qquad  \text{for }k \in \{0,1,\dots,K-1\}.
\end{align}
It is assumed that for each $t_k$ ($0\leq k\leq K$), 
we only have access to the estimate $\widehat{s}(t_k,\cdot)$ of the true score function $s(t_k,\cdot)=\nabla_x \log q_{T-t_k}(\cdot)$. 

\paragraph{Preliminaries: exponential integrator, and SDE for scores.} 
The use of the exponential integrator arises as a common algorithmic trick in SDE to cope with linear drift components. 
Reformulating the SDE in \cref{eq:reverse-OU}, we obtain an equivalent form
\begin{align}
\label{eq:SDE-exp}
    \dd \big[e^{-t} Y_t \big] = 2e^{-t} s(t, Y_t) \dd t + \sqrt{2}\, e^{-t} \dd B_t, \qquad 0 \leq t \leq T,    
\end{align}
whose drift term does not contain a linear component as in \cref{eq:reverse-OU}. 
As a result, for any sequence of discretization time points $0 = t_0 < t_1 < \cdots < t_K < T$, 
we can take the integral  to derive, for any $t \in [t_k, t_{k + 1}]$, 
\begin{align}
\label{eq:Xt-sde-integral}
    Y_{t} = e^{t - t_k} Y_{t_k} + 2\int_0^{t - t_k}  e^{t - t_k - r} s\big(t_k + r, Y_{t_k + r}\big) \dd r + \sqrt{2} \int_0^{t - t_k} e^{t - t_k - r} \dd B_{t_k + r}. 
\end{align}

In addition, the evolution of $s(t, Y_t)$ can be characterized by means of another SDE. 
More specifically, applying the \ito formula \citep{oksendal2003stochastic} to $s(t, Y_t)$ yields 
\begin{align}
\label{eq:ito-S}
    \dd s(t, Y_t) = & \partial_t s(t, Y_t) \dd t+ \nabla_x s(t, Y_t) \big(Y_t + 2s(t, Y_t)\big) \dd t + \sqrt{2}\, \nabla_x s(t, Y_t) \dd B_t + \nabla_x^2 s(t, Y_t) [ I_d ] \dd t,
\end{align}
where we recall the notation of $\nabla^2_x s(t, Y_t) [ I_d ]$ in Section~\ref{sec:notation}.

\paragraph{Prelude: DDPM as a first-order Runge-Kutta solver.} 
%
We now turn to designing SDE solvers through the idea of the Runge-Kutta method. 
%
%
Let us begin with first-order score approximation and use it to describe the first-order Runge-Kutta solver. 
As a natural starting point, one can approximate $s(t_k + r, Y_{t_k + r})$ in \cref{eq:Xt-sde-integral} by $s(t_k, Y_{t_k})$ for every $r \in [0, \Delta_{k}]$. 
With this strategy in place, we arrive at the approximation
\begin{align*}
    Y_{t_{k + 1}} \approx e^{\Delta_k} Y_{t_k} + 2 (e^{\Delta_k} - 1) s\big(t_k, Y_{t_k}\big) + \sqrt{2} \int_0^{\Delta_k} e^{\Delta_k - r} \dd B_{t_k + r} 
\end{align*}
at the endpoint $t = t_{k + 1}$. 
This motivates the following first-order SDE solver that computes 
\begin{align}
	\wh Y_{t_{k + 1}} = e^{\Delta_k}\wh Y_{t_k} + 2 \big(e^{\Delta_k} - 1\big) \widehat{s} \big(t_k, \wh Y_{t_k}\big) + \sqrt{2} \int_0^{\Delta_k} e^{\Delta_k - r} \dd B_{t_k + r}
\end{align}
iteratively for $k=0,\dots,K-1$,  
which coincides with the exponential integrator solver tailored to the  DDPM sampler \citep{zhangfast}.


\paragraph{Our algorithm: a higher-order Runge-Kutta solver.} In order to further speed up the DDPM-type sampler, we seek to exploit higher-order approximation. Rearrange \cref{eq:Xt-sde-integral} as follows:
\begin{align*}
	Y_{t_{k + 1}} &=  e^{\Delta_k} Y_{t_k} + 2 (e^{\Delta_k} - 1) s\big(t_k, Y_{t_k}\big) + \sqrt{2} \int_0^{\Delta_k} e^{\Delta_k - r} \dd B_{t_k + r} \\
    & \qquad + 2 \int_0^{\Delta_k} e^{\Delta_k - r} \Big(s\big(t_k + r, Y_{t_k + r}\big) - s\big(t_k, Y_{t_k}\big)\Big) \dd r.
\end{align*}
The idea is to approximate the score difference $s(t_k + r, Y_{t_k + r}) - s(t_k, Y_{t_k})$ in the last term of the above display (as opposed to approximating the score $s(t_k + r, Y_{t_k + r})$ as in the first-order solver). 
Let us first present the update rule of the proposed Runge-Kutta solver as follows, whose rationale will be elucidated momentarily: 
%
%
\begin{subequations}
\label{eq:hat-X-tk-full}
\begin{align}
    \wh  Y_{t_{k + 1}} 
	&  =  e^{\Delta_k} \wh  Y_{t_k} + \big(e^{\Delta_k} - e^{-\Delta_k}\big) \wh s\big(t_k, \wh Y_{t_k} + g_{t_k, t_{k + 1}}\big) + \sqrt{2} \int_0^{\Delta_k} e^{\Delta_k - r} \dd W_{t_k + r},
	\label{eq:hat-X-tk}
\end{align}
where $(W_t)_{0\leq t\leq T}$ is a standard Brownian motion in $\mathbb{R}^d$ to be used for the Runge-Kutta solver in discrete time 
(in order to differentiate it from the process $(B_t)_{0\leq t\leq T}$ used for the reverse process in \cref{eq:reverse-OU}), and $g_{t_k, t_{k + 1}}$ is a Gaussian vector defined as
\begin{align}
\label{eq:double-index-g}
    g_{t_k, t_{k + 1}} \coloneqq \frac{2\sqrt{2}}{e^{\Delta_k} - e^{-\Delta_k}}\int_0^{\Delta_k} e^{\Delta_k - r}\big(W_{t_k + r} - W_{t_k}\big) \dd r.
\end{align}
\end{subequations}
%
We highlight several key properties of this algorithm.
\begin{itemize}
    \item Firstly, each iteration of \eqref{eq:hat-X-tk} only requires a single score function evaluation. This feature is in stark contrast with acceleration algorithms that demand higher-order computation. 
    
    \item If we set $\alpha_k = e^{-2\Delta_k}$, then the update rule \eqref{eq:hat-X-tk} reduces to 
    \begin{align}
    \label{eq:alpha-form}
        \wh Y_{t_{k + 1}} = \frac{1}{\sqrt{\alpha_k}} \Big( \wh Y_{t_{k}} + (1 - \alpha_k) \hat s(t_k, \wh Y_{t_k} + \zeta_{k, 1} g_{k, 1}) \Big) + \zeta_{k, 2} g_{k, 1} + \zeta_{k, 3} g_{k, 3}, 
    \end{align}
    where $g_{k, 1}, g_{k, 2}, g_{k, 3} \stackrel{\text{i.i.d.}}{\sim}
     \cN(0, I_d)$ are independent of $\hat Y_{t_k}$, and $\zeta_{k, 1}, \zeta_{k, 2}, \zeta_{k, 3} \in \RR$ are certain functions of $\Delta_k$ defined as follows: 
\begin{align}
\label{eq:zetas-main-txt}
\begin{split}
    &\qquad \zeta_{k, 1} = \frac{2\sqrt{2} f_1(\Delta_k)^{1/2}}{e^{\Delta_k} - e^{-\Delta_k}}, \qquad \zeta_{k, 2} = \frac{\sqrt{2} f_3(\Delta_k)}{f_1(\Delta_k)^{1/2}}, \qquad \zeta_{k, 3} = \sqrt{2f_2(\Delta_k) - \frac{2f_3(\Delta_k)^2}{f_1(\Delta_k)}}, \\
    & \text{with }~ f_1(\Delta) = e^{2\Delta} / 2 - 2 e^{\Delta} + \Delta + 3 / 2,  \quad f_2(\Delta) = e^{2\Delta} / 2 - 1 / 2, \quad f_3(\Delta) = e^{2\Delta} / 2 - e^{\Delta} + 1 / 2. 
\end{split}
\end{align}
 %
     In addition, as $\Delta_k \to 0^+$, we have 
    \begin{align}
        \zeta_{k, 1} \Delta_k^{-1/2} \to \sqrt{2/3}, \qquad \zeta_{k, 2} \Delta_k^{-1/2} \to \sqrt{3 / 2}, \qquad \zeta_{k, 3} \Delta_k^{-1/2} \to \sqrt{1/2}. 
    \end{align}
    We observe that our acceleration method shares similarities with that proposed by \cite{li2024accelerating}. They also adopt an algorithm of the form \eqref{eq:alpha-form}, but with different choices of $\zeta_{k, 1}, \zeta_{k, 2}, \zeta_{k, 3}$: in their formulation,  $\zeta_{k, 1} \Delta_k^{-1/2} \to 1$, $\zeta_{k, 2} \Delta_k^{-1/2} \to 1$, and $\zeta_{k, 3} \Delta_k^{-1/2} \to 1$ as $\Delta_k \to 0^+$. 
    In terms of technical motivation, their approach is based on high-order expansion of the probability density function, while our algorithm is motivated by the Runge-Kutta method for SDEs.

\end{itemize}
The whole procedure is summarized in Algorithm~\ref{alg:stochastic-RK}, described using the implementation-friendly form \eqref{eq:alpha-form}.

\paragraph{Rationale behind the construction of our Runge-Kutta solver \eqref{eq:hat-X-tk-full}.}
To understand the rationale underlying the above construction, we first note that in the SDE in \cref{eq:ito-S}, the term that dominates is 
$\sqrt{2} \nabla_x s(t, Y_t) \dd B_t$, 
and hence it is tempting to approximate $s(t_k + r, Y_{t_k + r}) - s(t_k, Y_{t_k})$ by $\sqrt{2}\nabla_x s(t_k, Y_{t_k}) (B_{t_k + r} - B_{t_k})$ to reach
\begin{align}
\label{eq:tilde-process-00}
\begin{split}	
	Y_{t_{k + 1}} &\approx  e^{\Delta_k} Y_{t_k} + 2 \big(e^{\Delta_k} - 1\big) s\big(t_k, Y_{t_k}\big) + \sqrt{2} \int_0^{\Delta_k} e^{\Delta_k - r} \dd B_{t_k + r}  \\
    &  \qquad + 2 \sqrt{2} \int_0^{\Delta_k} e^{\Delta_k - r}  \nabla_x s\big(t_k, Y_{t_k}\big) \big(B_{t_k + r} - B_{t_k}\big) \dd r . 
\end{split}
\end{align}
%
%
Note, however, that an approach designed directly based on \eqref{eq:tilde-process-00} could be computationally expensive in practice,  given that it requires evaluating the Jacobian of the score function --- a $(d\times d)$-dimensional object that is in general either inaccessible or too costly to estimate.

To remedy this issue, we propose an alternative solution. 
Observe from the Taylor expansion that
\begin{align}
	\notag
     \big(e^{\Delta_k } - e^{-\Delta_k}\big) \Big( s\big(t_k, Y_{t_k} + g_{t_k, t_{k + 1}}\big) - s\big(t_k, Y_{t_k}\big) \Big)  
	&\approx \big(e^{\Delta_k } - e^{-\Delta_k}\big) \nabla_x s(t_k, Y_{t_k})  g_{t_k, t_{k + 1}} \notag\\
	&= 2\sqrt{2}\,\nabla_x s(t_k, Y_{t_k}) \int_0^{\Delta_k} e^{\Delta_k - r}  \big(B_{t_k + r} - B_{t_k}\big) \dd r ,
	\label{eq:approx-score-diff-grad}
\end{align}
where we take 
   $ g_{t_k, t_{k + 1}} = \frac{2\sqrt{2}}{e^{\Delta_k} - e^{-\Delta_k}}\int_0^{\Delta_k} e^{\Delta_k - r}\big(B_{t_k + r} - B_{t_k}\big) \dd r$. 
This suggests that the last term in \cref{eq:tilde-process-00} can be well approximated by the difference of two score functions, 
without the need of computing the gradient of the score functions. 
Substituting \cref{eq:approx-score-diff-grad} into \cref{eq:tilde-process-00} gives
\begin{align*}
\begin{split}	
	Y_{t_{k + 1}} &\approx  e^{\Delta_k} Y_{t_k} + 2 \big(e^{\Delta_k} - 1\big) s\big(t_k, Y_{t_k}\big) + \sqrt{2} \int_0^{\Delta_k} e^{\Delta_k - r} \dd B_{t_k + r}  + \big(e^{\Delta_k } - e^{-\Delta_k}\big) \Big( s\big(t_k, Y_{t_k} + g_{t_k, t_{k + 1}}\big) - s\big(t_k, Y_{t_k}\big) \Big)
	\\ 
	& =
	e^{\Delta_k} Y_{t_k} +  \big(e^{\Delta_k } - e^{-\Delta_k}\big)  s\big(t_k, Y_{t_k} + g_{t_k, t_{k + 1}}\big)
	+ \sqrt{2} \int_0^{\Delta_k} e^{\Delta_k - r} \dd B_{t_k + r} 
	+ \big\{2(e^{\Delta_{k}}-1)-  e^{\Delta_{k}} + e^{-\Delta_{k}} \big\} s\big(t_k, Y_{t_k}\big)
	\\ 
	& \approx
	e^{\Delta_k} Y_{t_k} +  \big(e^{\Delta_k } - e^{-\Delta_k}\big)  s\big(t_k, Y_{t_k} + g_{t_k, t_{k + 1}}\big)
	+ \sqrt{2} \int_0^{\Delta_k} e^{\Delta_k - r} \dd B_{t_k + r}  ,
\end{split}
\end{align*}
where the last line drops a higher-order term in view of the following approximation
\begin{align}
\label{eq:coefficients-close}
    2(e^{\Delta_{k}}-1)-  e^{\Delta_{k}} + e^{-\Delta_{k}} 
	=e^{\Delta_{k}}+e^{-\Delta_{k}}-2=O\big(\Delta_{k}^{2}\big) .
\end{align}
Consequently, we arrive at the proposed approximation scheme as described in \cref{eq:hat-X-tk}. 
Here, we use the coefficient $(e^{\Delta_k} - e^{-\Delta_k})$ instead of $2(e^{\Delta_k} - 1)$ (as suggested by the exponential integrator) for two main reasons: (1) when $\Delta_k$ is sufficiently small, the two coefficients are approximately equivalent, as shown in \cref{eq:coefficients-close}, and (2) the first coefficient is more commonly used in mainstream diffusion models (e.g., the DDPM sampler \citep{ho2020denoising}). 



\begin{algorithm}[t]
	\DontPrintSemicolon
		\textbf{inputs:} score estimates $\{\widehat{s}(t,\cdot)\}$, 
		time interval $[0,T]$, discretization time points $0 = t_0  < \cdots < t_K < T$. 
		  \\
		generate $Y_0 \sim \mathcal{N}(0,I_d)$. \\
	\For{$k=0,1,\dots,K-1$}{
		compute
		\begin{align}
 \wh Y_{t_{k + 1}} = \frac{1}{\sqrt{\alpha_k}} \Big( \wh Y_{t_{k}} + (1 - \alpha_k) \hat s\big(t_k, \wh Y_{t_k} + \zeta_{k, 1} g_{k, 1}\big) \Big) + \zeta_{k, 2} g_{k, 1} + \zeta_{k, 3} g_{k, 3}, 
		\end{align}
		where $\alpha_k = e^{-2\Delta_k}$ with $\Delta_k = t_{k + 1} - t_k$, $g_{k, 1}, g_{k, 2}, g_{k, 3} \sim_{i.i.d.} \cN(0, I_d)$ are independent of $\hat Y_{t_k}$, and $\zeta_{k, 1}, \zeta_{k, 2}, \zeta_{k, 3} \in \RR$ are functions of $\Delta_k$, as defined in \cref{eq:zetas-main-txt}.
	} 
    \caption{A stochastic Runge-Kutta method for diffusion models.}
 \label{alg:stochastic-RK}
\end{algorithm}

\section{Theoretical guarantees}
\label{sec:theory}

In this section, we present a convergence theory for the proposed stochastic Runge-Kutta solver in Algorithm~\ref{alg:stochastic-RK}. 
Let us begin by stating a couple of key assumptions, 
with the first one concerning the boundedness of the target data distribution. 
\begin{assumption}[bounded support]
\label{assumption:moments}
    Assume the target distribution $q_0$ obeys
    \begin{align*}
        \PP_{Y \sim q_0}\big(\|Y\|_2 \leq R\big) = 1
    \end{align*}
	for some quantity $R>0$. 
    Without loss of generality, we assume throughout that $R = \sqrt{d}$. 
	Note that for any distribution with bounded support,  we can always achieve this by properly rescaling the data. 
\end{assumption}

The next assumption below imposes a few conditions on the choice of the discretization time points $0=t_0<\dots<t_K<T$, where we recall that $\Delta_k=t_{k+1}-t_k$.  A concrete choice of valid discretization time points shall be provided momentarily in  Corollary~\ref{cor:example}. 
%
\begin{assumption}[discretization time points]
\label{assumption:step-size}
Suppose that there exists $\kappa \in (0, 1 / 4)$, such that 
\begin{align}
\Delta_k \leq \kappa \min \{1, (T - t_{k + 1})^2\},
\quad k = 0, 1,  \cdots, K - 1
\qquad \text{and}\qquad 
d^2 \kappa \lesssim 1.
\end{align}
%
It is further assumed that 
\begin{align}
1.3\Delta_k +   {\big(53\Delta_k + 10 \Delta_k^2\big) \big(\sigma_{T - t_k}^{-2} + \lambda_{T - t_k}^2 \sigma_{T - t_k}^{-4}\big) {d}} \leq 1 / 2, 
\qquad 
k = 0, 1,  \cdots, K - 1,
\label{eq:Delta-k-size-small}
\end{align}
where we recall the definition of $\lambda_t$ and $\sigma_t$ in \eqref{eq:forward-dist-lambda-sigma}. In addition, we assume that 
 $\delta = T - t_K > 0$.  
\end{assumption}
\begin{remark}
One can often interpret $\kappa$ as a proxy for the step size. 
 The upper bound $1 / 4$ in Assumption~\ref{assumption:step-size} is not crucial and can be replaced by any positive numerical constant. 
 \end{remark}
\begin{remark}
The assumption of $\delta$ being positive implies early stopping when tracking the reverse process. 
This step is essential, as for non-smooth target distributions, the score function $s(t, \cdot)$ can diverge as $t \to T$. 
In effect, our algorithm approximates a slightly noised distribution $q_{\delta}$ rather than the exact target distribution $q_0$, which is acceptable for a sufficiently small $\delta$.
Moreover, $q_{\delta}$ and $q_0$ are close in Wasserstein distance. 
This early stopping technique is commonly used in both practical applications and theoretical analysis of diffusion models \citep{song2020score,benton2024nearly}.
\end{remark}


Furthermore, we are still in need of assumptions that capture the accuracy of the estimated score functions, as stated below. 
\begin{assumption}[score estimation error]
\label{assumption:score}
	Suppose that the score estimates $\{\widehat s(t,\cdot)\}$ satisfy the following properties: 
    \begin{enumerate}
        \item For every $k = 0, 1, \cdots, K - 1$, it holds that
        \begin{align*}
            \sup_{a_k \in \sI_{k}, b_k \in \sI_{k}'}\EE\left[ \big\|
            \widehat s(t_k, a_k Y_{t_{k + 1}} + b_k g)
            -
            s(t_k, a_k Y_{t_{k + 1}} + b_k g)  \big\|_2^2 \right] \leq \eps_{\mathsf{score},k}^2, 
        \end{align*}
        where $\sI_{k} = [1 - 3.1 \sqrt{\Delta_k \kappa}, 1 + 3.1 \sqrt{\Delta_k \kappa}]$, $\sI_{k}' = [0, 3.5 \Delta_k^{1/2}]$, and $Y_{t_{k + 1}} \sim q_{T - t_{k + 1}}$ and $g  \sim \cN(0, I_d)$ are independently generated. 
        Recall that $\kappa$ is the stepsize proxy, as defined in Assumption \ref{assumption:step-size}.

		    
        \item For all $k = 0, 1, \cdots, K - 1$,  it holds that 
        \begin{align*}
            \sup_{y \in \RR^d} 
            \frac{\sigma_{T - t_k}^2}{\lambda_{T - t_k}} \big\|  \widehat s(t_k, y) - s(t_k, y) \big\|_2 \leq 2\sqrt{d}. 
        \end{align*}
    \end{enumerate} 
\end{assumption}

\begin{remark}
\label{remark:m}
    To interpret the second point of Assumption \ref{assumption:score}, observe that $\lambda_{T - t}^{-1}(\sigma_{T - t}^2 s(t, y) + y) = m(t, y)$, where $m(t, y) = \EE[\theta \mid \lambda_{T - t} \theta + \sigma_{T - t} g = y]$ with $(\theta, g) \sim q_0 \otimes \cN(0, I_d)$. 
    Under Assumption \ref{assumption:moments}, it holds that $\|m(t, y)\|_2 \leq \sqrt{d}$ for all $t \in [0, T]$ and $y \in \RR^d$. 
    Similarly, we define $\widehat m(t, y) = \lambda_{T - t}^{-1}(\sigma_{T - t}^2 \widehat s(t, y) + y)$. 
    Then Assumption \ref{assumption:score} is equivalent to assuming 
    \begin{align}
    \sup_{y \in \RR^d} \big\|\widehat m(t, y) - m(t, y) \big\|_2 \leq 2\sqrt{d},\qquad \text{for all }
    t \in \{t_0, t_1, \cdots, t_{K - 1}\}.
    \end{align}
    %
    which is valid as long as $\|\widehat m(t, y)\|_2 \leq \sqrt{d}$. 
    For a general $\widehat s$, the second point of Assumption \ref{assumption:score} is always satisfied by projecting $\widehat m(t, y) = \lambda_{T - t}^{-1}(\sigma_{T - t}^2 \widehat s(t, y) + y)$ onto the $d$-dimensional ball $\{x \in \RR^d: \|x\|_2 \leq \sqrt{d}\}$. 
    Furthermore, this projection reduces the score estimation error. 
    Specifically, if we denote the projection operator by $\mathsf{P}$, and let $\widehat s^{\mathsf{P}}(t, y) := \sigma_{T - t}^{-2} (\lambda_{T - t}\mathsf{P} (\widehat m(t, y)) - y )$, then it holds that $\|\widehat s^{\mathsf{P}}(t, y) - s(t, y)\|_2 \leq \|\widehat s(t, y) - s(t, y)\|_2$. 
\end{remark}


We are now positioned to present our convergence guarantees for the proposed Runge-Kutta method, as stated in the following theorem. 
Here and throughout, 
$p_{\mathsf{output}}$ stands for the distribution of the output $\widehat{Y}_{t_K}$ of Algorithm~\ref{alg:stochastic-RK}. 

\begin{theorem}
\label{thm:main}
    Under Assumptions \ref{assumption:moments}, \ref{assumption:step-size} and \ref{assumption:score}, Algorithm~\ref{alg:stochastic-RK} achieves
    \begin{align*}
        \KL (q_{\delta} \parallel p_{\mathsf{output}}) \lesssim  d^3  \kappa^2 T  + d^7  \kappa^3 (\delta^{-1} + T) + \kappa^{1/2} d\sum_{k = 0}^{K - 1}  {\eps_{\mathsf{score},k}^{1 / 2} \sigma_{T - t_k} }{\lambda_{T - t_k}^{-1 / 2}} + de^{-2T}.  
    \end{align*}
\end{theorem}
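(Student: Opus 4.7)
The plan is to follow the Girsanov-based framework standard for analyzing SDE samplers, adapted to the Runge–Kutta scheme in Algorithm~\ref{alg:stochastic-RK}. I would first construct a continuous-time interpolation $\bar Y$ of the discrete iterates, driven by the same Brownian motion $(W_t)$, such that $\bar Y_{t_k} = \widehat Y_{t_k}$ for every $k$. Writing $\bar Y$ as the solution of an SDE with a piecewise drift $\bar b(t,\cdot)$, Girsanov's theorem together with the data-processing inequality gives
\begin{equation*}
\KL(q_\delta \parallel p_{\mathsf{output}}) \;\lesssim\; \KL\big(q_T \,\|\, \cN(0,I_d)\big) + \sum_{k=0}^{K-1} \EE \int_0^{\Delta_k} \big\| s(t_k+r, Y_{t_k+r}) + \tfrac12 Y_{t_k+r} - \bar b(t_k+r, \bar Y_{t_k+r}) \big\|_2^{2} \, \dd r ,
\end{equation*}
where the first term is controlled by exponential ergodicity of the OU process, yielding the $d e^{-2T}$ contribution.

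The heart of the argument is bounding the local drift mismatch on each interval $[t_k, t_{k+1}]$. Starting from the exact representation \eqref{eq:Xt-sde-integral}, I would write
\begin{equation*}
\int_0^{\Delta_k} e^{\Delta_k-r} \big[ s(t_k+r, Y_{t_k+r}) - s(t_k, Y_{t_k}) \big] \dd r
\end{equation*}
and apply It\^o's formula \eqref{eq:ito-S}; the leading stochastic contribution is $\sqrt 2\, \nabla_x s(t_k, Y_{t_k})(B_{t_k+r} - B_{t_k})$, precisely what $g_{t_k,t_{k+1}}$ in \eqref{eq:double-index-g} captures after integration. The algorithm's use of the score difference $s(t_k, Y_{t_k} + g_{t_k,t_{k+1}}) - s(t_k, Y_{t_k})$ replaces the Jacobian-vector product $\nabla_x s(t_k, Y_{t_k}) g_{t_k,t_{k+1}}$ by a finite difference, so the residual is a second-order Taylor remainder involving $\nabla_x^2 s$; the remaining Itô terms in \eqref{eq:ito-S} — the time derivative, the deterministic drift, and the Hessian-trace term — are each of $L^2$-size $O(\Delta_k^{3/2})$, which is sub-leading once squared and integrated.

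To make these remainders quantitative in $d$, I would invoke the closed forms for $\nabla_x s$ and $\nabla_x^2 s$ in terms of the conditional covariances/third cumulants of the posterior of $X_0$ given $Y_t = y$, noting from Remark~\ref{remark:m} and Assumption~\ref{assumption:moments} that $\|m(t,y)\|_2 \leq \sqrt d$ uniformly. This yields operator-norm bounds such as $\|\nabla_x s(t,y)\|_{\mathrm{op}} \lesssim \sigma_t^{-2}$ and $\|\nabla_x^2 s(t,y)\|_{\mathrm{op}} \lesssim \sqrt d\, \sigma_t^{-3}$ on a high-probability event whose complement is controlled by the coefficient constraint \eqref{eq:Delta-k-size-small}. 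Substituting these into the local-error expansion, combined with Gaussian moments of $g_{t_k,t_{k+1}}$ and the Brownian integrals, produces per-step KL contributions of order $d^3 \Delta_k^3$ (regular regime) and an additional $d^7 \Delta_k^4(T-t_k)^{-2}$ near the terminal time, which sum to $d^3 \kappa^2 T + d^7 \kappa^3(\delta^{-1} + T)$.

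Finally, the score-estimation contribution is handled by a change-of-measure argument: the algorithm evaluates $\widehat s$ at $\widehat Y_{t_k} + \zeta_{k,1} g_{k,1}$, whose law can be compared to that of $a_k Y_{t_{k+1}} + b_k g$ for some $(a_k,b_k) \in \sI_k \times \sI_k'$ up to a likelihood ratio controlled by $\sigma_{T-t_k}^{2} / \lambda_{T-t_k}$; then Assumption~\ref{assumption:score} together with the uniform $L^\infty$ score bound from its second part, plus Cauchy–Schwarz, gives the $\kappa^{1/2} d \sum_k \eps_{\mathsf{score},k}^{1/2} \sigma_{T-t_k} \lambda_{T-t_k}^{-1/2}$ term. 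The principal technical obstacle, where most of the work will reside, is establishing the sharp pointwise/high-probability smoothness estimates on $\nabla_x s$ and $\nabla_x^2 s$ uniformly on the $\zeta_{k,1}$-neighborhood of the iterate, so that the It\^o–Taylor remainder admits bounds with the $\sqrt d$ (rather than $d$) Hessian factor; this is what ultimately drives the improvement from $d^3/\eps$ to $d^{3/2}/\eps$ in the final NFE complexity.
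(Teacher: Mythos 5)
Your overall architecture matches the paper's: a Girsanov-type bound for the time-discretization error with exact scores, a separate treatment of the score-estimation error, and the $d e^{-2T}$ initialization term from $\KL(q_T \parallel \cN(0,I_d))$. However, there is a genuine gap in your first step, and it is exactly where the paper spends most of its effort (Lemmas \ref{lemma:KL-barX-X} and \ref{lemma:upper-bound-H}). The drift of the interpolated Runge--Kutta process on $(t_k,t_{k+1})$ is not a function of the current state alone: through $g_{t_k,t}$ it depends on the entire driving Brownian path $(W_s - W_{t_k})_{t_k \leq s \leq t}$. Your displayed Girsanov bound, with the mismatch written as $\|s + \tfrac12 Y - \bar b(t,\bar Y_t)\|_2^2$, implicitly treats the drift as Markovian. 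To compare the two path measures one must evaluate $\overline{\cF}$ along the true reverse process $(Y_t)$ together with the law that the ``Brownian input'' to $\overline{\cF}$ has under that measure; this input is \emph{not} $(B_s - B_{t_k})$ but the implicitly defined process $(H_s^{\tau})$ of \eqref{eq:process-H}, whose existence, uniqueness, and quantitative closeness to $(B_s-B_{t_k})$ (Lemma \ref{lemma:upper-bound-H}, via a Gr\"onwall-type argument relying on condition \eqref{eq:Delta-k-size-small}) must be established before the It\^o--Taylor expansion you describe can be carried out. Without this, the discretization bound does not follow.

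A second issue concerns the score-error term. Because $\widehat s$ is evaluated at $\widehat Y_{t_k} + \zeta_{k,1}g_{k,1}$ with $g_{k,1}$ correlated with the injected noise $\zeta_{k,2}g_{k,1}+\zeta_{k,3}g_{k,3}$, the one-step transition density of the algorithm is a Gaussian \emph{mixture} (an integral over $g$), so neither a per-step Gaussian KL formula nor a bounded-likelihood-ratio change of measure applies directly. The paper instead bounds the log-ratio of the two mixture integrals, splitting on the event that the pointwise error $\|m-\widehat m\|_2$ exceeds a Chebyshev threshold and invoking the uniform bound in the second part of Assumption \ref{assumption:score} on the complement; this splitting is also the source of the $\eps_{\mathsf{score},k}^{1/2}$ (rather than $\eps_{\mathsf{score},k}$) dependence. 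Your identification of the evaluation point's law with $a_k Y_{t_{k+1}} + b_k g$ is the right ingredient (it is Lemma \ref{lemma:nuk-distribution}), but the conversion mechanism you propose is not the one that works. Finally, the pointwise bounds you posit, $\|\nabla_x s\|_{\mathrm{op}}\lesssim \sigma^{-2}$ and $\|\nabla_x^2 s\|_{\mathrm{op}}\lesssim \sqrt d\,\sigma^{-3}$, are not available under Assumption \ref{assumption:moments} even on a high-probability event; the paper works with the deterministic bounds $\|\nabla_x s\|_2\lesssim (\sigma^{-2}+\lambda^2\sigma^{-4})d$ and, crucially, with averaged moments such as $\EE[\|\nabla_x^2 s(t,Y_t)\|_{\mathrm{F}}^2]\lesssim d^3\sigma^{-6}$, and it is these averaged bounds (not a $\sqrt d$ Hessian factor) that yield the $d^3\kappa^2 T + d^7\kappa^3(\delta^{-1}+T)$ terms.
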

%



The proof of \cref{thm:main} is postponed to \cref{sec:analysis}.
In the next corollary, we derive the upper bound for KL divergence based on a specific stepsize selection. The proof of this corollary is provided in Appendix \ref{sec:proof-cor:example}.

\begin{corollary}
\label{cor:example}
    Consider any early stopping point $\delta \in (0, 1 / 2)$ and any desired accuracy level $\eps^2$. 
    \begin{enumerate}
        \item If $\eps \leq 1 / \sqrt{d}$, then there exist $T$ and $0 = t_0 < t_1 < \cdots < t_K = T - \delta$ such that 
Algorithm~\ref{alg:stochastic-RK} achieves 
    \begin{align*}
        \KL (q_{\delta} \parallel p_{\mathsf{output}}) \lesssim \eps^2 + \eps^3 d^{5/2} \delta^{-1} + \sum_{k = 0}^{K - 1} d^{1/2}\eps_{\mathsf{score},k}^{1/2} 
    \end{align*}
    with $K = \widetilde O \big(d^{3/2} (\eps \delta)^{-1}\big)$. 
    
    \item If $\sqrt{d} / 2 \geq \eps \geq 1 / \sqrt{d}$, then there exist $T$ and $0 = t_0 < t_1 < \cdots < t_K = T - \delta$ such that Algorithm~\ref{alg:stochastic-RK} yields
    \begin{align*}
        \KL (q_{\delta} \parallel p_{\mathsf{output}}) \lesssim \eps^2 + \eps^3 d^{5/2} \delta^{-1} + \sum_{k = 0}^{K - 1} d^{1/2}\eps_{\mathsf{score},k}^{1/2}
    \end{align*}
    with $K = \widetilde O(d^2 \delta^{-1})$. 
    \end{enumerate}
    When attaining the above upper bounds, we take $\Delta_{K - 1} = \kappa \delta^2$, and $\Delta_{k - 1} = \min \{\kappa, \Delta_k (1 + \sqrt{\kappa \Delta_k})^2\}$ for $k = K - 1, K - 2, \cdots, 1$. More details are given in Appendix \ref{sec:proof-cor:example}.
\end{corollary}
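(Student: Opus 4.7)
}

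The plan is to pick the global parameters $T$ and $\kappa$ so that all four terms in the bound from Theorem~\ref{thm:main} are controlled at the desired level, and then to count the number of discretization points $K$ generated by the recursive rule $\Delta_{K-1}=\kappa\delta^{2}$, $\Delta_{k-1}=\min\{\kappa,\Delta_{k}(1+\sqrt{\kappa\Delta_{k}})^{2}\}$.  Concretely, I would first choose $T\asymp \log(d/\eps)$ so that $de^{-2T}\lesssim \eps^{2}$, and then set
\[
\kappa \;=\; c\,\min\!\Bigl\{\tfrac{\eps}{d^{3/2}\sqrt{T}},\,\tfrac{1}{d^{2}}\Bigr\}
\]
for a small absolute constant $c>0$.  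With this choice, $d^{3}\kappa^{2}T\lesssim \eps^{2}$, $d^{7}\kappa^{3}(\delta^{-1}+T)\lesssim \eps^{3}d^{5/2}\delta^{-1}$, and the constraint $d^{2}\kappa\lesssim 1$ from Assumption~\ref{assumption:step-size} is respected.  Cases~1 and~2 of the corollary correspond precisely to which of the two terms is the binding one in the $\min$.

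The heart of the argument is counting $K$.  I would split the schedule into two phases.  Phase~(a): while $\Delta_{k}<\kappa$, the recursion can be recast cleanly.  Setting $z_{k}\coloneqq T-t_{k}$, it is easy to check by induction (starting from $z_{K}=\delta$ and $\Delta_{K-1}=\kappa\delta^{2}=\kappa z_{K}^{2}$) that $\Delta_{k}=\kappa z_{k+1}^{2}$ and $z_{k}=z_{k+1}(1+\kappa z_{k+1})$.  Taking reciprocals,
\[
\frac{1}{z_{k}}-\frac{1}{z_{k+1}} \;=\; -\,\frac{\kappa}{1+\kappa z_{k+1}} \;\asymp\; -\kappa
\]
as long as $\kappa z\lesssim 1$, so that after $n$ backward steps $1/z_{K-n}\approx 1/\delta - n\kappa$, and phase~(a) terminates (i.e.\ $z$ reaches $1$) after $K_{a}\asymp 1/(\kappa\delta)$ steps.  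Phase~(b): once $\Delta_{k}$ saturates at $\kappa$, the remaining steps are constant of size $\kappa$, so $K_{b}\asymp T/\kappa$.  Adding up and substituting the case-wise value of $\kappa$ gives $K=\widetilde O(d^{3/2}/(\eps\delta))$ in Case~1 and $K=\widetilde O(d^{2}/\delta)$ in Case~2.  In parallel one verifies Assumption~\ref{assumption:step-size}: the identity $\Delta_{k}=\kappa z_{k+1}^{2}$ gives $\Delta_{k}\leq \kappa\min\{1,z_{k+1}^{2}\}$ directly, and the stricter inequality~\eqref{eq:Delta-k-size-small} follows because $\sigma_{T-t_{k}}^{-2}+\lambda_{T-t_{k}}^{2}\sigma_{T-t_{k}}^{-4}\lesssim \sigma_{T-t_{k}}^{-4}\lesssim (T-t_{k})^{-2}$, which after multiplication by $\Delta_{k}\leq \kappa z_{k+1}^{2}$ gives something of order $\kappa d\lesssim 1/d$.

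Finally, I would plug into Theorem~\ref{thm:main} and bound the score term.  Using $\sigma_{T-t_{k}}^{2}/\lambda_{T-t_{k}}=e^{T-t_{k}}-e^{-(T-t_{k})}\leq e^{T-t_{k}}$ and our choice $e^{T}\asymp \sqrt{d}/\eps$, the coefficient in front of $\eps_{\mathsf{score},k}^{1/2}$ satisfies
\[
\kappa^{1/2}d\,\sigma_{T-t_{k}}\lambda_{T-t_{k}}^{-1/2} \;\leq\; \kappa^{1/2}d\,e^{(T-t_{k})/2} \;\leq\; \kappa^{1/2}d\,e^{T/2} \;\lesssim\; \kappa^{1/2}\,d^{5/4}\eps^{-1/2},
\]
and substituting $\kappa\lesssim \eps/d^{3/2}$ in Case~1, or $\kappa\lesssim 1/d^{2}$ together with $\eps\geq 1/\sqrt{d}$ in Case~2, in both situations yields a coefficient $\lesssim d^{1/2}$, matching the $\sum_{k}d^{1/2}\eps_{\mathsf{score},k}^{1/2}$ term in the corollary.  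Combining with the three deterministic terms handled above completes the proof.

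The main obstacle is the step counting in phase~(a): the recursion is nonlinear, and I need the precise identification $\Delta_{k}=\kappa(T-t_{k+1})^{2}$ to both count steps (via the $1/z$ telescoping) and to simultaneously verify that the resulting grid lies in the feasible region dictated by Assumption~\ref{assumption:step-size}.  Everything else is arithmetic with the theorem bound and case analysis on which constraint on $\kappa$ is active.
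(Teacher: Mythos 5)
Your proposal is correct and follows essentially the same route as the paper's proof: the same choices $T=\tfrac12\log(d/\eps^2)$ and $\kappa=\min\{\eps d^{-3/2}T^{-1/2},\,d^{-2}\}$, the same inductive verification that the schedule satisfies Assumption~\ref{assumption:step-size}, and the same term-by-term substitution into Theorem~\ref{thm:main}. The only (harmless) difference is in counting the pre-saturation steps: you exploit the exact identity $\Delta_k=\kappa(T-t_{k+1})^2$ and telescope $1/(T-t_k)$, whereas the paper lower-bounds $\Delta_k$ geometrically by $\kappa\delta^2(1+\kappa\delta)^{K-k-1}$; both yield $K=\widetilde O(1/(\kappa\delta))+O(T/\kappa)$ and hence the stated complexities.
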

%

When the score estimation errors are negligible (i.e., $\eps_{\mathsf{score},k} \approx 0$) and $\eps$ is sufficiently small, Corollary \ref{cor:example} implies that  \cref{alg:stochastic-RK} requires $$\widetilde{O}\big(d^{3/2}(\eps \delta)^{-1}\big)$$ steps---or equivalently,  $\widetilde{O}(d^{3/2}(\eps \delta)^{-1})$ score function evaluations---to 
yield a distribution that is within $\eps^2$-KL-divergence to an early-stopped target distribution.
In contrast, 
(1) the state-of-the-art convergence rate for unaccelerated diffusion model is $\widetilde{O}(d / \eps^2)$ \citep{benton2024nearly}, so that our theory exhibits improved $\varepsilon$-dependency. Also,  
the iteration complexity established for the SDE-based accelerated algorithm in \cite{li2024accelerating} is $\widetilde{O}(d^3 / \eps)$, and hence our result exhibits better $d$-dependency than the theory presented in \cite{li2024accelerating}. 
It is also noteworthy that the analyses in both \citet{benton2024nearly,li2024accelerating} offer a more favorable dependency on the score estimation error. 
We believe that our less favorable dependency on the score errors stems from our technical limitations as opposed to the algorithm drawback; improving this dependency calls for new techniques that we leave for future research.

\begin{remark}
It is worth noting that our theory is stated in terms of the KL divergence between the algorithm output and the target distribution. While one can simply invoke the Pinsker inequality (i.e., $\TV(q_{\delta} , p_{\mathsf{output}}) \leq \sqrt{\KL (q_{\delta} \parallel p_{\mathsf{output}})/2}$\,) to obtain an upper bound on the total-variation distance, this approach has been shown to be sub-optimal for stochastic samplers like DDPM. In fact, the concurrent work \cite{li2024d} has established the striking result that  $\TV(q_{\delta} , p_{\mathsf{output}})$ could be  order-of-magnitudes better than $\sqrt{\KL (q_{\delta} \parallel p_{\mathsf{output}})/2}$ for DDPM. How to obtain the desired control on the TV metric for our proposed sampler is left for future investigation. 
\end{remark}




\section{Analysis}
\label{sec:analysis}

In this section, we provide an overview of the proof strategies for \cref{thm:main}, with detailed proofs deferred to the appendix. 

\paragraph{Step 1: constructing an auxiliary process.} 
To facilitate analysis, we introduce the following auxiliary stochastic process, obtained by replacing the estimated score $\widehat{s}(t_k,\cdot)$ with the true score $s(t_k,\cdot)$ in the update rule \eqref{eq:hat-X-tk} and initializing it to the distribution $q_T$: 
\begin{align}
\label{eq:bar-X-tk}
\begin{split}
	\overline Y_{t_{0}} &\sim q_T,\\
    \overline Y_{t_{k + 1}} &=  e^{\Delta_k} \overline Y_{t_k} + \big(e^{\Delta_k} - e^{-\Delta_k}\big) s\big(t_k, \overline Y_{t_k} + g_{t_k, t_{k + 1}}\big) + \sqrt{2} \int_0^{\Delta_k} e^{\Delta_k - r} \dd W_{t_k + r}, 
~~ 0\leq k< K, 
\end{split}
\end{align}
where we recall that $g_{t_k, t_{k + 1}} \in \RR^d$ is a Gaussian random vector defined in \cref{eq:double-index-g}. 
Given that the process \eqref{eq:bar-X-tk} is defined only at the discretization time points $\{t_k\}$, 
we find it convenient to introduce a natural interpolation of \cref{eq:bar-X-tk} to cover all time instances: 
for any $t \in (t_k, t_{k + 1})$, take 
\begin{align}
\label{eq:bar-X-t-process}
\begin{split}
    \overline Y_{t} =  e^{t - t_k} \overline Y_{t_k} + \big(e^{t - t_k} - e^{-t + t_k}\big) s\big(t_k, \overline Y_{t_k} + g_{t_k, t}\big) + \sqrt{2} \int_0^{t - t_k} e^{t - t_k - r} \dd W_{t_k + r}. 
\end{split}
\end{align}
where
\begin{align}
\label{eq:double-index-g-general}
    g_{t_k, t} \coloneqq \frac{2\sqrt{2}}{e^{t-t_k} - e^{-t+t_k}}\int_0^{t-t_k} e^{t-t_k - r}\big(W_{t_k + r} - W_{t_k}\big) \dd r.
\end{align}

Before proceeding, let us compare the auxiliary process $(\overline Y_t)$ 
with the exact reverse process $(Y_t)$.  
Recall that the true reverse process can be rearranged as (cf.~\cref{eq:Xt-sde-integral}):
%
\begin{align}
\label{eq:X-t-process-ref}
\begin{split}
	Y_t &=  e^{t - t_k} Y_{t_k} + 2 (e^{t - t_k} - 1) s(t_k, Y_{t_k}) + \sqrt{2} \int_0^{t - t_k} e^{t - t_k - r} \dd B_{t_k + r} \\
    & \qquad + 2 \int_0^{t - t_k} e^{t - t_k - r} \big(s(t_k + r, Y_{t_k + r}) - s(t_k, Y_{t_k})\big) \dd r.
\end{split}
\end{align}
Taking the differential of processes \eqref{eq:bar-X-t-process} and \eqref{eq:X-t-process-ref} and rearranging terms reveal that: 
for any $t \in (t_k, t_{k + 1}]$, 
\begin{subequations}
\label{eq:full-Ft-Fbart}
\begin{align}
	\dd \overline Y_t &= \Big[ e^{t - t_k} \overline Y_{t_k} + (e^{t - t_k} + e^{-t + t_k}) s\big(t_k, \overline Y_{t_k} + g_{t_k, t}\big) + \sqrt{2} \int_0^{t - t_k} e^{t - t_k - r} \dd W_{t_k + r} \Big] \dd t + \sqrt{2}\, \dd W_t  \nonumber  \\
    & ~~+ 2\sqrt{2}\,\nabla_x s\big(t_k, \overline Y_{t_k} + g_{t_k, t}\big) \big(W_t - W_{t_k}\big) \dd t + 2\sqrt{2} \,\nabla_x s\big(t_k, \overline Y_{t_k} + g_{t_k, t}\big) \int_0^{t - t_k} e^{t - t_k - r} \big(W_{t_k + r} - W_{t_k}\big) \dd r \dd t \nonumber \\ 
    & ~~- \frac{2\sqrt{2}(e^{t - t_k} + e^{-t + t_k})}{e^{t - t_k} - e^{-t + t_k}} \nabla_x s\big(t_k, \overline Y_{t_k} + g_{t_k, t}\big) \int_0^{t - t_k} e^{t - t_k - r} \big(W_{t_k + r} - W_{t_k}\big) \dd r \dd t, \label{eq:dbarXt-long-equation}  \\
	\dd Y_t &=  e^{t - t_k} \Big[ Y_{t_k} + 2 s(t_k, Y_{t_k}) + \sqrt{2} \int_0^{t - t_k} e^{-r} \dd B_{t_k + r} \Big] \dd t + 2 \big(s(t, Y_t) - s(t_k, Y_{t_k})\big) \dd t + \sqrt{2}\, \dd B_t \nonumber \\
    & ~~+ 2 \int_0^{t - t_k} e^{t - t_k - r} \big(s(t_k + r, Y_{t_k + r}) - s(t_k, Y_{t_k})\big) \dd r \dd t. \label{eq:dXt-long-equation} 
\end{align}
\end{subequations}
For notational simplicity, we shall often abbreviate \cref{eq:full-Ft-Fbart} by 
\begin{subequations}
\label{eq:abbrev-Ft-Fbart}
\begin{align}
& \dd \overline Y_t = \overline \cF\Big(t, \overline Y_{t_k}, (W_s - W_{t_k})_{t_k \leq s \leq t}\Big) \dd t + \sqrt{2} \dd W_t \\
& \dd Y_t = \cF\Big(t, 
(Y_s)_{t_k \leq s \leq t}, 
(B_s - B_{t_k})_{t_k \leq s \leq t}\Big) \dd t + \sqrt{2} \dd B_t
\end{align}
\end{subequations}
in the sequel, 
where the definitions of the mappings $\cF$ and $\overline\cF$ are clear from the context. 
From the original definition of the reverse process in \cref{eq:reverse-OU}, 
it can be easily shown that (which we omit here for brevity)
\begin{align}
	\label{eq:equiv-form-Ft}
        \cF\Big(t, (Y_s)_{t_k \leq s \leq t}, (B_s - B_{t_k})_{t_k \leq s \leq t}\Big) = Y_t + 2 s(t, Y_t). 
\end{align}
As a result, we shall often adopt the following more concise notation
\begin{align}
\label{eq:equiv-form-Ft-simple}
        \cF(t, Y_t) = 
        \cF\Big(t, (Y_s)_{t_k \leq s \leq t}, (B_s - B_{t_k})_{t_k \leq s \leq t}\Big)
        =
        Y_t + 2 s(t, Y_t). 
\end{align}

\paragraph{Step 2: characterizing the effect of time-discretization errors.}
With the aforementioned properties in mind, this step seeks to upper bound  the KL divergence between the two processes $(Y_t)$ and $(\overline Y_t)$---both initialized by $Y_0 \overset{\mathrm{d}}{=} \overline Y_0 \sim q_T$---by means of Girsanov's Theorem. 
On a high level,  the primary purpose of this step is to characterize the impact of the time-discretization error and the approximation error due to the application of the Runge-Kutta method, 
given that both the reverse process $(Y_t)$
 and its time-discretized counterpart $(\overline{Y}_t)$ are constructed using exact score functions. 

To begin with, we upper bound the KL divergence between $(\overline Y_t)$ and $(Y_t)$ using the differences between the mappings $\mathcal{F}$ and $\overline{\mathcal{F}}$ introduced in Step 1, as stated 
below. 
\begin{lemma}
\label{lemma:KL-barX-X}
    Denote by $Q_{T - \delta}$ (resp.~$\overline Q_{T - \delta}$) the distribution of $(Y_{t})_{0 \leq t \leq T - \delta}$ (resp.~$(\overline Y_{t})_{0 \leq t \leq T - \delta}$), which we recall are respectively defined in Eqs.~\eqref{eq:X-t-process-ref} and \eqref{eq:bar-X-t-process} from initialization $Y_0 \overset{\mathrm{d}}{=} \overline Y_0 \sim q_T$. 
    Then under Assumption \ref{assumption:moments}, it holds that 
    \begin{align*}
     \KL (Q_{T - \delta} \parallel \overline Q_{T - \delta}) \leq \sum_{k = 0}^{K - 1} \liminf_{\tau \to 0^+} \int_{t_k + \tau}^{t_{k + 1}} \EE\Big[ \big\| \cF(t, Y_t) - \overline \cF\big(t, Y_{t_k}, (H_s^{\tau})_{t_k \leq s \leq t}\big) \big\|_2^2  \Big] \dd t.
\end{align*}
In the above display, $(Y_t)_{0 \leq t \leq T - \delta} \sim Q_{T - \delta}$, and $(H_s^{\tau})_{t_k \leq s \leq t_{k + 1}}$ is a stochastic process satisfying
\begin{align}
\label{eq:process-H}
\begin{split}
    \dd H_t^{\tau} &= \frac{1}{\sqrt{2}} \Big( \cF(t, Y_t) - \overline \cF\big(t, Y_{t_k}, (H_s^{\tau})_{t_k \leq s \leq t}\big) \Big) \dd t + \dd B_t, \qquad t_k + \tau \leq t \leq t_{k + 1} \\
    (H_{s}^{\tau})_{t_k \leq s \leq t_k + \tau} &= (B_s - B_{t_k})_{t_k \leq s \leq t_k + \tau}. 
\end{split}
\end{align}
%
%
\end{lemma}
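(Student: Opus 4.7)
The plan is to combine the chain rule for KL divergence at the grid points with a Girsanov change of measure on each subinterval $[t_k, t_{k+1}]$. Since $(Y_t)$ and $(\overline Y_t)$ share the initial law $q_T$, and since the conditional law of each process on $[t_k, t_{k+1}]$ depends on the past only through the state at $t_k$ and the independent Brownian increments after $t_k$ (a Markov property at the grid points), the chain rule for relative entropy reduces $\KL(Q_{T-\delta} \parallel \overline Q_{T-\delta})$ to a sum over $k$ of expected conditional KL divergences on $[t_k, t_{k+1}]$.

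On a single interval $[t_k, t_{k+1}]$ I would build a Girsanov coupling as follows. On the original probability space where $Y_t$ solves $\dd Y_t = \cF(t, Y_t)\,\dd t + \sqrt{2}\,\dd B_t$, I would seek an equivalent measure $\tilde P$ under which the same process $Y_t$ has the law of $\overline Y_t$ started from $Y_{t_k}$. Writing $\tilde B$ for the Brownian motion under $\tilde P$ and $H_t \coloneqq \tilde B_t - \tilde B_{t_k}$ for its increment, the requirement $\dd Y_t = \overline\cF(t, Y_{t_k}, (H_s)_{t_k \leq s \leq t})\,\dd t + \sqrt{2}\,\dd \tilde B_t$ forces $\dd H_t = \dd B_t + \tfrac{1}{\sqrt{2}}\bigl(\cF(t, Y_t) - \overline\cF(t, Y_{t_k}, (H_s)_{t_k \leq s \leq t})\bigr)\dd t$, which is precisely the SDE \eqref{eq:process-H}. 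The standard Girsanov/KL identity then controls the conditional KL by a time integral of $\EE\bigl[\|\cF(t, Y_t) - \overline\cF(\cdots)\|_2^2\bigr]$, with the absolute constant coming from the noise level $\sqrt{2}$ absorbed into the stated ``$\leq$''.

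The main technical obstacle is that $\overline\cF(t, Y_{t_k}, \cdot)$ is singular at $t = t_k$: the Gaussian $g_{t_k, t}$ in the definition of $\overline\cF$ contains the factor $(e^{t-t_k} - e^{-(t-t_k)})^{-1}$, which diverges as $t \downarrow t_k$, so Girsanov cannot be applied directly on all of $[t_k, t_{k+1}]$. I would circumvent this by applying Girsanov only on $[t_k + \tau, t_{k+1}]$ for $\tau > 0$ and freezing $H_s^{\tau} = B_s - B_{t_k}$ on the initial segment $[t_k, t_k + \tau]$, exactly as in \eqref{eq:process-H}. Assumption~\ref{assumption:moments} (bounded support) together with the explicit Gaussian form of $q_t$ yields enough moment control on the score to verify Novikov's condition for each fixed $\tau > 0$, legitimizing the change of measure on $[t_k + \tau, t_{k+1}]$ and producing a bound involving $\int_{t_k+\tau}^{t_{k+1}} \EE\bigl[\|\cF(t, Y_t) - \overline\cF(t, Y_{t_k}, (H_s^{\tau})_{t_k \leq s \leq t})\|_2^2\bigr] \dd t$.

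Finally I would take $\tau \downarrow 0$. Lower semi-continuity of KL divergence under weak convergence of path measures, together with Fatou's lemma applied on the right-hand side, allows the Girsanov bound on $[t_k + \tau, t_{k+1}]$ to survive the limit as a $\liminf$, producing exactly the per-interval bound in the lemma. Summing over $k$ via the chain rule yields the advertised inequality. The only genuinely delicate step is this $\tau \downarrow 0$ limit, which absorbs the singular behavior of $\overline\cF$ near each left endpoint into the $\liminf_{\tau \to 0^+}$ of the final bound.
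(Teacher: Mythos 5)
Your proposal is correct and follows essentially the same route as the paper: chain rule for KL over the grid intervals, a Girsanov change of measure on $[t_k+\tau, t_{k+1}]$ that forces the SDE \eqref{eq:process-H} for $H^\tau$, and then lower semicontinuity of KL plus Fatou's lemma to pass $\tau \downarrow 0$ and obtain the $\liminf$. The only detail the paper spells out that you gloss over is the well-posedness of \eqref{eq:process-H} (handled there via an augmented Lipschitz SDE) and the weak convergence of the $\tau$-truncated reference measures to $\overline Q_k$, both routine given the Lipschitz score under Assumption~\ref{assumption:moments}.
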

\noindent 
Note that the distribution of $(H_s^{\tau})_{t_k \leq s \leq t}$ depends on the value of $\tau \in (0, \Delta_k)$. 
%
%
The proof of \cref{lemma:KL-barX-X} is deferred to Appendix~\ref{sec:proof-lemma:KL-barX-X}. 
The existence and the uniqueness of process \eqref{eq:process-H} are also established therein.

In the next lemma, we show the proximity of the process $(H_s^{\tau})_{t_k \leq s \leq t}$ is and the Brownian motion increment process $(B_s - B_{t_k})_{t_k \leq s \leq t}$. The proof of this lemma is postponed to Appendix \ref{sec:proof-lemma:upper-bound-H}.
\begin{lemma}
\label{lemma:upper-bound-H}
    Consider any $\tau \in (0, \Delta_k)$. Under Assumptions \ref{assumption:moments} and \ref{assumption:step-size}, we have the following upper bounds: 
    \begin{align*}
    & \EE\Big[ \sup_{t_k \leq t \leq t_{k + 1}} \big\|B_t - B_{t_k} - H_t^{\tau} \big\|_2^2 \Big] \\
    &\qquad\lesssim  \sigma_{T - t_{k + 1}}^{-2} \Delta_k^4 d + (\sigma_{T - t_k}^{-2} + \lambda_{T - t_k}^2 \sigma_{T - t_k}^{-4})^2 \Delta_k^3 d^3 + \lambda_{T - t_{k + 1}}^4 \sigma_{T - t_{k + 1}}^{-12} \Delta_k^4 d^3 + \lambda_{T - t_{k + 1}}^2 \sigma_{T - t_{k + 1}}^{-8} \Delta_k^4 d, \\
    & \EE\Big[ \sup_{t_k \leq t \leq t_{k + 1}} \big\|B_t - B_{t_k} - H_t^{\tau}\big\|_2^4 \Big] \\
    &\qquad \lesssim  \sigma_{T - t_{k + 1}}^{-4} \Delta_k^8 d^2 + (\sigma_{T - t_k}^{-2} + \lambda_{T - t_k}^2 \sigma_{T - t_k}^{-4})^4 \Delta_k^6 d^6 + \lambda_{T - t_{k + 1}}^8 \sigma_{T - t_{k + 1}}^{-24} \Delta_k^8 d^6 + \lambda_{T - t_{k + 1}}^4 \sigma_{T - t_{k + 1}}^{-16} \Delta_k^8 d^2. 
\end{align*}
\end{lemma}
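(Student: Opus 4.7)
The plan is to observe first that, subtracting \eqref{eq:process-H} from the identity $\dd(B_t - B_{t_k}) = \dd B_t$, the pathwise residual $E_t := B_t - B_{t_k} - H_t^\tau$ satisfies the deterministic (conditionally on Brownian paths) integral equation
\begin{align*}
E_t = -\frac{1}{\sqrt{2}} \int_{t_k + \tau}^t \Big[\cF(s, Y_s) - \overline{\cF}\big(s, Y_{t_k}, (H_u^\tau)_{t_k \leq u \leq s}\big)\Big] \dd s,
\end{align*}
with $E_s = 0$ for $s \in [t_k, t_k + \tau]$. Since this is not a stochastic integral, Jensen and Cauchy--Schwarz immediately reduce the suprema of $\|E_t\|_2^p$ (for $p = 2, 4$) to time-integrals of $\EE\big[\|\cF(s,Y_s) - \overline{\cF}(s, Y_{t_k}, (H_u^\tau)_{t_k \leq u \leq s})\|_2^p\big]$ over $s \in [t_k, t_{k+1}]$.

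Next, I would decompose the integrand into two pieces: a \emph{Runge--Kutta approximation error} $\cA_s := \cF(s, Y_s) - \overline{\cF}(s, Y_{t_k}, (B_u - B_{t_k})_{t_k \leq u \leq s})$, which measures how well $\overline{\cF}$ matches the true reverse drift $\cF$ when fed the true Brownian increments, and a \emph{feedback term} $\cB_s := \overline{\cF}(s, Y_{t_k}, (B_u - B_{t_k})_{t_k \leq u \leq s}) - \overline{\cF}(s, Y_{t_k}, (H_u^\tau)_{t_k \leq u \leq s})$, which by inspection of \eqref{eq:dbarXt-long-equation} is linear in the Brownian-type inputs and is bounded by $\|\nabla_x s(t_k,\cdot)\|_{\mathrm{op}} \cdot \sup_{u \leq s}\|E_u\|_2$ up to $e^{O(\Delta_k)}$ factors. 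The representation $s(t,y) = \sigma_{T-t}^{-2}(\lambda_{T-t} m(t,y) - y)$ from Remark~\ref{remark:m} together with Assumption~\ref{assumption:moments} yields sup-norm bounds on $\nabla_x s(t_k,\cdot)$ that scale like $\sigma_{T-t_k}^{-2} + \lambda_{T-t_k}^2 \sigma_{T-t_k}^{-4}$, and then Assumption~\ref{assumption:step-size} makes $\Delta_k \|\nabla_x s\|_{\mathrm{op}}$ bounded. A Gronwall argument absorbs the feedback term and reduces everything to controlling $\EE\big[\int_{t_k}^{t_{k+1}} \|\cA_s\|_2^p \dd s\big]$.

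For the Runge--Kutta error $\cA_s$, the plan is to apply It\^o's formula \eqref{eq:ito-S} to expand $s(t_k + r, Y_{t_k + r}) - s(t_k, Y_{t_k})$ to second order inside \eqref{eq:dXt-long-equation}. The first-order stochastic increment $\sqrt{2}\,\nabla_x s(t_k, Y_{t_k})(B_{t_k + r} - B_{t_k})$ is precisely what the Runge--Kutta construction cancels through the Taylor expansion \eqref{eq:approx-score-diff-grad} of $s(t_k, \overline Y_{t_k} + g_{t_k, t})$. After this cancellation, only the following residuals remain: (a) the $O(\Delta_k^2)$ coefficient mismatch between $2(e^{\Delta_k} - 1)$ and $e^{\Delta_k} - e^{-\Delta_k}$, noted in \eqref{eq:coefficients-close}; (b) the deterministic drift terms $\partial_t s$, $\nabla_x s \cdot (Y_{t_k} + 2s)$, and $\nabla_x^2 s[I_d]$ appearing in \eqref{eq:ito-S}; (c) a quadratic stochastic term of the form $\nabla_x^2 s(t_k, Y_{t_k})[(B_{t_k + r} - B_{t_k})^{\otimes 2}]$; and (d) the second-order mismatch between evaluating $\nabla_x s$ at $\overline Y_{t_k} + g_{t_k,t}$ versus at $Y_{t_k}$. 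Each of these is controlled by sup-norm bounds on $\partial_t s$, $\nabla_x s$, $\nabla_x^2 s$, and $\nabla_x^3 s$ obtained again from the conditional-mean representation, which explains the various $\sigma^{-2}, \sigma^{-4}, \sigma^{-8}, \sigma^{-12}$ factors in the claim.

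The main obstacle will be keeping the dimension dependence tight. In particular, the Hessian-driven stochastic term in (c) contributes a variance of order $\|\nabla_x^2 s\|^2 \cdot \EE\|B_{t_k+r}-B_{t_k}\|_2^4 \asymp \|\nabla_x^2 s\|^2 \cdot r^2 d^2$, and the trace term $\nabla_x^2 s[I_d]$ in (b) picks up an extra power of $d$, jointly producing the $\Delta_k^3 d^3$ and $\lambda_{T-t_{k+1}}^4 \sigma_{T-t_{k+1}}^{-12}\Delta_k^4 d^3$ contributions; care is required so that the substitution of $\overline Y_{t_k} + g_{t_k,t}$ for $Y_{t_k}$ inside $\nabla_x s$ does not worsen these $d$-powers. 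Finally, the fourth-moment bound follows from the same decomposition by applying BDG to the martingale pieces of the It\^o expansion and squaring the $L^2$ estimates, which produces exactly the stated $\Delta_k^6 d^6$ and $\Delta_k^8 d^6$ summands.
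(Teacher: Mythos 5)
Your proposal follows the same architecture as the paper's proof: write $E_t = B_t - B_{t_k} - H_t^{\tau}$ as an ordinary time integral (not a stochastic integral) of the drift difference, split that difference into a Runge--Kutta approximation error evaluated on the true Brownian path and a self-referential feedback term, absorb the feedback, and take second and fourth moments of the surviving explicit residuals using the conditional-expectation representations of $s$, $\nabla_x s$, $\nabla_x^2 s$. (The paper's absorption is the one-shot inequality $\gamma_{\ast}(t)\le C_0+C_1\gamma_{\ast}(t)$ with $C_1\le 1/2$ guaranteed by \eqref{eq:Delta-k-size-small}; your Gronwall step is the same device.) Two places where your execution diverges and needs care. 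First, your claim that the feedback term $\mathcal{B}_s$ is \emph{linear} in the Brownian input is not accurate: the path also enters $\overline{\cF}$ nonlinearly through the argument $\overline Y_{t_k}+g_{t_k,t}$ of $s$ and $\nabla_x s$, so comparing $\overline{\cF}$ fed with $(B_u-B_{t_k})$ versus $(H_u^{\tau})$ forces you either to invoke a Lipschitz bound on $\nabla_x s$ itself (i.e.\ the Hessian bound $\lesssim\lambda_{T-t_k}^3\sigma_{T-t_k}^{-6}d^{3/2}$, contributing an extra coefficient of order $\Delta_k^{3/2}\lambda_{T-t_k}^3\sigma_{T-t_k}^{-6}d^{2}$ in front of $\sup_u\|E_u\|_2$ that must also be verified to be at most $1/2$), or to do what the paper does: keep $\nabla_x s$ evaluated at the $H$-perturbed point in every residual ($v_{c,t},v_{e,t},v_{f,t}$ in its notation), so that only the uniform bound $\sup_x\|\nabla_x s(t_k,x)\|\lesssim(\sigma_{T-t_k}^{-2}+\lambda_{T-t_k}^2\sigma_{T-t_k}^{-4})d$ is ever needed. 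Second, your second-order It\^{o}/Taylor expansion of the score increment is more refined than the paper's actual treatment: for the key piece the paper bounds $s(t_k,Y_\zeta)-s(t_k,Y_{t_k})-\nabla_x s(t_k,Y_{t_k}+h_{t_k,\zeta})(Y_\zeta-Y_{t_k})$ crudely by $2\sup\|\nabla_x s\|\cdot\|Y_\zeta-Y_{t_k}\|$, and that crude step is precisely the source of the dominant $(\sigma_{T-t_k}^{-2}+\lambda_{T-t_k}^2\sigma_{T-t_k}^{-4})^2\Delta_k^3 d^3$ summand. Your Hessian-driven term (c) instead yields a contribution of order $\lambda^6\sigma^{-12}d^5\Delta_k^4$, which exceeds the lemma's third summand by a factor $\lambda^2 d^2$; it is reconciled with the statement only after using $\sigma_{T-t_{k+1}}^{-4}\Delta_k\lesssim\kappa$ and $d^2\kappa\lesssim1$ to dominate it by the second summand. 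So your route closes, but it proves the lemma by domination under Assumption~\ref{assumption:step-size} rather than by producing its four terms directly, and you should make that final comparison explicit.
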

%

With \cref{lemma:KL-barX-X,lemma:upper-bound-H} in mind, 
we can readily develop a more concise upper bound on $\KL (Q_{T - \delta} \parallel \overline Q_{T - \delta})$, as stated below. 
The proof of this lemma can be found in Appendix \ref{sec:proof-lemma:Q-barQ}. 
\begin{lemma}
\label{lemma:Q-barQ}
    Under Assumptions \ref{assumption:moments} and \ref{assumption:step-size}, it holds that 
\begin{align*}
 \KL (Q_{T - \delta} \parallel \overline Q_{T - \delta}) \lesssim \,  d^3  \kappa^2 T  + d^7  \kappa^3 (\delta^{-1} + T). 
\end{align*}
\end{lemma}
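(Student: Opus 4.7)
The starting point is Lemma~\ref{lemma:KL-barX-X}, which reduces the task to controlling, for each $k$ and every $t \in (t_k, t_{k+1}]$, the quantity
\begin{align*}
    \mathcal{E}_k(t,\tau) \coloneqq \EE\Big[\big\|\cF(t,Y_t) - \overline \cF\big(t, Y_{t_k}, (H_s^\tau)_{t_k \le s \le t}\big)\big\|_2^2\Big],
\end{align*}
uniformly in $\tau \in (0,\Delta_k)$, and then summing $\int_{t_k}^{t_{k+1}} \mathcal{E}_k(t,\tau)\,\dd t$ over $k$. My strategy is to first split off the noise-coupling error by writing
\begin{align*}
    \cF(t,Y_t) - \overline \cF(t, Y_{t_k}, (H_s^\tau))
    &= \underbrace{\cF(t,Y_t) - \overline \cF\big(t, Y_{t_k}, (B_s-B_{t_k})\big)}_{=:\,\mathrm{(I)}} \\
    &\qquad + \underbrace{\overline \cF\big(t, Y_{t_k}, (B_s-B_{t_k})\big) - \overline \cF\big(t, Y_{t_k}, (H_s^\tau)\big)}_{=:\,\mathrm{(II)}},
\end{align*}
so that term (I) isolates the genuine Runge-Kutta discretization error while term (II) captures only the discrepancy between the true Brownian path $B$ and the auxiliary path $H^\tau$ introduced for Girsanov.

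For term (II), I would exploit the fact that \eqref{eq:dbarXt-long-equation} shows $\overline \cF$ depending on its last argument only through $\nabla_x s(t_k, \overline Y_{t_k} + g_{t_k,t})$ times linear functionals of $(W_s - W_{t_k})$, plus $s(t_k, \overline Y_{t_k} + g_{t_k,t})$ as a Lipschitz function of $g_{t_k,t}$. Combining the Jacobian bound $\|\nabla_x s(t_k,\cdot)\|_{\mathrm{op}} \lesssim \sigma_{T-t_k}^{-2} + \lambda_{T-t_k}^2 \sigma_{T-t_k}^{-4} d$ (a standard consequence of the Gaussian-mixture representation of $q_{T-t_k}$ under Assumption~\ref{assumption:moments}) with the path-wise moment bounds from Lemma~\ref{lemma:upper-bound-H} yields an upper bound on $\EE\|\mathrm{(II)}\|_2^2$ that is of lower order than the target rate, uniformly in $\tau \in (0,\Delta_k)$.

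The main work is on term (I). I would rewrite it using \eqref{eq:dbarXt-long-equation} and \eqref{eq:dXt-long-equation}, then apply It\^o's formula \eqref{eq:ito-S} to expand
\begin{align*}
    s(t_k+r, Y_{t_k+r}) - s(t_k, Y_{t_k}) = \sqrt{2}\,\nabla_x s(t_k, Y_{t_k})\,(B_{t_k+r}-B_{t_k}) + R_k(r),
\end{align*}
where $R_k(r)$ collects the drift pieces $\partial_t s + \nabla_x s\cdot(Y+2s) + \nabla_x^2 s[I_d]$, each of order $O(r)$ on $[0,\Delta_k]$. The leading Brownian term here is exactly what the Runge-Kutta construction reproduces via \eqref{eq:approx-score-diff-grad}, so the leading-order contributions cancel. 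What remains is (a) the $O(\Delta_k^2)$ coefficient mismatch \eqref{eq:coefficients-close}, (b) the second-order Taylor residual $\tfrac12 \nabla_x^2 s(t_k, Y_{t_k})[g_{t_k,t} \otimes g_{t_k,t}]$ produced when we replace $(e^{\Delta_k}-e^{-\Delta_k})\nabla_x s \cdot g_{t_k,t}$ by $(e^{\Delta_k}-e^{-\Delta_k})(s(t_k,Y_{t_k}+g_{t_k,t}) - s(t_k,Y_{t_k}))$, and (c) smooth drift residuals coming from $R_k(r)$. I would bound each of these in $L^2$ using standard derivative estimates
\begin{align*}
    \EE\big\|\nabla_x^2 s(t_k, Y_{t_k})[G]\big\|_2^2 \lesssim \big(\sigma_{T-t_k}^{-6} + \lambda_{T-t_k}^4 \sigma_{T-t_k}^{-8} d^2\big)\,\EE\|G\|_F^2,
\end{align*}
together with the analogous bound for $\partial_t s$ and for $\nabla_x s \cdot(Y+2s)$, yielding per-step contributions of order $\Delta_k^3$ (from the $\nabla_x^2 s$ and Hessian-type terms) and $\Delta_k^4$ (from the smooth drifts and coefficient mismatch).

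The final step is to sum over $k$, using Assumption~\ref{assumption:step-size} to balance the blow-up of the score derivatives near the early-stopping time against the shrinking step size $\Delta_k \le \kappa\min\{1,(T-t_{k+1})^2\}$. Splitting the sum as $\sum_k \Delta_k^a \sigma_{T-t_k}^{-b}$ and invoking the telescoping-type inequalities $\sum_k \Delta_k \lesssim T$ and $\sum_k \Delta_k^3/(T-t_{k+1})^2 \lesssim \kappa^2 (T + \delta^{-1})$ produced by the stepsize rule will yield the bulk $d^3\kappa^2 T$ term and the near-terminal $d^7 \kappa^3(\delta^{-1}+T)$ term, respectively. The hardest part of the argument is step (b) above: the second-order Taylor residual must be bounded without losing extra factors of $d$, which requires tracking the tensor $\nabla_x^2 s[g\otimes g]$ in an $L^2$ sense tailored to the isotropic Gaussian structure of $g_{t_k,t}$ rather than via a naive operator-norm bound; it is this bookkeeping that ultimately governs the final $d^{3/2}$ scaling in the NFE count claimed in Corollary~\ref{cor:example}.
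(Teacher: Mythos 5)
Your plan follows essentially the same route as the paper's proof: starting from Lemma~\ref{lemma:KL-barX-X}, splitting off the $(H^\tau)$-versus-$(B)$ coupling error via the Lipschitz continuity of $\overline\cF$ in its path argument together with Lemma~\ref{lemma:upper-bound-H}, decomposing the remaining discretization error into the coefficient-mismatch, It\^o-drift, and second-order Taylor residuals (the paper's terms $\TT_1$--$\TT_4$), and summing with the stepsize rule split at $T-t_k\approx 1$. You also correctly single out the bound on $\nabla_x^2 s[g\otimes g]$ as the delicate step; the paper handles it by exploiting the vanishing of cross-terms with unpaired Gaussian indices, exactly the kind of isotropy argument you anticipate.
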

%


%
%
%
%
%
%

\paragraph{Step 3: bounding the effect of score estimation errors.}
We still need to take into account the impact of the score estimation error. 
In this regard, we recall process \eqref{eq:hat-X-tk}, 
%
%
and denote by $\wh Q^{\dis}_{T - \delta}$ (resp.~$Q^{\dis}_{T - \delta}$ ) the distribution of $(\wh Y_{t_k})_{0 \leq k \leq K}$ (resp.~$(Y_{t_k})_{0 \leq k \leq K}$). 
The next lemma attempts to upper bound $\KL(Q^{\dis}_{T - \delta} \parallel \wh Q^{\dis}_{T - \delta})$; its proof is deferred to Appendix~\ref{sec:proof-lemma:discretized-KL}. 
\begin{lemma}
\label{lemma:discretized-KL}
	Suppose that Assumptions \ref{assumption:moments}, \ref{assumption:step-size} and \ref{assumption:score} hold, and that both processes $(\widehat{Y}_t)$ and $(Y_t)$ are initialized to $Y_0 \overset{\mathrm{d}}{=} \wh Y_0 \sim q_T$. 
 Then, it holds that
\begin{align*}
		& \KL(Q^{\dis}_{T - \delta} \parallel \wh Q^{\dis}_{T - \delta}) 
  \lesssim  d^3  \kappa^2 T  + d^7  \kappa^3 (\delta^{-1} + T) +  \sum_{k = 0}^{K - 1}  \frac{\eps_{\mathsf{score},k}^{1 / 2} \kappa^{1/2}\sigma_{T - t_k} d}{\lambda_{T - t_k}^{1 / 2}}.  
\end{align*}
\end{lemma}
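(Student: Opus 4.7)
The plan is to bridge the chains $Q^{\dis}$ and $\wh Q^{\dis}$ through the auxiliary chain $\overline Q^{\dis}$, splitting the total KL into a time-discretization contribution (handled by Lemma \ref{lemma:Q-barQ}) and a score-estimation contribution. The time-discretization piece $\KL(Q^{\dis} \parallel \overline Q^{\dis})$ is obtained essentially for free by applying the data-processing inequality to Lemma \ref{lemma:Q-barQ}, yielding the first two terms in the stated bound. For the score-estimation piece, I would introduce an intermediate Markov chain $(\widetilde Y_{t_k})$ that runs the same update \eqref{eq:bar-X-tk} as $\overline Y$ but with $s(t_k, \cdot)$ replaced everywhere by $\wh s(t_k, \cdot)$, initialized at $\widetilde Y_0 \sim q_T$; by construction its joint law coincides with $\wh Q^{\dis}$.

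Applying the Markov chain rule for KL divergence (both chains starting at $q_T$) gives
\begin{align*}
\KL(\overline Q^{\dis} \parallel \wh Q^{\dis}) = \sum_{k=0}^{K-1} \EE_{\overline Y_{t_k} \sim q_{T-t_k}}\bigl[\KL\bigl(\overline p_{k+1 \mid k}(\cdot \mid \overline Y_{t_k}) \parallel \wh p_{k+1 \mid k}(\cdot \mid \overline Y_{t_k})\bigr)\bigr].
\end{align*}
Conditional on $\overline Y_{t_k}$ and the Gaussian vector $g_{t_k, t_{k+1}}$, both transition kernels are Gaussians with identical covariance (inherited from the same stochastic integral) and mean difference $(e^{\Delta_k} - e^{-\Delta_k})(s - \wh s)(t_k, \overline Y_{t_k} + g_{t_k, t_{k+1}})$, so the conditional KL is an explicit quadratic form in this score difference. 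Averaging over $g_{t_k, t_{k+1}}$ via convexity of KL and over $\overline Y_{t_k}$, I would invoke Assumption \ref{assumption:score}(1) after recognizing that $\overline Y_{t_k} + g_{t_k, t_{k+1}}$ can be written in distribution as $e^{-\Delta_k} Y_{t_{k+1}} + (\text{Gaussian})$ via the OU-semigroup identity \eqref{eq:forward-dist-lambda-sigma}, with constants $a_k = e^{-\Delta_k} \in \sI_k$ and the resulting Gaussian scale $b_k \in \sI_k'$ lying in the admissible ranges under Assumption \ref{assumption:step-size}.

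To combine the two pieces, since KL does not obey a triangle inequality, I would use the identity
\begin{align*}
\KL(Q^{\dis} \parallel \wh Q^{\dis}) = \KL(Q^{\dis} \parallel \overline Q^{\dis}) + \KL(\overline Q^{\dis} \parallel \wh Q^{\dis}) + \bigl(\EE_{Q^{\dis}} - \EE_{\overline Q^{\dis}}\bigr)\bigl[\log d\overline Q^{\dis}/d\wh Q^{\dis}\bigr],
\end{align*}
and bound the residual by $\|\log d\overline Q^{\dis}/d\wh Q^{\dis}\|_\infty \cdot \TV(Q^{\dis}, \overline Q^{\dis})$, converting the TV factor to $\sqrt{\KL(Q^{\dis} \parallel \overline Q^{\dis})/2}$ via Pinsker's inequality and bounding the sup-norm factor through the explicit Gaussian log-likelihood ratio together with the uniform estimate from Assumption \ref{assumption:score}(2). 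The main obstacle is precisely this last combination step: the Cauchy--Schwarz-style composition of two KL bounds---one small (score error) and one controlling a TV distance---inevitably degrades the ideal $\eps_{\mathsf{score},k}^2$ dependency to the geometric-mean-type $\eps_{\mathsf{score},k}^{1/2}$ appearing in the statement, a suboptimality acknowledged in the discussion following Corollary \ref{cor:example}.
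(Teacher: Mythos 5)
Your first two steps are broadly aligned with the paper: the paper also bridges through the auxiliary chain, bounds $\KL(Q^{\dis}_{T-\delta}\parallel \overline Q^{\dis}_{T-\delta})$ by data processing plus Lemma~\ref{lemma:Q-barQ}, and recognizes that Assumption~\ref{assumption:score}(1) must be invoked at a point distributed as $a_k Y_{t_{k+1}}+b_k g$. However, your combination step contains a genuine gap. The paper uses the \emph{exact} identity $\KL(Q^{\dis}\parallel\wh Q^{\dis})=\KL(Q^{\dis}\parallel\overline Q^{\dis})+\EE_{Q^{\dis}}\big[\log\tfrac{\dd\overline Q^{\dis}}{\dd\wh Q^{\dis}}\big]$, which has no residual, and then spends all of its effort bounding the cross term $\EE_{Q^{\dis}}[\log \dd\overline Q^{\dis}/\dd\wh Q^{\dis}]$ directly---this is \emph{not} a KL divergence, because the expectation is taken under $Q^{\dis}$ while the densities belong to $\overline Q^{\dis}$ and $\wh Q^{\dis}$, and this measure mismatch is precisely the technical crux (handled via the tilted Gaussian $\phi_{\xi_k,\tau_k}$ and Lemma~\ref{lemma:nuk-distribution}). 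Your route instead splits this cross term further into $\KL(\overline Q^{\dis}\parallel\wh Q^{\dis})$ plus the residual $(\EE_{Q^{\dis}}-\EE_{\overline Q^{\dis}})[\log\dd\overline Q^{\dis}/\dd\wh Q^{\dis}]$, and proposes to bound the residual by $\|\log\dd\overline Q^{\dis}/\dd\wh Q^{\dis}\|_\infty\cdot\TV(Q^{\dis},\overline Q^{\dis})$. This fails: the per-step log-likelihood ratio contains terms of the form $\eta_k\langle m-\wh m,\,v_k-\kappa_k g\rangle$ with $v_k$ linear in $Y_{t_k},Y_{t_{k+1}}$, whose marginals are Gaussian convolutions with unbounded support, so the sup norm over the $K$-step path space is infinite even under the uniform bound of Assumption~\ref{assumption:score}(2). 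Even on a truncated high-probability event, the product of a $K$-step sup bound with $\sqrt{\KL(Q^{\dis}\parallel\overline Q^{\dis})/2}\asymp d^{3/2}\kappa\sqrt{T}$ does not vanish with $\kappa$ and does not produce a term of the required form.

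A second, related misattribution: the $\eps_{\mathsf{score},k}^{1/2}$ in the statement does not come from a Pinsker-type geometric mean of two KLs. In the paper it arises from a Chebyshev/Markov split: one defines the event $\cS_{Y_{t_k},Y_{t_{k+1}}}$ on which the pointwise error $\|m-\wh m\|_2$ exceeds the threshold $\sigma_{T-t_k}\lambda_{T-t_k}^{-1/2}\eps_{\mathsf{score},k}^{1/2}$, bounds its probability by $\sigma_{T-t_k}^2\lambda_{T-t_k}^{-1}\eps_{\mathsf{score},k}$ via Chebyshev, and then applies Cauchy--Schwarz on the complement so that $\PP(\cS^c)^{1/2}$ yields the half power. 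Your bound on $\KL(\overline Q^{\dis}\parallel\wh Q^{\dis})$ via conditioning on $g_{t_k,t_{k+1}}$ and joint convexity of KL is itself sound (and would give the stronger quadratic dependence $\sum_k\Delta_k\eps_{\mathsf{score},k}^2$ for that particular term), but since the residual term cannot be controlled as proposed, the argument does not close. To repair it, you should drop the second splitting, keep the exact chain-rule decomposition, and confront the cross term $\EE_{Q^{\dis}}[\log \dd\overline Q^{\dis}/\dd\wh Q^{\dis}]$ head on, writing both transition kernels as explicit Gaussian mixtures over $g$ and performing the good-event/bad-event analysis under the measure $Q_{k+1}$.
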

%
%

\paragraph{Step 4: determining the impact of initialization errors.}
In practice, we typically have no access to $q_T$, and a common strategy is to replace it with $\cN(0, I_d)$.  
Consider process \eqref{eq:hat-X-tk}, but with initialization $\wh Y_0 \sim \cN(0, I_d)$ instead of $\wh Y_0 \sim q_T$. 
In this case, we denote the distribution of $\wh Y_{T - \delta}$ by $p_{\mathsf{output}}$. 
With $\wh Y_0 \sim \cN(0, I_d)$, we denote the distribution of $(\wh Y_{t_k})_{0 \leq k \leq K}$ that follows the update rule \eqref{eq:alpha-form} by $\wh P_{T - \delta}^{\dis}$, in contrast to $\wh Q_{T - \delta}^{\dis}$ which is the distribution of the same process with $\wh Y_0 \sim q_T$.
Note that for $(y_0, y_1, \cdots, y_K) \in \RR^{d(K + 1)}$, 
\begin{align*}
    \frac{\dd Q_{T - \delta}^{\dis}(y_0, \cdots, y_K)}{\dd \wh P_{T - \delta}^{\dis}(y_0, \cdots, y_K)} = \frac{\dd Q_{T - \delta}^{\dis}(y_0, \cdots, y_K)}{\dd \wh Q_{T - \delta}^{\dis}(y_0, \cdots, y_K)} \cdot \frac{\dd \wh Q_{T - \delta}^{\dis}(y_0, \cdots, y_K)}{\dd \wh P_{T - \delta}^{\dis}(y_0, \cdots, y_K)}  = \frac{\dd Q_{T - \delta}^{\dis}(y_0, \cdots, y_K)}{\dd \wh Q_{T - \delta}^{\dis}(y_0, \cdots, y_K)} \cdot \frac{\dd q_T(y_0)}{\dd \pi_d(y_0)}, 
\end{align*}
where $\pi_d$ represents the distribution of a $d$-dimensional standard Gaussian random vector. 
Using the above distribution, we obtain 
\begin{align}
\label{eq:KL-final-decomp}
    \KL(q_{\delta} \parallel p_{\mathsf{output}}) \leq \KL\big(Q_{T - \delta}^{\dis} \parallel \wh P_{T - \delta} \big) = \KL\big(Q_{T - \delta}^{\dis} \parallel \wh Q_{T - \delta}^{\dis}\big) + \KL(q_T \parallel \pi_d).
\end{align}
%
Further, recall that $q_T$ has the same distribution as $\lambda_T \theta + \sigma_T g$, where $(\theta, g) \sim q_0 \otimes \cN(0, I_d)$. Hence,  the data processing inequality tells us that 
\begin{align}
    \KL(q_T \parallel \pi_d) \leq \KL\big(q_0 \otimes q_T \parallel q_0 \otimes \cN(0, I_d)\big) = \frac{1}{2} \left( -d \log \sigma_T^2 - d + d \sigma_T^2 + e^{-2T} \EE_{\theta \sim q_0}[\|\theta\|_2^2] \right) \lesssim d e^{-2T}.
    \label{eq:KL-init}
\end{align}
%
%
%
Substituting the above upper bound \cref{eq:KL-init} and \cref{lemma:discretized-KL} into \cref{eq:KL-final-decomp}, we arrive at
\begin{align*}
    \KL(q_{\delta} \parallel p_{\mathsf{output}}) \lesssim d^3  \kappa^2 T  + d^7  \kappa^3 (\delta^{-1} + T) +  \sum_{k = 0}^{K - 1}  \frac{\eps_{\mathsf{score},k}^{1 / 2} \kappa^{1/2}\sigma_{T - t_k} d}{\lambda_{T - t_k}^{1 / 2}} + d e^{-2T}
\end{align*}
as claimed. 

\section{Numerical experiments}
\label{sec:experiments}

In this section, we illustrate the practical performance of \cref{alg:stochastic-RK} on various image generation tasks. 
For benchmarking, we resort to the original DDPM sampling scheme \citep{ho2020denoising} along with the SDE-based acceleration method proposed in \cite{li2024accelerating}, ensuring that all methods adopt the same pre-trained score estimates.

More specifically, we utilize the pre-trained score functions from \cite{nichol2021improved} and focus on two datasets: ImageNet-64 \citep{chrabaszcz2017downsampled} and CIFAR-10 \citep{krizhevsky2009learning}.
Note that we have not attempted to optimize the generative modeling performance with additional techniques (e.g., employing better score functions or training with higher quality datasets), as our primary goal is to evaluate the effectiveness of the proposed acceleration method.
Our approach is compatible with a variety of diffusion model codebases and datasets, where we anticipate observing similar acceleration effects. 

In our experiments, we compare the Fréchet inception distance (FID) \citep{heusel2017gans} of the images generated by the vanilla DDPM, the SDE acceleration method proposed in \cite{li2024accelerating}, and our proposed method (the Stochastic Runge-Kutta method). 
FID quantifies the similarity between the distribution of the generated images and the target distribution, with lower FID values indicating greater similarity.
For each method and step size combination, we generate $10^4$ images and compute the corresponding FID. The numerical results are summarized below. 

\paragraph{CIFAR-10.} 
Figure \ref{image:cifar} presents the simulation results for the CIFAR-10 dataset. 
The left panel shows images generated by our method and the vanilla DDPM, while the right panel illustrates the evolution of FID across different NFEs, ranging from 10 to 100. 
Note that here NFE is equal to the number of diffusion steps, as each step requires only one score evaluation for all methods. The generated images suggest that our method produces less noisy outputs. Moreover, our method consistently outperforms the other two methods in terms of FID across all step sizes.

\begin{figure}[htbp]
  \centering
  \begin{minipage}[b]{0.4\textwidth}
    \includegraphics[width=\textwidth]{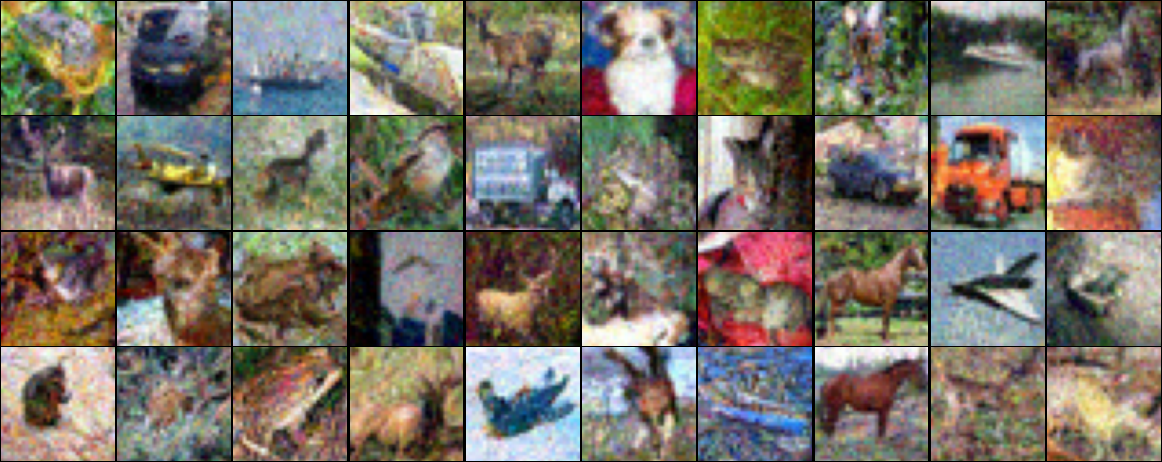}
    \caption*{Images generated using vanilla DDPM.} 
    \includegraphics[width=\textwidth]{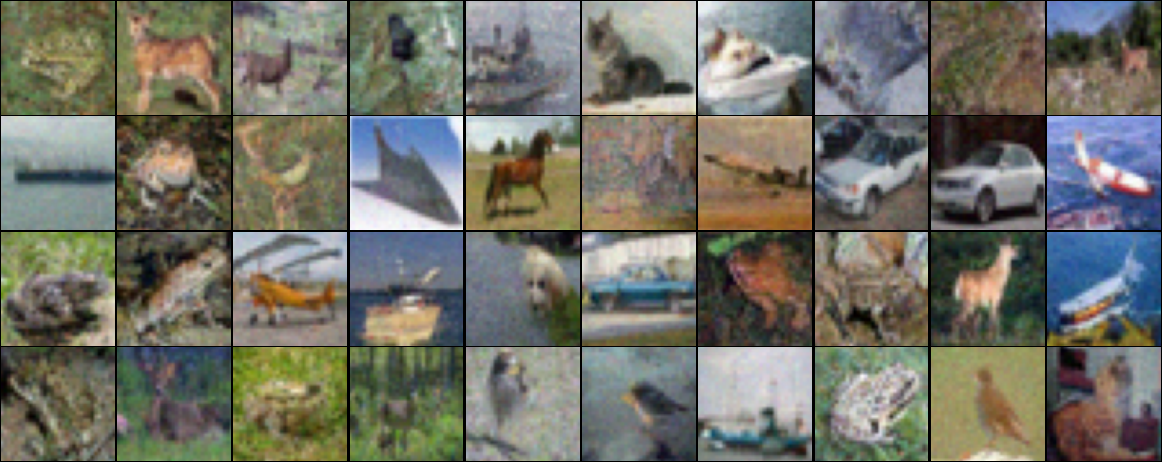}
    \caption*{Images generated using our method.}
  \end{minipage}
  \hspace{1cm}
  \begin{minipage}[b]{0.515\textwidth}
    \includegraphics[width=\textwidth]{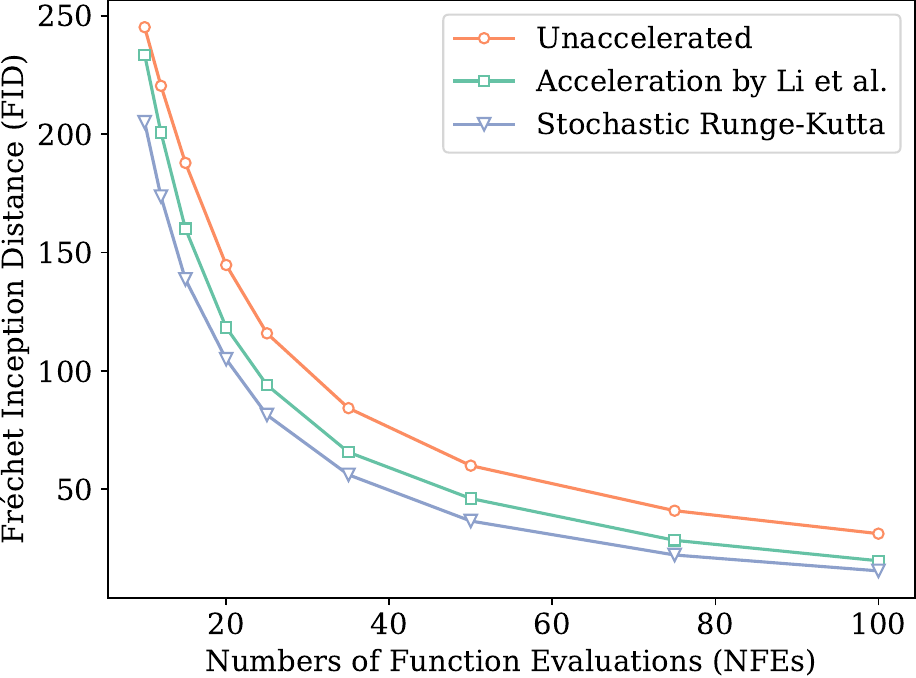}
  \end{minipage}
  \caption{Simulation results using pre-trained score functions for the CIFAR-10 dataset. The left panel shows images generated by the vanilla DDPM and our method with 35 NFEs. The right panel plots the FID scores for all three methods across different NFEs. }
  \label{image:cifar}
\end{figure}

\paragraph{ImageNet-64.}
Figure \ref{image:imagenet} shows the simulation results for the ImageNet-64 dataset, where we observe similar improvements as what happens for the CIFAR-10 dataset. 

\begin{figure}[htbp]
  \centering
  \begin{minipage}[b]{0.4\textwidth}
    \includegraphics[width=\textwidth]{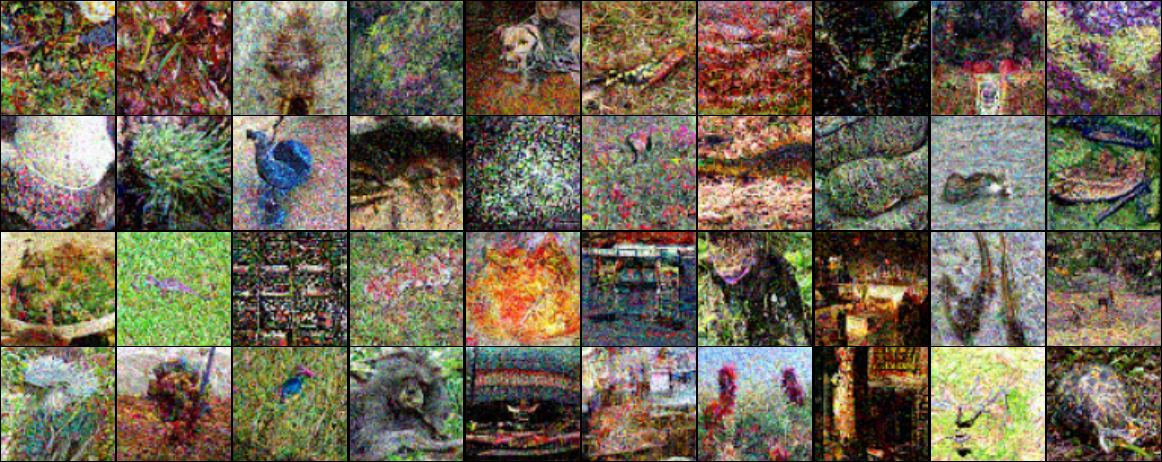}
    \caption*{Images generated using vanilla DDPM.} 
    \includegraphics[width=\textwidth]{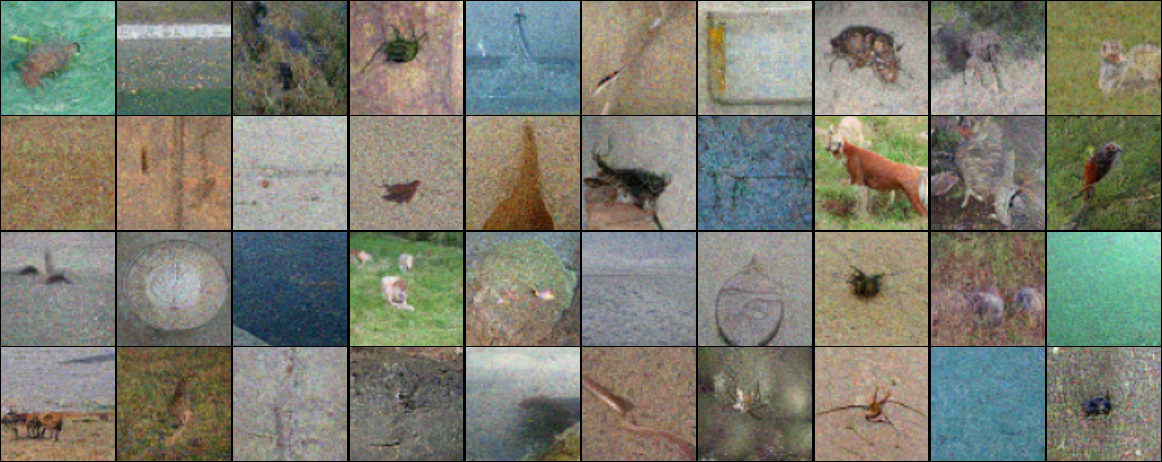}
    \caption*{Images generated using our method.}
  \end{minipage}
  \hspace{1cm}
  \begin{minipage}[b]{0.515\textwidth}
    \includegraphics[width=\textwidth]{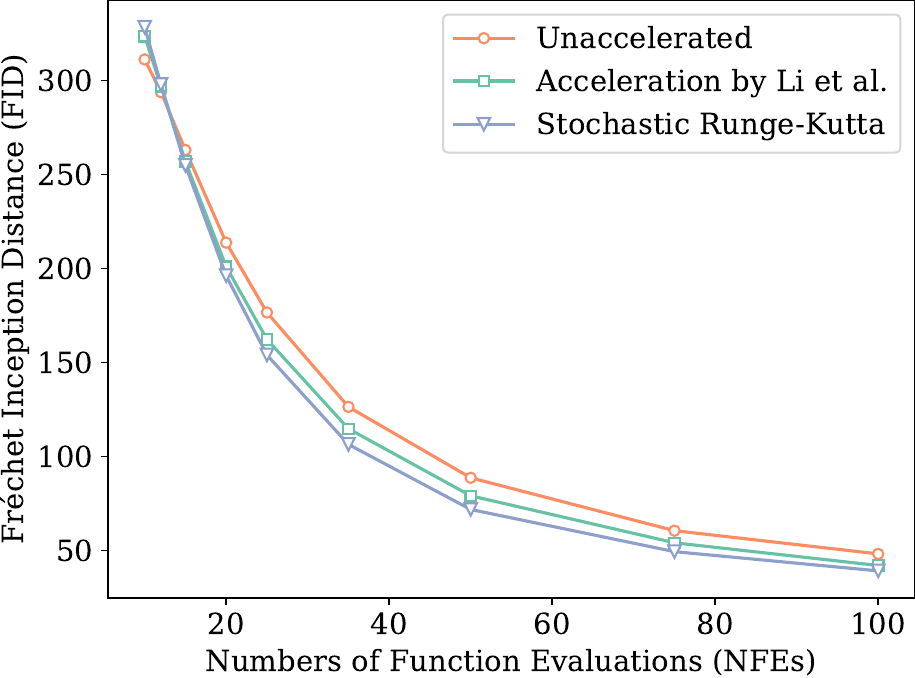}
  \end{minipage}
  \caption{Simulation results using pre-trained score functions for the ImageNet-64 dataset. The left panel shows images generated by the vanilla DDPM and our method with 35 NFEs. The right panel plots the FID scores for all three methods across different NFEs. }
  \label{image:imagenet}
\end{figure}

\section{Discussion}
\label{sec:discussion}

In this paper, we have made progress in provably speeding up SDE-based diffusion samplers. In comparison to prior results, the convergence guarantees of our accelerated algorithm enjoy improved dimension-dependency, shedding light on the advantages of the stochastic Runge-Kutta approach. 
Remarkably, our algorithm paves the way for designing even higher-order SDE-based diffusion solvers, the advantages of which will be explored in future research.

Moving forward, there are plenty of directions that are worth pursuing. 
For instance, the dependency on the dimension $d$ and the score estimation error remains sub-optimal, and more refined analyses are needed in order to tighten our result. 
Also, as mentioned previously, establishing sharp TV-type upper bound for our proposed sampler (as in \cite{li2024d} for DDPM) would be an interesting direction and call for new techniques, as the Girsanov-type arguments might not be applicable for analyzing the TV-distance. 
Furthermore, the recent work \cite{li2024adapting} has demonstrated the remarkable capability of DDPM in adapting to unknown low-dimensional structure; whether this appealing feature is inherited by our accelerated stochastic sampler is worth investigating. 
Finally, it would be important to develop fast and principled diffusion-based samplers that allow one to sample with guidance in a provably efficient manner (see, e.g., \cite{wu2024theoretical,chidambaram2024does}).


\section*{Acknowledgements}

Y.~Chen is supported in part by the Alfred P.~Sloan Research Fellowship, the AFOSR grant FA9550-22-1-0198, the ONR grant N00014-22-1-2354,  and the NSF grant CCF-2221009. 
Y. Wei is supported in part by the NSF grants DMS-2147546/2015447, CAREER award DMS-2143215, CCF-2106778, CCF-2418156 and the Google Research Scholar Award.
The authors gratefully acknowledge Timofey Efimov for his generous assistance with the numerical experiments.


\appendix 

\section{Technical lemmas}
\label{sec:technical-lemmas}

We collect in this section a couple of technical lemmas that are useful in establishing our main results.

\begin{lemma}
\label{lemma:BM-integral-cov}
    Denote by $(W_t)_{t \geq 0}$ a standard Brownian motion in $\RR^d$. 
    Then for all $t_k \leq t < t_{k + 1}$, the covariance matrices of the following vectors are given by
    \begin{align*}
        & \Cov\left[ \int_0^{t - t_k} e^{t - t_k - r}  (W_{t_k + r} - W_{t_k}) \dd r \right] = \big[  e^{2t - 2t_k} / 2 - 2 e^{t - t_k} + (t - t_k + 3 / 2) \big] \cdot  I_d \eqqcolon f_1(t-t_k)\cdot I_d, \\
        & \Cov \left[ \int_0^{t - t_k} e^{t - t_k - r}  (W_{t_k + r} - W_{t_k}) \dd r,\, W_t - W_{t_k} \right] =  \big[e^{t - t_k} - t + t_k - 1 \big] \cdot I_d, \\
        & \Cov\left[ \int_0^{t - t_k} e^{t - t_k - r} \dd W_{t_k + r} \right] = \big[e^{2(t - t_k)} / 2 - 1 / 2\big] \cdot I_d
        \eqqcolon f_2(t-t_k)\cdot I_d, \\
        & \Cov\left[ \int_0^{t - t_k} e^{t - t_k - r}  (W_{t_k + r} - W_{t_k}) \dd r,  \int_0^{t - t_k} e^{t - t_k - r} \dd W_{t_k + r}  \right] = \big[  e^{2(t - t_k)} / 2 - e^{t - t_k} + 1 / 2 \big] \cdot I_d
        \eqqcolon f_3(t-t_k)\cdot I_d. 
    \end{align*}
    Here, we define, for notational simplicity, the following functions: for any $\Delta > 0$, let 
    \begin{align}
    \label{eq:covariance_functions}
    \begin{split}
        & f_1(\Delta) = e^{2\Delta} / 2 - 2 e^{\Delta} + \Delta + 3 / 2, \\
        & f_2(\Delta) = e^{2\Delta} / 2 - 1 / 2, \\
        & f_3(\Delta) = e^{2\Delta} / 2 - e^{\Delta} + 1 / 2. 
    \end{split}
    \end{align}
\end{lemma}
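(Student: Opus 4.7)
After shifting so that $t_k=0$ and writing $\Delta = t-t_k$, the independence of the $d$ coordinates of $W$ reduces each claim to a scalar variance/covariance computation times $I_d$. The key algebraic step is to rewrite the integrated Brownian motion as an Itô integral via Fubini: using $W_r = \int_0^r \mathrm d W_s$ and swapping the order of integration,
\begin{align*}
    \int_0^\Delta e^{\Delta-r}(W_{r}-W_0)\,\mathrm d r \;=\; \int_0^\Delta\!\!\int_0^r e^{\Delta-r}\,\mathrm d W_s\,\mathrm d r \;=\; \int_0^\Delta \Big(\int_s^\Delta e^{\Delta-r}\,\mathrm d r\Big)\,\mathrm d W_s \;=\; \int_0^\Delta\bigl(e^{\Delta-s}-1\bigr)\,\mathrm d W_s .
\end{align*}
Once all four objects are written as Itô integrals of deterministic integrands against the same Brownian motion, the Itô isometry reduces every covariance to an elementary Lebesgue integral of products of $e^{\Delta-s}$, $e^{\Delta-s}-1$, and $1$.

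Concretely, I would proceed statement by statement. For the third claim, Itô's isometry gives $\int_0^\Delta e^{2(\Delta-s)}\,\mathrm d s = \tfrac12(e^{2\Delta}-1) = f_2(\Delta)$. For the first, applying the identity above yields
\begin{align*}
    \int_0^\Delta (e^{\Delta-s}-1)^2\,\mathrm d s \;=\; \tfrac12(e^{2\Delta}-1) - 2(e^\Delta-1) + \Delta \;=\; f_1(\Delta).
\end{align*}
For the second, pairing $\int_0^\Delta(e^{\Delta-s}-1)\,\mathrm d W_s$ with $W_\Delta-W_0 = \int_0^\Delta\mathrm d W_s$ gives $\int_0^\Delta (e^{\Delta-s}-1)\,\mathrm d s = e^\Delta - \Delta - 1$, matching the claim. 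For the fourth, the bilinear Itô isometry yields $\int_0^\Delta (e^{\Delta-s}-1)e^{\Delta-s}\,\mathrm d s = \tfrac12(e^{2\Delta}-1) - (e^\Delta-1) = f_3(\Delta)$.

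There is essentially no hard step here: the only nontrivial move is the Fubini/stochastic Fubini interchange that turns the pathwise integral of $W_r$ into a Wiener integral of a deterministic kernel. This is justified since the integrand $e^{\Delta-r}\mathbf{1}\{s\le r\le\Delta\}$ is bounded and deterministic, so the classical stochastic Fubini theorem applies. Everything after that is Itô isometry plus elementary integration, and the cross-coordinate structure (each covariance being a scalar multiple of $I_d$) is immediate from coordinate-wise independence of Brownian motion together with the fact that all integrands are scalar-valued.
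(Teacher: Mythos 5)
Your proposal is correct: the stochastic Fubini identity $\int_0^\Delta e^{\Delta-r}(W_r-W_0)\,\dd r=\int_0^\Delta(e^{\Delta-s}-1)\,\dd W_s$ is valid (the kernel is deterministic and bounded), and each of the four resulting integrals evaluates to the stated $f_1$, $f_2$, $f_3$ and $e^{\Delta}-\Delta-1$. The route differs from the paper's in one substantive place: for the first identity the paper does not convert the time-integral of Brownian motion into a Wiener integral, but instead sets $\cH(t)=\Cov[\int_0^{t-t_k}e^{t-t_k-r}(W_{t_k+r}-W_{t_k})\,\dd r]$, differentiates in $t$ to obtain the linear ODE $\cH'(t)=2\cH(t)+2(e^{t-t_k}-t+t_k-1)I_d$ with $\cH(t_k)=0$, and solves it; the remaining three identities are computed directly from the covariance kernel $\EE[(W_{t_k+r}-W_{t_k})(W_{t_k+s}-W_{t_k})^\top]=\min(r,s)I_d$ and elementary double integrals. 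Your unified reduction to deterministic Wiener integrals plus the It\^{o} isometry is arguably cleaner: it handles all four claims with a single mechanism and replaces the ODE-solving step with a one-line integration of $(e^{\Delta-s}-1)^2$. What the paper's ODE approach buys is that it avoids invoking the stochastic Fubini theorem and works entirely with deterministic calculus on the covariance function. Both arguments are complete and yield identical expressions, so there is no gap to flag.
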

\begin{proof}[Proof of \cref{lemma:BM-integral-cov}]
    For $t_k \leq t < t_{k + 1}$, set 
    \begin{align*}
        \cH(t) = \Cov\left[ \int_0^{t - t_k} e^{t - t_k - r}  (W_{t_k + r} - W_{t_k}) \dd r \right] = \EE\left[ \left( \int_0^{t - t_k} e^{t - t_k - r}  (W_{t_k + r} - W_{t_k}) \dd r \right)^{\otimes 2} \right]. 
    \end{align*}
    %
Observe that $\cH(t_k) = 0_{d \times d}$. Taking the derivative of $\cH(t)$ with respect to $t$, we reach 
    \begin{align*}
        \cH'(t) &=  \EE\left[ \int_0^{t - t_k} e^{t - t_k - r}  (W_{t_k + r} - W_{t_k}) \dd r \otimes \Big( W_t - W_{t_k} + \int_0^{t - t_k} e^{t - t_k - r} (W_{t_k + r} - W_{t_k}) \dd r \Big) \right] \\
        & ~+ \EE\left[  \Big( W_t - W_{t_k} + \int_0^{t - t_k} e^{t - t_k - r} (W_{t_k + r} - W_{t_k}) \dd r \Big) \otimes \int_0^{t - t_k} e^{t - t_k - r}  (W_{t_k + r} - W_{t_k}) \dd r  \right] \\
        &= 2 \cH(t) + \bigg(2 \int_0^{t - t_k} e^{t - t_k - r} r \, \dd r \bigg) I_d  \\
        &=  2 \cH(t) + 2(e^{t - t_k} - t + t_k - 1)\cdot I_d. 
    \end{align*}
    Solving the above ordinary differential equation yields 
    \begin{align*}
        \cH(t) = \Big( \frac{1}{2} e^{2t - 2t_k} - 2 e^{t - t_k} + (t - t_k + 3 / 2) \Big) I_d, 
    \end{align*}
    thereby completing the proof of the first advertised identity. 

    As for the second  claimed identity, we make the observation that
    \begin{align*}
        \Cov \left[ \int_0^{t - t_k} e^{t - t_k - r}  (W_{t_k + r} - W_{t_k}) \dd r,\, W_t - W_{t_k} \right] = \bigg(\int_0^{t - t_k} e^{t - t_k - r} r \dd r \bigg)\, I_d = \big(e^{t - t_k} - t + t_k - 1\big) \cdot I_d. 
    \end{align*}
    With regards to the third claimed identity, we have 
    \begin{align*}
        \Cov\left[ \int_0^{t - t_k} e^{t - t_k - r} \dd W_{t_k + r} \right] = \bigg(\int_0^{t - t_k} e^{2(t - t_k - r)} \dd r\bigg)\, I_d = \frac{1}{2} \big(e^{2(t - t_k)} - 1\big)I_d. 
    \end{align*}
    To prove the last result, it is seen that  
    \begin{align*}
        & \Cov\left[ \int_0^{t - t_k} e^{t - t_k - r}  (W_{t_k + r} - W_{t_k}) \dd r,   \int_0^{t - t_k} e^{t - t_k - r} \dd W_{t_k + r} \right] =  \bigg( \int_0^{t - t_k} \int_0^{t - t_k} e^{2(t - t_k) - r - s} \mathbbm{1}_{s \leq r} \dd s \dd r \bigg) I_d \\
         &\qquad =\bigg(\int_0^{t - t_k} \int_0^r e^{2(t - t_k) - r - s} \dd s \dd r \bigg) I_d = \Big(\frac{1}{2} e^{2(t - t_k)} - e^{t - t_k} + \frac{1}{2}\Big)  I_d. 
    \end{align*}
    The proof is thus complete. 
\end{proof}

\begin{lemma}
\label{lemma:alpha-M}
    Consider any random object $M \in \RR^{d \times d}$ and any random variable $\alpha \in \RR$,  as well as a filtration $\cF$. Then, it holds that 
    \begin{align*}
        \EE\left[ \big\| \EE[\alpha M \mid \cF] \big\|_{\mathrm{F}}^4 \right] \leq 
        \sqrt{ \EE[\alpha^8] \cdot \EE\big[\|M\|_{\mathrm{F}}^8\big] }. 
    \end{align*}
\end{lemma}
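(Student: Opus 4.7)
The plan is to apply conditional Cauchy--Schwarz twice and conditional Jensen twice, taking advantage of the fact that the Frobenius norm squared is the sum of squared entries, so the bound decouples entrywise.

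First I would note that for each fixed $(i,j)$, the conditional Cauchy--Schwarz inequality gives
\[
\bigl(\EE[\alpha M_{ij} \mid \cF]\bigr)^{2} \leq \EE[\alpha^{2} \mid \cF]\cdot \EE[M_{ij}^{2}\mid\cF].
\]
Summing over $i,j\in[d]$ and using $\sum_{i,j} M_{ij}^2 = \|M\|_{\mathrm{F}}^2$ yields the key decoupling estimate
\[
\bigl\|\EE[\alpha M \mid \cF]\bigr\|_{\mathrm{F}}^{2} \leq \EE[\alpha^{2} \mid \cF]\cdot \EE\bigl[\|M\|_{\mathrm{F}}^{2}\mid \cF\bigr].
\]

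Next I would square both sides and invoke conditional Jensen's inequality to the fourth moment: $(\EE[\alpha^{2}\mid \cF])^{2}\le \EE[\alpha^{4}\mid \cF]$ and $(\EE[\|M\|_{\mathrm{F}}^{2}\mid \cF])^{2}\le \EE[\|M\|_{\mathrm{F}}^{4}\mid\cF]$, which gives
\[
\bigl\|\EE[\alpha M \mid \cF]\bigr\|_{\mathrm{F}}^{4} \leq \EE[\alpha^{4}\mid \cF]\cdot \EE\bigl[\|M\|_{\mathrm{F}}^{4}\mid \cF\bigr].
\]

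Finally I would take the unconditional expectation of this inequality and apply the ordinary Cauchy--Schwarz inequality in the outer $\EE$ to split the product, followed by one more round of Jensen on each factor to upgrade the fourth-moment conditional expectations to eighth moments. Concretely,
\begin{align*}
\EE\bigl[\bigl\|\EE[\alpha M \mid \cF]\bigr\|_{\mathrm{F}}^{4}\bigr]
&\le \EE\Bigl[\EE[\alpha^{4}\mid\cF]\cdot\EE\bigl[\|M\|_{\mathrm{F}}^{4}\mid\cF\bigr]\Bigr] \\
&\le \sqrt{\EE\bigl[(\EE[\alpha^{4}\mid\cF])^{2}\bigr]}\cdot \sqrt{\EE\bigl[(\EE[\|M\|_{\mathrm{F}}^{4}\mid\cF])^{2}\bigr]} \\
&\le \sqrt{\EE[\alpha^{8}]}\cdot \sqrt{\EE[\|M\|_{\mathrm{F}}^{8}]},
\end{align*}
which is exactly the claimed bound after combining the two square roots.

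There is no real obstacle here; the statement is a clean application of two standard tools (conditional Cauchy--Schwarz and Jensen). The only thing one must be careful about is the order of operations: one should do the entrywise conditional Cauchy--Schwarz \emph{first} (which is what makes $\|M\|_{\mathrm{F}}^{2}$ appear as a single scalar inside the conditional expectation), and only then pass to fourth and eighth moments via Jensen and unconditional Cauchy--Schwarz. No additional assumptions on $\alpha$, $M$, or $\cF$ are needed beyond the finiteness of the eighth moments on the right-hand side.
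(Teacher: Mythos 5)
Your proposal is correct and follows essentially the same route as the paper: conditional Cauchy--Schwarz to obtain $\|\EE[\alpha M\mid\cF]\|_{\mathrm{F}}^2\le \EE[\alpha^2\mid\cF]\,\EE[\|M\|_{\mathrm{F}}^2\mid\cF]$, then an outer Cauchy--Schwarz and Jensen to reach the eighth moments. The only cosmetic difference is that you insert an extra conditional Jensen step before taking the outer expectation, whereas the paper applies Jensen once at the end; both orderings are valid.
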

%
\begin{proof}[Proof of \cref{lemma:alpha-M}]
    It follows from Cauchy–Schwarz  that $\|\EE[\alpha M \mid \cF]\|_{\mathrm{F}}^2 \leq \EE[\alpha^2 \mid \cF] \EE[\|M\|_{\mathrm{F}}^2 \mid \cF]$, which in turn yields
    \begin{align*}
        \EE\left[ \big\| \EE[\alpha M \mid \cF] \big\|_{\mathrm{F}}^4 \right] \leq & \EE\left[\EE[\alpha^2 \mid \cF]^2 \EE\big[\|M\|_{\mathrm{F}}^2 \mid \cF \big]^2 \right] \leq  \EE\left[\EE[\alpha^2 \mid \cF]^4 \right]^{1/2} \EE\left[\EE[\|M\|_{\mathrm{F}}^2 \mid \cF]^4   \right]^{1/2} \\
        \leq & \sqrt{ \EE[\alpha^8] \EE\big[\|M\|_{\mathrm{F}}^8\big]}. 
    \end{align*}
\end{proof}

\begin{lemma}
\label{lemma:W-B-diff}
    For any $0 \leq t_0 < t_0 + t < T$, 
    the reverse process $(Y_t)_{0\leq t\leq T}$ (cf.~\cref{eq:reverse-OU}) obeys
    \begin{align*}
        & \EE\big[ \big\| Y_{t_0 + t} - Y_{t_0} - \sqrt{2}B_{t_0 + t} + \sqrt{2}B_{t_0} \big\|_2^2 \big] \lesssim d \sigma_{T - t_0 - t}^{-2}t^2, \\
        & \EE\big[ \big\| Y_{t_0 + t} - Y_{t_0} - \sqrt{2}B_{t_0 + t} + \sqrt{2}B_{t_0} \big\|_2^4 \big] \lesssim d^2 \sigma_{T - t_0 - t}^{-4}t^4. 
    \end{align*}
\end{lemma}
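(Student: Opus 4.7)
}

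The natural starting point is to use the reverse SDE \eqref{eq:reverse-OU}, which implies
\begin{align*}
Y_{t_0 + t} - Y_{t_0} - \sqrt{2}\,B_{t_0 + t} + \sqrt{2}\,B_{t_0}
= \int_{t_0}^{t_0 + t} \bigl[Y_s + 2 s(s, Y_s)\bigr] \dd s.
\end{align*}
Applying Jensen's inequality (or equivalently Cauchy--Schwarz on the time interval) then yields
\begin{align*}
\EE\big[\|\cdot\|_2^2\big] &\leq t \int_{t_0}^{t_0+t} \EE\bigl[\|Y_s + 2 s(s, Y_s)\|_2^2\bigr] \dd s, \qquad
\EE\big[\|\cdot\|_2^4\big] \leq t^3 \int_{t_0}^{t_0+t} \EE\bigl[\|Y_s + 2 s(s, Y_s)\|_2^4\bigr] \dd s,
\end{align*}
so it suffices to establish the pointwise bounds $\EE[\|Y_s + 2 s(s, Y_s)\|_2^2] \lesssim d/\sigma_{T-s}^2$ and $\EE[\|Y_s + 2 s(s, Y_s)\|_2^4] \lesssim d^2/\sigma_{T-s}^4$, and then use monotonicity of $\sigma_{T-s}^{-k}$ in $s$ to pull the maximum $\sigma_{T-t_0-t}^{-k}$ outside the integral.

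The first piece $\EE[\|Y_s\|_2^k]$ is controlled easily from $Y_s \overset{\mathrm{d}}{=} \lambda_{T-s} X_0 + \sigma_{T-s} Z$ together with Assumption~\ref{assumption:moments}, giving $\EE[\|Y_s\|_2^2] \lesssim d$ and $\EE[\|Y_s\|_2^4] \lesssim d^2$. The score term is the crux. The key observation is the Tweedie-type identity
\begin{align*}
s(s, Y_s) = \frac{\lambda_{T-s} \EE[X_0 \mid Y_s] - Y_s}{\sigma_{T-s}^2} = -\frac{1}{\sigma_{T-s}} \EE[Z \mid Y_s],
\end{align*}
which follows from taking the conditional expectation of $Y_s = \lambda_{T-s} X_0 + \sigma_{T-s} Z$ given $Y_s$. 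From this and Jensen, $\|s(s, Y_s)\|_2^2 \leq \sigma_{T-s}^{-2} \EE[\|Z\|_2^2 \mid Y_s]$ and $\|s(s, Y_s)\|_2^4 \leq \sigma_{T-s}^{-4} \EE[\|Z\|_2^4 \mid Y_s]$, so tower law yields
\begin{align*}
\EE\bigl[\|s(s,Y_s)\|_2^2\bigr] \leq \frac{d}{\sigma_{T-s}^2}, \qquad \EE\bigl[\|s(s,Y_s)\|_2^4\bigr] \lesssim \frac{d^2}{\sigma_{T-s}^4}.
\end{align*}
Combining with the bound on $\EE[\|Y_s\|_2^k]$ via $(a+b)^k \lesssim a^k + b^k$, and noting $\sigma_{T-s}^{-k} \geq 1$ throughout, delivers the required pointwise bounds.

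The only substantive step is the Tweedie identity producing the clean form $s = -\EE[Z \mid Y_s]/\sigma_{T-s}$; without it, one is tempted to bound $\|s\|_2$ crudely by $\sigma_{T-s}^{-2}(\|Y_s\|_2 + \lambda_{T-s}\sqrt{d})$, which would yield a $\sigma_{T-s}^{-8}$ scaling for the fourth moment and is too loose. The remaining integration step uses that $s \mapsto \sigma_{T-s}$ is decreasing on $[t_0, t_0+t]$, so $\int_{t_0}^{t_0+t} \sigma_{T-s}^{-k}\dd s \leq t \sigma_{T-t_0-t}^{-k}$, producing the claimed $t^2 d \sigma_{T-t_0-t}^{-2}$ and $t^4 d^2 \sigma_{T-t_0-t}^{-4}$ bounds.
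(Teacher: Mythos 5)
Your proof is correct, and it rests on the same two ingredients as the paper's: the observation that $Y_{t_0+t}-Y_{t_0}-\sqrt{2}(B_{t_0+t}-B_{t_0})$ is exactly the time-integral of the drift $Y_s+2s(s,Y_s)$, and the moment bounds $\EE[\|Y_s+2s(s,Y_s)\|_2^{2}]\lesssim d\sigma_{T-s}^{-2}$ and $\EE[\|Y_s+2s(s,Y_s)\|_2^{4}]\lesssim d^2\sigma_{T-s}^{-4}$. The only difference is in the bookkeeping: the paper sets $\cD(t)=\EE[\|\cdot\|_2^2]$ (and $\overline\cD$ for the fourth moment), differentiates, applies Cauchy--Schwarz/H\"older to get $|\cD'(t)|\lesssim \sqrt{d}\,\sigma_{T-t_0-t}^{-1}\cD(t)^{1/2}$ and $|\overline\cD'(t)|\lesssim \sqrt{d}\,\sigma_{T-t_0-t}^{-1}\overline\cD(t)^{3/4}$, and invokes the ODE comparison theorem, whereas you apply Jensen directly to the integral representation and use monotonicity of $s\mapsto\sigma_{T-s}^{-k}$ to pull the constant out. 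These are equivalent in content; your version is marginally more elementary since it avoids the comparison theorem. One small point in your favor: the paper obtains the second-moment drift bound by citing \citet[Lemma~6]{benton2024nearly} and handles the fourth moment with a bare ``similarly,'' while your Tweedie identity $s(s,Y_s)=-\sigma_{T-s}^{-1}\EE[Z\mid Y_s]$ plus conditional Jensen and the tower rule gives both moments explicitly (using $\EE[\|Z\|_2^4]\lesssim d^2$), which is exactly the right way to avoid the lossy $\sigma_{T-s}^{-8}$ scaling you flag.
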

\begin{proof}[Proof of \cref{lemma:W-B-diff}]
    For notational convenience, we define, for any $t \geq 0$,  $\cD(t) \coloneqq \EE\big[ \| Y_{t_0 + t} - Y_{t_0} - \sqrt{2}B_{t_0 + t} + \sqrt{2}B_{t_0} \|_2^2 \big]$, which clearly obeys $\cD(0) = 0$. 
     It then follows that 
    \begin{align*}
        |\cD'(t)| &=  2 \Big| \EE\Big[ \big\langle Y_{t_0 + t} + 2s(t_0 + t, Y_{t_0 + t}), Y_{t_0 + t} - Y_{t_0} - \sqrt{2} B_{t_0 + t} + \sqrt{2} B_{t_0} \big\rangle \Big] \Big| \\
        &\leq  2 \EE\big[\big\|Y_{t_0 + t} + 2s(t_0 + t, Y_{t_0 + t})\big\|_2^2\big]^{1/2} \cD(t)^{1/2} \lesssim \sqrt{d}\, \sigma_{T - t_0 - t}^{-1} \cD(t)^{1/2}.
    \end{align*}
    Here, the first inequality arises from the Cauchy-Schwarz inequality, and the second inequality holds since
\begin{align*}
\EE\big[\big\| Y_{t_{0}+t}+2s(t_{0}+t,Y_{t_{0}+t})\big\|_{2}^{2}\big] & \lesssim\EE\big[\big\| Y_{t_{0}+t}\big\|_{2}^{2}\big]+\EE\big[\big\| s(t_{0}+t,Y_{t_{0}+t})\big\|_{2}^{2}\big]=\EE\big[\big\| X_{T-t_{0}-t}\big\|_{2}^{2}\big]+\EE\big[\big\| s(t_{0}+t,X_{T-t_{0}-t})\big\|_{2}^{2}\big]\\
 & \lesssim d+d\sigma_{T-t_{0}-t}^{-2}\asymp d\sigma_{T-t_{0}-t}^{-2}, 
\end{align*}
where the last line invokes \citet[Lemma~6]{benton2024nearly} as well as Assumption~\ref{assumption:moments} with $R=\sqrt{d}$. 
    In view of the ODE comparison theorem, we see that $\cD(t) \lesssim d \sigma_{T - t_0 - t}^{-2}t^2 $, 
    thus establishing the first result of this lemma.

    Similarly, we define $\overline \cD(t) \coloneqq \EE[ \| Y_{t_0 + t} - Y_{t_0} - \sqrt{2}B_{t_0 + t} + \sqrt{2}B_{t_0} \|_2^4]$ with $\overline \cD(0) = 0$. Taking the derivative of $\overline\cD$, we see from H\"older's inequality that 
    \begin{align*}
         |\overline\cD'(t)| 
        &=  4 \Big| \EE\Big[ \big\langle Y_{t_0 + t} + 2s(t_0 + t, Y_{t_0 + t}), Y_{t_0 + t} - Y_{t_0} - \sqrt{2} B_{t_0 + t} + \sqrt{2} B_{t_0} \big\rangle \big\| Y_{t_0 + t} - Y_{t_0} - \sqrt{2}B_{t_0 + t} + \sqrt{2}B_{t_0} \big\|_2^2 \Big] \Big| \\
        &\leq  4 \EE\big[ \big\|Y_{t_0 + t} + 2 s(t_0 + t, Y_{t_0 + t})\big\|_2^4 \big]^{1/4} \EE\big[ \big\| Y_{t_0 + t} - Y_{t_0} - \sqrt{2}B_{t_0 + t} + \sqrt{2}B_{t_0} \big\|_2^4\big]^{3/4} \\
        &=  4 \EE\big[ \big\|Y_{t_0 + t} + 2 s(t_0 + t, Y_{t_0 + t})\big\|_2^4 \big]^{1/4} \overline \cD(t)^{3/4} \\
        &\lesssim  \sqrt{d} \sigma_{T - t_0 - t}^{-1} \overline \cD(t)^{3/4}, 
    \end{align*}
    which in turn implies that $\overline\cD(t) \lesssim d^2 \sigma_{T - t_0 - t}^{-4} t^4$. This concludes the proof. 
\end{proof}

%

\section{Properties of the score function}

In this section, we gather several useful properties of the ground-truth score function $\{s(t,\cdot)\}$.  
Recall that the true score functions admit the following expression 
\begin{align}
\label{eq:parametric-form}
    s(T - t, x) = \frac{\lambda_t m(t, x) - x}{\sigma_t^2}, \qquad \text{with }
    m(t, x) = \mathop{\EE}\limits_{(\theta, g) \sim q_0 \otimes \cN(0, I_d)} \big[\theta \mid \lambda_t \theta + \sigma_t g = x \big],
\end{align}
where we recall that $\lambda_t = e^{-t}$ and $\sigma_t = \sqrt{1 - e^{-2t}}$. 
In addition, we find it convenient to define the function
    \begin{align}
    \label{eq:B2-cF}
        f_0(\theta, x, t) \coloneqq  \frac{\lambda_t^2}{\sigma_t^4} \|\theta\|_2^2 - \frac{\lambda_t + \lambda_t^3}{\sigma_t^4} \langle x, \theta \rangle. 
    \end{align}

\begin{lemma}
\label{lemma:partial-t-score}
Recall that $q_t$ is the distribution of $\lambda_t \theta + \sigma_t g$ for $(\theta, g) \sim q_0 \otimes \cN(0, I_d)$. 
     Under Assumption \ref{assumption:moments},   
    \begin{align*}
        \partial_t \nabla_x \log q_t (x) = - \frac{\lambda_t(2 - \sigma_t^2)}{\sigma_t^4} m(t, x) + \frac{2 \lambda_t^2}{\sigma_t^4} x  - \frac{\lambda_t^2(2 - \sigma_t^2)}{\sigma_t^6} C_t(x) x + \frac{\lambda_t^3}{\sigma_t^6} v_t(x)
    \end{align*}
    holds for any $x \in \RR^d$ and $t \in (0, T]$, 
    where 
    \begin{align*}
         m(t, x) &= \EE[\theta \mid \lambda_t \theta + \sigma_t g = x] \in \RR^d, \\
         C_t(x) &= \Cov[\theta \mid \lambda_t \theta + \sigma_t g = x] \in \RR^{d \times d}, \\
         v_t(x) &= \EE\big[\|\theta\|_2^2 \big(\theta - m(t, x)\big) \mid \lambda_t \theta + \sigma_t g = x\big] \in \RR^d. 
    \end{align*}
\end{lemma}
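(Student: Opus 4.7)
The plan is to start from Tweedie's formula \eqref{eq:parametric-form}, namely $s(t,x) = \nabla_x\log q_t(x) = (\lambda_t m(t,x) - x)/\sigma_t^2$, and differentiate both sides in $t$. Using $\dot\lambda_t=-\lambda_t$ and $\partial_t\sigma_t^2 = 2\lambda_t^2$ (so $\partial_t(1/\sigma_t^2) = -2\lambda_t^2/\sigma_t^4$), the quotient rule yields
\begin{align*}
\partial_t\nabla_x\log q_t(x)
= -\frac{\lambda_t}{\sigma_t^2}\,m(t,x) + \frac{\lambda_t}{\sigma_t^2}\,\partial_t m(t,x) - \frac{2\lambda_t^2}{\sigma_t^4}\bigl(\lambda_t m(t,x) - x\bigr).
\end{align*}
Collecting the $m$--terms via $\lambda_t\sigma_t^2 + 2\lambda_t^3 = \lambda_t(\sigma_t^2 + 2\lambda_t^2) = \lambda_t(1+\lambda_t^2) = \lambda_t(2-\sigma_t^2)$ already matches the first two terms of the claimed identity, so the whole task reduces to computing $\partial_t m(t,x)$.

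Next I would derive $\partial_t m(t,x)$ through the Bayesian posterior. Writing $w_t(\theta\mid x)\propto p_t(x\mid\theta)\,q_0(\theta)$ with $p_t(x\mid\theta) = (2\pi\sigma_t^2)^{-d/2}\exp(-\|x-\lambda_t\theta\|^2/(2\sigma_t^2))$, one has $m(t,x)=\int\theta\, w_t(\theta\mid x)\,\mathrm{d}\theta$, and differentiation under the integral gives
\begin{align*}
\partial_t m(t,x) = \int \theta\,\bigl(\partial_t \log w_t(\theta\mid x)\bigr)\, w_t(\theta\mid x)\,\mathrm{d}\theta.
\end{align*}
A direct computation of $\partial_t\log p_t(x\mid\theta)$, expanded as a quadratic in $\theta$ and simplified via the identity $\lambda_t^2+\sigma_t^2=1$, produces
\begin{align*}
\partial_t\log p_t(x\mid\theta) = A(t,x) + f_0(\theta,x,t),\qquad A(t,x) = -\frac{d\lambda_t^2}{\sigma_t^2} + \frac{\lambda_t^2\|x\|_2^2}{\sigma_t^4},
\end{align*}
where $f_0$ is exactly the function defined in \eqref{eq:B2-cF}. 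Because $q_t(x)=\int p_t(x\mid\theta)q_0(\theta)\,\mathrm{d}\theta$, differentiating under the integral gives $\partial_t\log q_t(x) = A(t,x) + \EE[f_0(\theta,x,t)\mid x]$, and subtracting yields $\partial_t\log w_t(\theta\mid x) = f_0(\theta,x,t) - \EE[f_0(\theta,x,t)\mid x]$. Hence
\begin{align*}
\partial_t m(t,x) = \mathrm{Cov}\bigl[\theta,\, f_0(\theta,x,t)\bigm| x\bigr].
\end{align*}

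The final step is to expand this covariance term by term. By linearity,
\begin{align*}
\partial_t m(t,x) = \frac{\lambda_t^2}{\sigma_t^4}\,\mathrm{Cov}\bigl[\theta,\,\|\theta\|_2^2\bigm| x\bigr] - \frac{\lambda_t+\lambda_t^3}{\sigma_t^4}\,\mathrm{Cov}\bigl[\theta,\,\langle x,\theta\rangle\bigm| x\bigr].
\end{align*}
The first covariance equals $\EE[\|\theta\|_2^2(\theta-m(t,x))\mid x] = v_t(x)$ by definition, and the second equals $C_t(x)\,x$. Substituting back into the formula for $\partial_t s$ from the first paragraph and invoking once more $1+\lambda_t^2 = 2-\sigma_t^2$ to match coefficients recovers the claimed identity.

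\paragraph{Main obstacle.}
The only delicate point is the algebraic simplification of $\partial_t\log p_t(x\mid\theta)$: one must expand $\|x-\lambda_t\theta\|_2^2/(2\sigma_t^2)$, collect coefficients of $\|\theta\|_2^2$ and $\langle x,\theta\rangle$, and crucially use $\lambda_t^2+\sigma_t^2=1$ twice so that the $\theta$-dependent part is precisely $f_0(\theta,x,t)$ without any residual lower-order terms. Everything downstream is bookkeeping, with the covariance identification $\mathrm{Cov}(\theta,\langle x,\theta\rangle\mid x)=C_t(x)x$ and $\mathrm{Cov}(\theta,\|\theta\|_2^2\mid x)=v_t(x)$ being immediate from the definitions.
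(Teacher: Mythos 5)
Your proposal is correct and follows essentially the same route as the paper: both differentiate Tweedie's formula $\nabla_x\log q_t(x)=(\lambda_t m(t,x)-x)/\sigma_t^2$ in $t$ and reduce the problem to computing $\partial_t m(t,x)$ by differentiating the posterior under the integral sign, arriving at $\partial_t m(t,x)=\tfrac{\lambda_t^2}{\sigma_t^4}v_t(x)-\tfrac{\lambda_t(2-\sigma_t^2)}{\sigma_t^4}C_t(x)x$ (your covariance form $\mathrm{Cov}[\theta,f_0(\theta,x,t)\mid x]$ is exactly the paper's display, obtained via the Gaussian likelihood rather than the equivalent exponential-family parameterization). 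The only cosmetic difference is that the paper explicitly invokes Fubini's theorem under the bounded-moment assumption to justify the interchange of differentiation and integration, which you should state as well.
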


\begin{proof}[Proof of \cref{lemma:partial-t-score}]

Recall from \cref{eq:parametric-form} that
\begin{align*}
    \nabla_x \log q_t(x) = s(T - t, x) = \frac{\lambda_t m(t, x) - x}{\sigma_t^2}, 
\end{align*}
where $m(t, x) = \EE[\theta \mid \lambda_t \theta + \sigma_t g = x]$ for $(\theta, g) \sim q_0 \otimes \cN(0, I_d)$. 
To begin with, it is easily seen that
\begin{align}
\label{eq:partial-t-constants}
\partial_t(\sigma_t^{-2}) = -2 \lambda_t^2 \sigma_t^{-4}, \qquad \partial_t(\lambda_t \sigma_t^{-2}) = - \lambda_t(2 - \sigma_t^2) \sigma_t^{-4}. 
\end{align}
Next, we turn to $\partial_t m(t, x)$,  
and observe that 
\begin{align*}
    m(t, x) = \frac{\int \theta \exp (\lambda_t \sigma_t^{-2} \langle x, \theta \rangle - \lambda_t^2 \sigma_t^{-2} \|\theta\|_2^2 / 2) q_0 (\dd \theta)}{\int \exp (\lambda_t \sigma_t^{-2} \langle x, \theta \rangle - \lambda_t^2 \sigma_t^{-2} \|\theta\|_2^2 / 2) q_0 (\dd \theta)}. 
\end{align*}
Given our bounded fourth-moment assumption on $q_0$, we can readily apply Fubini's theorem and exchange the order of differentiation and integration, which leads to the following equation
\begin{align}
\label{eq:partial-t-m-proof}
    \partial_t m(t, x) = & \frac{\lambda_t^2}{\sigma_t^4} \EE\left[ \|\theta\|_2^2 \big(\theta - m(t, x)\big) \mid \lambda_t \theta + \sigma_t g = x \right] - \frac{\lambda_t (2 - \sigma_t^2)}{\sigma_t^4} \EE\big[  \langle x, \theta \rangle \big(\theta - m(t, x)\big) \mid \lambda_t \theta + \sigma_t g = x \big] \nonumber \\
    = & \frac{\lambda_t^2}{\sigma_t^4} v_t(x) - \frac{\lambda_t(2 - \sigma_t^2)}{\sigma_t^4} C_t(x) x. 
\end{align}
The proof can thus be completed by putting  together \cref{eq:partial-t-constants,eq:partial-t-m-proof}. 
\end{proof}

\begin{lemma}
\label{lemma:score-properties}
    Suppose that the target distribution $q_0$ has finite fourth moment. Then, for all $t \in (0, T]$ and $x \in \RR^d$, the following identities hold:  
    \begin{enumerate}
        \item $\nabla_x s(T - t, x) = - \sigma_t^{-2} I_d + \lambda_t^2 \sigma_t^{-4} \Cov[\theta \mid \lambda_t \theta + \sigma_t g = x]$.
        
        \item $\nabla_x^2 s(T - t, x) = \lambda_t^3 \sigma_t^{-6} \EE\big[\big(\theta - m(t, x) \big) \otimes \big(\theta - m(t, x)\big) \otimes \big(\theta - m(t, x)\big) \mid \lambda_t \theta + \sigma_t g = x\big]$. 
        
        \item $\partial_t s(T - t, x) = - (\lambda_t + \lambda_t^3) \sigma_t^{-4} m(t, x) + 2 \lambda_t^2 \sigma_t^{-4} x + \lambda_t \sigma_t^{-2} \EE\big[(\theta - m(t, x))f_0(\theta, x, t) \mid \lambda_t \theta + \sigma_t g = x\big]$, with $f_0(\theta, x, t)$ defined in \cref{eq:B2-cF}. This expression can also be equivalently rewritten as
        \begin{align*}
            \partial_t s(T - t, x) =& - \frac{\lambda_t}{\sigma_t^2} \EE\big[\theta \mid \lambda_t \theta + \sigma_t g = x\big] + \frac{2\lambda_t^2}{\sigma_t^3} \EE\big[g \mid \lambda_t \theta + \sigma_t g = x\big] \\
            & + \frac{\lambda_t}{\sigma_t^2} \EE\big[\big(\theta - m(t, x)\big)f_0(\theta, x, t) \mid \lambda_t \theta + \sigma_t g = x \big]. 
        \end{align*}
    \end{enumerate}
\end{lemma}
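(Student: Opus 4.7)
The starting point is the Tweedie-type representation $s(T-t, x) = \nabla_x \log q_t(x) = \sigma_t^{-2}\bigl(\lambda_t m(t,x) - x\bigr)$ that already appears in \eqref{eq:parametric-form}, together with the classical identity that spatial derivatives of the log-density amount to centered conditional moments of $\theta$ given $\lambda_t\theta + \sigma_t g = x$. Concretely, differentiating the explicit formula
\[
    m(t, x) = \frac{\int \theta\, e^{\lambda_t \sigma_t^{-2} \langle x, \theta\rangle - \lambda_t^2 \sigma_t^{-2}\|\theta\|_2^2/2} \, q_0(\dd\theta)}{\int e^{\lambda_t \sigma_t^{-2} \langle x, \theta\rangle - \lambda_t^2 \sigma_t^{-2}\|\theta\|_2^2/2}\, q_0(\dd\theta)}
\]
under the integral sign (justified by the bounded-support Assumption~\ref{assumption:moments}) yields $\nabla_x m(t,x) = \lambda_t \sigma_t^{-2}\, C_t(x)$, where $C_t(x) = \Cov[\theta \mid \lambda_t\theta + \sigma_t g = x]$. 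This is the key algebraic input; everything else is bookkeeping.

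For part (1), I apply $\nabla_x$ to $s(T-t,x) = \sigma_t^{-2}(\lambda_t m(t,x) - x)$ and substitute the above identity to obtain $\nabla_x s(T-t,x) = -\sigma_t^{-2} I_d + \lambda_t^2 \sigma_t^{-4} C_t(x)$. For part (2), I differentiate once more. The same log-derivative trick as above, applied to the second-moment analogue of $m(t,x)$, gives $\nabla_x C_t(x) = \lambda_t \sigma_t^{-2}\, \EE\bigl[(\theta - m(t,x))^{\otimes 3} \mid \lambda_t\theta + \sigma_t g = x\bigr]$, and the result follows after multiplying by the prefactor $\lambda_t^2 \sigma_t^{-4}$ from part (1).

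For part (3), I invoke Lemma~\ref{lemma:partial-t-score} and use $s(T-t,x) = \nabla_x \log q_t(x)$ to read off
\[
    \partial_t s(T-t,x) = -\frac{\lambda_t(2 - \sigma_t^2)}{\sigma_t^4} m(t,x) + \frac{2\lambda_t^2}{\sigma_t^4}x - \frac{\lambda_t^2(2 - \sigma_t^2)}{\sigma_t^6}\, C_t(x) x + \frac{\lambda_t^3}{\sigma_t^6}\, v_t(x).
\]
The identity $2 - \sigma_t^2 = 1 + \lambda_t^2$ collapses the first coefficient to $\lambda_t + \lambda_t^3$. For the two remaining terms, I expand the target expression $\frac{\lambda_t}{\sigma_t^2}\EE[(\theta - m(t,x)) f_0(\theta, x, t) \mid \lambda_t\theta + \sigma_t g = x]$ using the definition \eqref{eq:B2-cF} of $f_0$. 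The $\|\theta\|_2^2$-piece produces exactly $\frac{\lambda_t^3}{\sigma_t^6} v_t(x)$, while the $\langle x,\theta\rangle$-piece reduces to $-\frac{\lambda_t(\lambda_t+\lambda_t^3)}{\sigma_t^6} C_t(x) x$ after recentering by $m(t,x)$ (this uses $\EE[(\theta - m)\langle x, m\rangle \mid \cdot] = 0$, so only $\Cov[\theta, \langle x, \theta\rangle \mid \cdot] = C_t(x)x$ survives). Matching coefficients gives the first equality in part (3). Finally, the alternative formulation follows by substituting the identity $\EE[g \mid \lambda_t\theta + \sigma_t g = x] = \sigma_t^{-1}(x - \lambda_t m(t,x))$ and simplifying the algebraic combination $\lambda_t\sigma_t^2 + 2\lambda_t^3 = \lambda_t + \lambda_t^3$.

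\textbf{Main obstacle.} The proof is essentially mechanical once the conditional-moment differentiation rules are in hand; the only mildly delicate point is making sure the centered-moment manipulations in part (3) are done consistently (so that terms like $\EE[(\theta - m)\langle x, m\rangle \mid \cdot]$ vanish and $\Cov[\theta,\langle x,\theta\rangle \mid \cdot]$ is correctly identified with $C_t(x)x$). Tensor-index bookkeeping for the third-order symmetric moment in part (2) should be handled with care but presents no genuine difficulty.
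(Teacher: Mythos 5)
Your proposal is correct and follows essentially the same route as the paper: part (1) via differentiating the Tweedie representation (the paper cites \citet{benton2024nearly} for this, while you re-derive $\nabla_x m = \lambda_t\sigma_t^{-2}C_t$ directly, which is equivalent), part (2) by differentiating the conditional covariance under the integral sign, and part (3) by combining Lemma~\ref{lemma:partial-t-score} with the expansion of $f_0$ and the identity $x = \lambda_t m(t,x) + \sigma_t\EE[g\mid\cdot]$. The only cosmetic discrepancy is that you invoke the bounded-support assumption to justify differentiation under the integral, whereas the lemma (and the paper's proof) only assumes a finite fourth moment; the argument goes through under the weaker hypothesis.
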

\begin{proof}[Proof of \cref{lemma:score-properties}]
    The first identity has been established in, e.g., \citet[Lemma 5]{benton2024nearly}. 

    To establish the second identity claimed in the lemma, we observe that: from our assumption that $q_0$ has bounded fourth moment,  we can apply Fubini's theorem and exchange the order of differentiation and integration to obtain
    \begin{align*}
        \nabla_x^2 s(T - t, x) &=  \lambda_t^2 \sigma_t^{-4} \nabla_x \Cov\big[\theta \mid \lambda_t \theta + \sigma_t g = x\big] \\ 
        &=  \lambda_t^3 \sigma_t^{-6} \EE\big[\big(\theta - m(t, x)\big) \otimes \big(\theta - m(t, x)\big) \otimes \big(\theta - m(t, x)\big) \mid \lambda_t \theta + \sigma_t g = x\big].
    \end{align*}
    This proves the second point of the lemma. 

    Finally, we prove the third identity claimed in the lemma. Invoking  \cref{lemma:partial-t-score} gives 
    %
    \begin{align*}
        \partial_t s(T - t, x) &=  \partial_t \nabla_x \log q_t(x) = - \frac{\lambda_t(2 - \sigma_t^2)}{\sigma_t^4} m(t, x) + \frac{2 \lambda_t^2}{\sigma_t^4} x  - \frac{\lambda_t^2(2 - \sigma_t^2)}{\sigma_t^6} C_t(x) x + \frac{\lambda_t^3}{\sigma_t^6} v_t(x) \\
        &=  - \frac{\lambda_t + \lambda_t^3}{\sigma_t^4} m(t, x) + \frac{2\lambda_t^2}{\sigma_t^4} x + \frac{\lambda_t}{\sigma_t^2} \EE\big[\big(\theta - m(t, x)\big)f_0(\theta, x, t) \mid \lambda_t \theta + \sigma_t g = x\big],
    \end{align*}
    where 
    $C_{t}(x) \coloneqq \Cov[\theta\mid\lambda_{t}\theta+\sigma_{t}g=x]$ and 
    $v_{t}(x)\coloneqq\EE\big[\|\theta\|_{2}^{2}\big(\theta-m(t,x)\big)\mid\lambda_{t}\theta+\sigma_{t}g=x\big]$ with 
     $(\theta, g) \sim q_0 \otimes \cN(0, I_d)$. 
    The proof is complete by taking advantage of the identity $x = \lambda_t m(t, x) + \sigma_t \EE[g \mid \lambda_t \theta + \sigma_t g = x]$. 
\end{proof}

\begin{lemma}
\label{lemma:partial-t-grad-s}
    Assume that the target distribution $q_0$ has finite fourth moment. Then, for all $t \in (0, T]$ and $x \in \RR^d$, it holds that 
    \begin{align*}
        & \partial_t \nabla_x s(T - t, x) = \frac{2\lambda_t^2}{\sigma_t^4} I_d + \Big( \frac{2\lambda_t^2}{\sigma_t^4} - \frac{4\lambda_t^2}{\sigma_t^6} \Big) \Cov[\theta \mid \lambda_t \theta + \sigma_t g = x] \\
        & \qquad + \frac{\lambda_t^2}{\sigma_t^4} \EE\left[ \big(\theta - m(t, x)\big) \big(\theta - m(t, x) \big)^{\top} \big( f_0(\theta, x, t) - \EE\big [ f_0(\theta, x, t) \mid \lambda_t \theta + \sigma_t g = x\big] \big)  \mid \lambda_t \theta + \sigma_t g = x \right]. 
    \end{align*}
\end{lemma}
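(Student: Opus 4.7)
The plan is to differentiate the closed-form expression for $\nabla_x s(T-t,x)$ already established in Lemma~\ref{lemma:score-properties}, thereby reducing the problem to computing $\partial_t C_t(x)$ with $C_t(x) \coloneqq \Cov[\theta \mid \lambda_t \theta + \sigma_t g = x]$. Starting from the identity
\begin{align*}
\nabla_x s(T-t, x) = -\sigma_t^{-2}\, I_d + \lambda_t^2 \sigma_t^{-4}\, C_t(x)
\end{align*}
and applying the product rule in $t$, the first two terms of the stated formula will come directly from the scalar derivatives $\partial_t(\sigma_t^{-2})$ and $\partial_t(\lambda_t^2 \sigma_t^{-4})$, which are elementary given $\lambda_t = e^{-t}$ and $\sigma_t^2 = 1 - e^{-2t}$.

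The substantive step is computing $\partial_t C_t(x)$. The conditional law $p_t(\theta \mid x)$ of $\theta$ given $\lambda_t \theta + \sigma_t g = x$ is proportional to $\exp(h(\theta, x, t))\, q_0(\mathrm{d}\theta)$ with $h(\theta, x, t) = \lambda_t \sigma_t^{-2} \langle x, \theta\rangle - \tfrac{1}{2}\lambda_t^2 \sigma_t^{-2} \|\theta\|_2^2$, so a direct calculation using $\lambda_t + \lambda_t^3 = \lambda_t(2-\sigma_t^2)$ (which was already invoked in the proof of Lemma~\ref{lemma:partial-t-score}) gives $\partial_t h(\theta, x, t) = f_0(\theta, x, t)$, and hence $\partial_t \log p_t(\theta \mid x) = f_0(\theta, x, t) - \EE[f_0(\theta, x, t) \mid x]$. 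I would then write $C_t(x) = \int (\theta - m(t,x))(\theta - m(t,x))^\top p_t(\theta \mid x)\, \mathrm{d}\theta$ and differentiate under the integral sign, justified by the finite fourth moment of $q_0$ as in Lemma~\ref{lemma:partial-t-score}. The derivative splits into two contributions. The terms produced by $\partial_t m(t,x)$ hitting the factors $(\theta - m)$ take the form $-\partial_t m \cdot \EE[(\theta - m)^\top \mid x] - \EE[\theta - m \mid x](\partial_t m)^\top$, and both vanish since $\EE[\theta - m(t,x) \mid x] = 0$. The remaining contribution from $\partial_t p_t(\theta\mid x)$ gives exactly
\begin{align*}
\partial_t C_t(x) = \EE\Big[\big(\theta - m(t,x)\big)\big(\theta - m(t,x)\big)^\top \big(f_0(\theta,x,t) - \EE[f_0(\theta,x,t) \mid \lambda_t\theta+\sigma_tg=x]\big) \,\Big|\, \lambda_t\theta+\sigma_tg=x\Big],
\end{align*}
and multiplying by $\lambda_t^2 \sigma_t^{-4}$ produces the third term in the claimed identity.

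The main technical obstacle is ensuring that the cancellation of the $\partial_t m$ terms goes through cleanly, so that the centered form $(f_0 - \EE[f_0 \mid x])$ emerges rather than a more cumbersome combination of a $\theta\theta^\top$-covariance plus $m(\partial_t m)^\top$ corrections. The fastest route is to differentiate the weighted-integral form of $C_t(x)$ directly, rather than first decomposing $C_t(x) = \EE[\theta\theta^\top \mid x] - m m^\top$ and handling each piece separately; with this choice the cancellation is forced by $\EE[\theta - m \mid x] = 0$ in a single line. The interchange of $\partial_t$ and integration is justified uniformly in $x$ by the bounded-support Assumption~\ref{assumption:moments} together with the Gaussian tails of $p_t(\theta\mid x)$ in $\theta$, mirroring the argument that was already used to produce Eq.~\eqref{eq:partial-t-m-proof} in the proof of Lemma~\ref{lemma:partial-t-score}.
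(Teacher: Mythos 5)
Your proof is correct, and it takes a genuinely different route from the paper's. The paper starts from the formula for $\partial_t s(T-t,x)$ (point 3 of Lemma~\ref{lemma:score-properties}) and applies $\nabla_x$ to it, implicitly invoking equality of mixed partials to identify $\nabla_x\partial_t s$ with $\partial_t\nabla_x s$. You instead start from point 1 of that lemma, $\nabla_x s(T-t,x)=-\sigma_t^{-2}I_d+\lambda_t^2\sigma_t^{-4}C_t(x)$, and differentiate in $t$ directly, which both matches the order of differentiation in the statement literally and reduces everything to $\partial_t C_t(x)$. Your key computation is sound: $\partial_t h(\theta,x,t)=f_0(\theta,x,t)$ follows from $\partial_t(\lambda_t\sigma_t^{-2})=-(\lambda_t+\lambda_t^3)\sigma_t^{-4}$ and $\partial_t(\lambda_t^2\sigma_t^{-2})=-2\lambda_t^2\sigma_t^{-4}$, so the tilted conditional density satisfies $\partial_t\log p_t(\theta\mid x)=f_0-\EE[f_0\mid x]$, the $\partial_t m$ contributions cancel against $\EE[\theta-m(t,x)\mid x]=0$, and the scalar coefficient $\partial_t(\lambda_t^2\sigma_t^{-4})=-2\lambda_t^2\sigma_t^{-4}-4\lambda_t^4\sigma_t^{-6}=2\lambda_t^2\sigma_t^{-4}-4\lambda_t^2\sigma_t^{-6}$ (using $\sigma_t^2=1-\lambda_t^2$) reproduces the stated coefficient of the covariance term. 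What your route buys is that it avoids an explicit $x$-gradient of the conditional-expectation expression in point 3 (which requires differentiating $m$, $f_0$, and the conditional density in $x$); what the paper's route buys is reuse of an already-derived formula with no new measure-theoretic input beyond Fubini. One minor slip: you appeal at the end to the bounded-support Assumption~\ref{assumption:moments}, but the lemma only assumes finite fourth moment; that weaker hypothesis (which you also cite earlier) is what actually justifies the interchange, since the integrand $\theta\theta^{\top}f_0$ is quartic in $\theta$.
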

\begin{proof}[Proof of \cref{lemma:partial-t-grad-s}]
    Since $q_0$ has bounded fourth moment, by Fubini's theorem we can exchange the order of differentiation and integration. In addition, by the third point of \cref{lemma:score-properties}, we know that $$\partial_t s(T - t, x) = - (\lambda_t + \lambda_t^3) \sigma_t^{-4} m(t, x) + 2 \lambda_t^2 \sigma_t^{-4} x + \lambda_t \sigma_t^{-2} \EE\big[\big(\theta - m(t, x)\big)f_0(\theta, x, t) \mid \lambda_t \theta + \sigma_t g = x\big].$$ 
    As a consequence, 
    we can compute
    \begin{align*}
        & \nabla_x \partial_t s(T - t, x) =   \frac{2\lambda_t^2}{\sigma_t^4} I_d - \frac{2\lambda_t^2 + 2\lambda_t^4}{\sigma_t^6} \Cov[\theta \mid \lambda_t \theta + \sigma_t g = x] \\
        & + \frac{\lambda_t^2}{\sigma_t^4} \EE\big[\big(\theta - m(t, x)\big) \big(\theta - m(t, x)\big)^{\top} \big(f_0(\theta, x, t) - \EE[f_0(\theta, x, t) \mid \lambda_t \theta + \sigma_t g = x ]\big) \mid \lambda_t \theta + \sigma_t g = x\big]
    \end{align*}
    as claimed. 
\end{proof}

\begin{lemma}
\label{lemma:partial-t-hessian-s}
    Assume that the target distribution $q_0$ has finite fourth moment. Then, for all $t \in (0, T]$ and $x \in \RR^d$, it holds that
    \begin{align*}
        \partial_t \nabla_x^2 s(T - t, x) =& -\frac{3(\lambda_t^3 + \lambda_t^5)}{\sigma_t^8} \EE\big[\big(\theta - m(t, x)\big)^{\otimes 3} \mid \lambda_t \theta + \sigma_t g = x\big] - \frac{\lambda_t^3}{2\sigma_t^6} \sum_{\pi \in \mathrm{perm}(3)} M_{\pi} \\
        & + \frac{\lambda_t^3}{\sigma_t^6} \EE\left[ \big(\theta - m(t, x)\big)^{\otimes 3}\big(f_0(\theta, x, t) - \EE[f_0(\theta, x, t) \mid \lambda_t \theta + \sigma_t g = x]\big) \mid \lambda_t \theta + \sigma_t g = x \right],
    \end{align*}
    where for $\pi \in \mathrm{perm}(3)$ and $i_1, i_2, i_3 \in [d]$, we take
    \begin{align*}
        (M_{\pi})_{i_1 i_2 i_3} &=  \EE[(\theta_{i_{\pi(1)}} - m_{i_{\pi(1)}}) (\theta_{i_{\pi(2)}} - m_{i_{\pi(2)}}) \mid \lambda_t \theta + \sigma_t g = x] \\
        &~~\times  \EE[(\theta_{i_{\pi(3)}} - m_{i_{\pi(3)}})(f_0(\theta, x, t) - m_{f_0}) \mid \lambda_t \theta + \sigma_t g = x].
    \end{align*}
    In the above display, 
    \begin{align*}
        m_i = \EE[\theta_i \mid \lambda_t \theta + \sigma_t g = x] \qquad \text{and}\qquad
        m_{f_0} = \EE[f_0(\theta, x, t) \mid \lambda_t \theta + \sigma_t g = x]. 
    \end{align*}
\end{lemma}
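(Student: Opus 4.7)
}

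The plan is to start from the closed-form expression
\[
\nabla_x^2 s(T-t,x) \;=\; \frac{\lambda_t^3}{\sigma_t^6}\,\mu_3(t,x),\qquad \mu_3(t,x)\coloneqq \EE\!\left[\eta^{\otimes 3}\,\big|\,\lambda_t\theta+\sigma_t g=x\right],
\]
derived in Lemma~\ref{lemma:score-properties}, where $\eta = \theta - m(t,x)$, and apply the product rule in $t$. The factor $\lambda_t^3/\sigma_t^6$ contributes a derivative that, after using $\partial_t\lambda_t=-\lambda_t$ and $\partial_t\sigma_t^2 = 2\lambda_t^2$, simplifies via $\sigma_t^2=1-\lambda_t^2$ to $-3(\lambda_t^3+\lambda_t^5)/\sigma_t^8$; this reproduces the first term in the claimed identity.

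The main task is then to compute $\partial_t\mu_3(t,x)$. Following the template used in the proofs of Lemmas~\ref{lemma:partial-t-score}--\ref{lemma:partial-t-grad-s}, the conditional density of $\theta$ given $\lambda_t\theta+\sigma_t g=x$ is (up to a $\theta$-independent normalizer) proportional to $\exp\{\phi(t,\theta,x)\}\,q_0(\dd\theta)$ with $\phi(t,\theta,x)=\lambda_t\sigma_t^{-2}\langle x,\theta\rangle - \tfrac{1}{2}\lambda_t^2\sigma_t^{-2}\|\theta\|_2^2$. A direct calculation of $\partial_t\phi$, combined with the identities $\partial_t(\lambda_t\sigma_t^{-2})=-\lambda_t(1+\lambda_t^2)/\sigma_t^4$ and $\partial_t(\lambda_t^2\sigma_t^{-2})=-2\lambda_t^2/\sigma_t^4$, yields exactly $\partial_t\phi = f_0(\theta,x,t)$. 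The finite fourth moment assumption on $q_0$ licenses differentiation under the integral (Fubini), giving the tilting identity
\[
\partial_t\EE\!\left[F(\theta)\mid x\right] \;=\; \EE\!\left[F(\theta)\,\big(f_0-\EE[f_0\mid x]\big)\,\big|\,x\right]
\]
for any $t$-independent $F$. Applied to $F=\eta^{\otimes 3}$ this is not quite enough, because $\eta=\theta-m(t,x)$ itself depends on $t$.

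To handle the $t$-dependence of $m$, I would split $\partial_t\mu_3$ into (i) the contribution from the tilting of the conditional law, which gives the term $\frac{\lambda_t^3}{\sigma_t^6}\EE[\eta^{\otimes 3}(f_0-\bar f_0)\mid x]$ after multiplication by the prefactor, and (ii) the contribution from $\partial_t m$. For the latter, expanding $\partial_t(\eta_{i_1}\eta_{i_2}\eta_{i_3})$ by the product rule and pulling the deterministic vector $\partial_t m$ out of the conditional expectation gives a sum of three terms of the form $-(\partial_t m)_{i_a}\,(C_t(x))_{i_b i_c}$. Now Lemma~\ref{lemma:partial-t-score} (or the tilting identity applied to $F=\theta$) provides $\partial_t m(t,x)=\EE[\eta(f_0-\bar f_0)\mid x]$, so each of these three terms matches, up to a combinatorial factor, one of the $M_\pi$. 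A short bookkeeping argument shows that each of the three symmetric pairings is counted twice in the sum $\sum_{\pi\in\mathrm{perm}(3)}M_\pi$, which accounts for the factor $1/2$ in $-\lambda_t^3/(2\sigma_t^6)\sum_\pi M_\pi$.

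The main obstacle is purely combinatorial: keeping the index bookkeeping straight when comparing the three tensor-product derivative terms to the six permutations in $\sum_\pi M_\pi$, and verifying the $1/2$ factor arises from the two-to-one grouping $\{(1,2,3),(2,1,3)\}$, $\{(1,3,2),(3,1,2)\}$, $\{(2,3,1),(3,2,1)\}$. Once this is checked, summing the prefactor derivative, the conditional-law tilting term, and the $\partial_t m$ contribution recovers the claimed formula. All other steps are direct consequences of the identities already established in Lemmas~\ref{lemma:partial-t-score}--\ref{lemma:partial-t-grad-s} together with differentiation under the integral, which the finite fourth moment assumption makes rigorous.
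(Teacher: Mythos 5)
Your proposal is correct, and the computation checks out: the prefactor derivative indeed gives $\partial_t(\lambda_t^3\sigma_t^{-6}) = -3(\lambda_t^3+\lambda_t^5)\sigma_t^{-8}$, the tilting weight is exactly $\partial_t\phi = f_0$, and the $\partial_t m$ contribution $-\sum_{a}(\partial_t m)_{i_a}(C_t(x))_{i_b i_c}$ matches $-\tfrac{1}{2}\sum_{\pi}M_\pi$ because each of the three pairings appears twice among the six permutations (the symmetry of $C_t(x)$ in its two indices is what makes the two-to-one grouping work). However, your route differs from the paper's. The paper does not differentiate $\nabla_x^2 s(T-t,x) = \lambda_t^3\sigma_t^{-6}\,\mu_3(t,x)$ in $t$; instead it writes $\partial_t\nabla_x^2 s = \nabla_x\big(\partial_t\nabla_x s\big)$ and takes one more $x$-gradient of the expression already established in Lemma~\ref{lemma:partial-t-grad-s}, so the covariance-times-$\partial_t m$ bookkeeping you do by hand is absorbed into the $x$-differentiation of the matrix $\EE[\eta\eta^\top(f_0-\bar f_0)\mid\cdot]$ and of $\Cov[\theta\mid\cdot]$. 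Your approach is more self-contained (it needs only the closed form from Lemma~\ref{lemma:score-properties} and the exponential-tilting identity, not the conclusion of Lemma~\ref{lemma:partial-t-grad-s}), at the cost of an explicit product-rule expansion of $\eta^{\otimes 3}$; the paper's approach is shorter on the page because it reuses the previous lemma, but it hides an equally nontrivial tensor differentiation. Both rest on the same justification for exchanging $\partial_t$ with the conditional expectation, which you state correctly.
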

\begin{proof}[Proof of \cref{lemma:partial-t-hessian-s}]
	With the bounded fourth-moment assumption on $q_0$ in place, one can apply Fubini's theorem to swap the order of differentiation and integration and obtain
 \begin{align*}
     & \partial_t \nabla_x^2 s(T - t, x) = \nabla_x \partial_t \nabla_x s(T - t, x) \\
     &=  - \frac{2\lambda_t^2 + 2 \lambda_t^4}{\sigma_t^6} \nabla_x \Cov\big[\theta \mid \lambda_t \theta + \sigma_t g = x\big] \\
     &~~~ + \frac{\lambda_t^2}{\sigma_t^4} \nabla_x \EE\left[ \big(\theta - m(t, x)\big) \big(\theta - m(t, x) \big)^{\top} \big( f_0(\theta, x, t) - \EE\big [ f_0(\theta, x, t) \mid \lambda_t \theta + \sigma_t g = x\big] \big)  \mid \lambda_t \theta + \sigma_t g = x \right] \\
     &=  -\frac{3(\lambda_t^3 + \lambda_t^5)}{\sigma_t^8} \EE\big[\big(\theta - m(t, x)\big)^{\otimes 3} \mid \lambda_t \theta + \sigma_t g = x\big] - \frac{\lambda_t^3}{2\sigma_t^6} \sum_{\pi \in \mathrm{perm}(3)} M_{\pi} \\
        & ~~~+ \frac{\lambda_t^3}{\sigma_t^6} \EE\left[ (\theta - m(t, x))^{\otimes 3}(f_0(\theta, x, t) - \EE[f_0(\theta, x, t) \mid \lambda_t \theta + \sigma_t g = x]) \mid \lambda_t \theta + \sigma_t g = x \right]
 \end{align*}
 as claimed.  
\end{proof}

\begin{lemma}
\label{lemma:grad3-s}
	Assume that the target distribution $q_0$ has finite fourth moment. Then, for all $t \in (0, T]$ and $x \in \RR^d$, it holds that
	\begin{align*}
		\nabla_x^3 s(T - t, x) = \frac{\lambda_t^4}{\sigma_t^8} \left( \EE\big[\big(\theta - m(t, x)\big)^{\otimes 4} \mid \lambda_t \theta + \sigma_t g = x\big] - \cT\big(\Cov[\theta \mid \lambda_t \theta + \sigma_t g = x]^{\otimes 2}\big) \right) \in \RR^{d^4}. 
	\end{align*}
 Here, for any tensor $X \in \RR^{d \times d \times d \times d}$, we take $\cT(X) = \sum_{\pi \in \mathrm{perm}(4)} \cT_{\pi}(X) / 8$ and $\cT_{\pi}(X) \in \RR^{d \times d \times d \times d}$, such that $\cT_{\pi}(X)_{i_1 i_2 i_3 i_4} = X_{i_{\pi(1)}i_{\pi(2)}i_{\pi(3)}i_{\pi(4)}}$. 
\end{lemma}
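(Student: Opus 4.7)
The plan is to differentiate the formula for $\nabla_x^2 s$ established in the second item of \cref{lemma:score-properties}, namely
\[
\nabla_x^2 s(T-t,x) = \frac{\lambda_t^3}{\sigma_t^6}\,\EE\big[(\theta - m(t,x))^{\otimes 3} \,\big|\, \lambda_t\theta + \sigma_t g = x\big],
\]
once more in $x$, and then to verify that the resulting tensor matches $\cT(\Cov^{\otimes 2})$.

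First I would derive the basic Stein-type differentiation rule attached to the posterior $p(\theta \mid x) \propto q_0(\theta)\exp(-\|x-\lambda_t\theta\|_2^2/(2\sigma_t^2))$. A direct calculation gives $\nabla_x \log p(\theta \mid x) = \lambda_t\sigma_t^{-2}(\theta - m(t,x))$, so that for any sufficiently integrable $f(\theta, x)$,
\[
\nabla_x \EE[f(\theta, x) \mid x] = \EE[\nabla_x f(\theta, x) \mid x] + \frac{\lambda_t}{\sigma_t^2}\,\EE\big[f(\theta, x) \otimes (\theta - m(t,x)) \mid x\big],
\]
with the exchange of derivative and integral justified under the bounded-fourth-moment hypothesis via Fubini, exactly as was done repeatedly in \cref{lemma:partial-t-score,lemma:score-properties}. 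Specializing to $f(\theta) = \theta$ recovers $\nabla_x m(t,x) = \lambda_t\sigma_t^{-2}\,\Cov[\theta \mid x]$, which I would record for later use; write $C \coloneqq \Cov[\theta \mid x]$ for brevity.

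The next step is to apply this rule componentwise to $f(\theta,x)_{ijk} = (\theta_i - m_i)(\theta_j - m_j)(\theta_k - m_k)$. The ``density'' piece contributes $\lambda_t\sigma_t^{-2}\,\EE[(\theta - m)^{\otimes 4} \mid x]_{ijk\ell}$. The ``internal'' piece uses $\partial_\ell m_a = \lambda_t\sigma_t^{-2}C_{a\ell}$, expands the product rule, and after invoking $\EE[\theta_a - m_a \mid x] = 0$ to kill the cross terms collapses the six expectations into the three surviving contributions
\[
-\frac{\lambda_t}{\sigma_t^2}\big(C_{i\ell}C_{jk} + C_{j\ell}C_{ik} + C_{k\ell}C_{ij}\big).
\]
Multiplying through by the outer prefactor $\lambda_t^3\sigma_t^{-6}$ then produces
\[
\big(\nabla_x^3 s(T-t,x)\big)_{ijk\ell} = \frac{\lambda_t^4}{\sigma_t^8}\Big(\EE[(\theta - m)^{\otimes 4} \mid x]_{ijk\ell} - C_{i\ell}C_{jk} - C_{j\ell}C_{ik} - C_{k\ell}C_{ij}\Big).
\]

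The last step is purely combinatorial: identifying the three covariance products above with $\cT(C^{\otimes 2})_{ijk\ell}$. Because $C$ is symmetric, the value $C_{i_{\pi(1)}i_{\pi(2)}}C_{i_{\pi(3)}i_{\pi(4)}}$ depends only on the unordered partition of $\{1,2,3,4\}$ into two unordered pairs. There are exactly three such partitions, and each is produced by exactly $8$ of the $24$ permutations (two orderings within each pair times two orderings of the pairs themselves), so the factor $1/8$ in the definition of $\cT$ gives
\[
\cT(C\otimes C)_{ijk\ell} = C_{ij}C_{k\ell} + C_{ik}C_{j\ell} + C_{i\ell}C_{jk},
\]
which matches the three products displayed above by symmetry of $C$. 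There is no serious obstacle in this argument; the one point that deserves attention is the bookkeeping that pins down the normalization $1/8$, and that the product-rule expansion of $\partial_\ell f$ truly loses three of its six expected values to the centering identity, both of which the calculation above handles cleanly.
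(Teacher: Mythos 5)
Your proposal is correct and follows the same route as the paper: the paper's proof simply differentiates the third-order formula from \cref{lemma:score-properties} under the integral (via Fubini) and asserts the result, while you supply the Stein-type differentiation rule for the posterior, the product-rule expansion, and the $1/8$-normalization count that the paper leaves implicit. The only minor slip is in your narration of the product rule — it yields three terms, each already equal to a covariance product $-\lambda_t\sigma_t^{-2}C_{a\ell}C_{bc}$, so there are no six expectations and no cross terms for the centering identity to kill — but this does not affect the final computation.
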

\begin{proof}[Proof of \cref{lemma:grad3-s}]

Let us invoke Fubini's theorem to swap the order of differentiation and integration, thus leading to
\begin{align*}
    \nabla_x^3 s(T - t, x) &= \lambda_t^3 \sigma_t^{-6} \nabla_x \EE[(\theta - m(t, x)) \otimes (\theta - m(t, x)) \otimes (\theta - m(t, x)) \mid \lambda_t \theta + \sigma_t g = x] \\
    &=  \lambda_t^4 \sigma_t^{-8} \cdot \Big( \EE[(\theta - m(t, x))^{\otimes 4} \mid \lambda_t \theta + \sigma_t g = x] - \cT(\Cov[\theta \mid \lambda_t \theta + \sigma_t g = x]^{\otimes 2}) \Big)
\end{align*}
The proof is thus complete. 
\end{proof}

\begin{lemma}
\label{lemma:T2}
    We assume Assumption \ref{assumption:moments}. Recall that $X_{T - t}$ is defined in \cref{eq:reverse-OU} and has marginal distribution $q_t$. Then, for $t \in (0, T]$, it holds that 
    \begin{align*}
        \EE\left[ \big\| \partial_t \nabla_x \log q_t(X_{T - t}) \big\|_2^2\right] \lesssim \frac{d^3}{\sigma_t^6}. 
    \end{align*}
\end{lemma}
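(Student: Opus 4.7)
The plan is to invoke Lemma \ref{lemma:partial-t-score} to express
\begin{align*}
\partial_{t}\nabla_{x}\log q_{t}(x)=-\frac{\lambda_{t}(2-\sigma_{t}^{2})}{\sigma_{t}^{4}}m(t,x)+\frac{2\lambda_{t}^{2}}{\sigma_{t}^{4}}x-\frac{\lambda_{t}^{2}(2-\sigma_{t}^{2})}{\sigma_{t}^{6}}C_{t}(x)x+\frac{\lambda_{t}^{3}}{\sigma_{t}^{6}}v_{t}(x),
\end{align*}
and upper bound the expected squared $\ell_{2}$ norm of each of the four components under $X_{T-t}\sim q_{t}$. The basic pointwise ingredients, all direct consequences of Assumption~\ref{assumption:moments}, are $\|m(t,\cdot)\|_{2}\leq\sqrt{d}$, $\mathrm{tr}(C_{t}(\cdot))\leq d$, $\|v_{t}(\cdot)\|_{2}\leq d^{3/2}$, together with $\EE[\|X_{T-t}\|_{2}^{2}]\leq d$.

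Because the first two terms individually scale as $\sigma_{t}^{-4}$, I would first collapse them via the Tweedie identity $x=\lambda_{t}m(t,x)+\sigma_{t}\EE[g\mid x]$ into the equivalent form $-\lambda_{t}m/\sigma_{t}^{2}+2\lambda_{t}^{2}\EE[g\mid x]/\sigma_{t}^{3}$ provided by Lemma \ref{lemma:score-properties}(3); this combined piece contributes at most $O(d/\sigma_{t}^{6})$ in squared $L^{2}$-norm by Jensen (since $\EE[\|\EE[g\mid X_{T-t}]\|_{2}^{2}]\leq d$). For the remaining two terms, the key quantitative tool is the Fisher-information-type identity $\EE[\mathrm{tr}(C_{t}(X_{T-t}))]\leq d\sigma_{t}^{2}/\lambda_{t}^{2}$, which follows from the decomposition $\lambda_{t}^{2}C_{t}(x)/\sigma_{t}^{4}=\nabla_{x}s(T-t,x)+I/\sigma_{t}^{2}$ in Lemma \ref{lemma:score-properties}(1) together with the integration-by-parts identity $\EE[\nabla_{x}s(T-t,X_{T-t})]=-\EE[s(T-t,X_{T-t})s(T-t,X_{T-t})^{\top}]\preceq 0$. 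Writing $\|C_{t}X_{T-t}\|_{2}^{2}\leq\mathrm{tr}(C_{t})\cdot X_{T-t}^{\top}C_{t}X_{T-t}\leq d\cdot X_{T-t}^{\top}C_{t}X_{T-t}$ and using $X_{T-t}^{\top}C_{t}X_{T-t}=\EE[\langle X_{T-t},\theta-m\rangle^{2}\mid X_{T-t}]$, Cauchy-Schwarz together with the cross-moment control $\EE[\|\theta-m\|_{2}^{2}\|X_{T-t}\|_{2}^{2}]\lesssim d^{2}\sigma_{t}^{2}$ (derived by splitting $\|X_{T-t}\|_{2}^{2}\leq 2\lambda_{t}^{2}d+2\sigma_{t}^{2}\|g\|_{2}^{2}$ and invoking the Fisher-information identity) should yield $\EE[\|C_{t}(X_{T-t})X_{T-t}\|_{2}^{2}]$ at the right scale. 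An analogous argument handles $v_{t}$ through the decomposition $v_{t}=2C_{t}m+\mu_{3}$ with $\mu_{3}(x)=\EE[\|\theta-m\|_{2}^{2}(\theta-m)\mid x]$, where $\|\mu_{3}\|_{2}^{2}\leq 4d\,\mathrm{tr}(C_{t})^{2}$ by another application of Cauchy-Schwarz.

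The principal difficulty is obtaining the tight $\sigma_{t}^{-6}$ dependence: a straightforward triangle-inequality argument that uses only the pointwise bound $\mathrm{tr}(C_{t})\leq d$ produces a much weaker $\sigma_{t}^{-12}$ factor, and recovering the correct power requires systematically coupling (i) the cancellation between $-\lambda_{t}(2-\sigma_{t}^{2})m/\sigma_{t}^{4}$ and $2\lambda_{t}^{2}x/\sigma_{t}^{4}$ at the leading $\sigma_{t}^{-4}$ order through the Tweedie substitution, with (ii) the expectation-level improvement $\EE[\mathrm{tr}(C_{t})]\lesssim d\sigma_{t}^{2}/\lambda_{t}^{2}$, each of which trades an unfavourable factor $\sigma_{t}^{-2}$ against the better power $\sigma_{t}^{2}/\lambda_{t}^{2}$ from the identity above. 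Carrying this coupling through the $C_{t}x$ and $v_{t}$ contributions uniformly in the regime $t\in(0,T]$ is the delicate step I expect to cost the most effort.
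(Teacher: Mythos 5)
Your first step is the same as the paper's: start from \cref{lemma:partial-t-score}, collapse the two $\sigma_t^{-4}$-order terms via Tweedie's identity into $-\lambda_t\sigma_t^{-2}m+2\lambda_t^2\sigma_t^{-3}\EE[g\mid\cdot]$, and apply Jensen; that part is correct and gives $O(d\sigma_t^{-6})$. The gap is in your treatment of the $C_t(x)x$ and $v_t(x)$ terms, and it is a real one: if you carry out your own accounting, you land at $\lambda_t^4 d^3/\sigma_t^{10}$ rather than $d^3/\sigma_t^6$, and the ratio $\lambda_t^4/\sigma_t^4$ diverges as $t\to0^+$. Concretely, your bound $\|C_t x\|_2^2\le\operatorname{tr}(C_t)\,x^\top C_tx\le d\,x^\top C_tx$ together with $\EE[\|\Theta-m\|_2^2\|X_{T-t}\|_2^2]\lesssim d^2\sigma_t^2$ gives $\EE[\|C_t(X_{T-t})X_{T-t}\|_2^2]\lesssim d^3\sigma_t^2$, and multiplying by the squared prefactor $\lambda_t^4(2-\sigma_t^2)^2\sigma_t^{-12}$ yields $\lambda_t^4d^3\sigma_t^{-10}$; the same deficit appears for $\mu_3$, where $\EE[\operatorname{tr}(C_t)^2]$ cannot be bounded by $d^2\sigma_t^4/\lambda_t^4$ using only the first-moment identity $\EE[\operatorname{tr}(C_t)]\le d\sigma_t^2/\lambda_t^2$ plus the crude pointwise bound $\operatorname{tr}(C_t)\le d$. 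The paper instead uses the \emph{pointwise conditional} identity $\theta-m(t,x)=-\tfrac{\sigma_t}{\lambda_t}\big(g-\EE[g\mid\lambda_t\theta+\sigma_tg=x]\big)$ (see \cref{eq:G-expression}), which converts every conditional central moment of $\theta$ into the corresponding moment of $g$ with the \emph{full} power of $\sigma_t/\lambda_t$ — e.g.\ $\EE[\|C_t\|_{\mathrm F}^4]\lesssim\sigma_t^8d^4/\lambda_t^8$ and $\EE[\|\mu_3\|_2^2]\lesssim\sigma_t^6d^3/\lambda_t^6$ — and these are exactly the powers needed. Your integration-by-parts identity only recovers one factor of $\sigma_t^2/\lambda_t^2$ per expectation and cannot be iterated inside products.

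There is a second, independent obstruction: even with the optimal moment bounds above, you cannot bound the $C_t(x)x$ and $v_t(x)$ contributions \emph{separately}. Writing $X_{T-t}=\lambda_t\Theta+\sigma_tG$ and $\Theta=(\Theta-m)+m$, the term $-\lambda_t^2(2-\sigma_t^2)\sigma_t^{-6}C_tX_{T-t}$ contains the piece $-\lambda_t^3(1+\lambda_t^2)\sigma_t^{-6}C_tm$, while your decomposition $v_t=2C_tm+\mu_3$ produces $+2\lambda_t^3\sigma_t^{-6}C_tm$; each piece alone contributes at least $\lambda_t^2d^3/\sigma_t^8$ (using $\EE[\|C_tm\|_2^2]\lesssim\sigma_t^4d^3/\lambda_t^4$), which already exceeds the target near $t=0$. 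The two must be combined so that the coefficient becomes $\lambda_t^3(2-1-\lambda_t^2)\sigma_t^{-6}=\lambda_t^3\sigma_t^{-4}$; this is exactly the regrouping performed in \cref{eq:four-terms} of the paper. You identified the cancellation at the $\sigma_t^{-4}$ level between the $m$ and $x$ terms, but missed this second cancellation at the $\sigma_t^{-6}$ level between $C_t(x)x$ and $v_t(x)$; without it no amount of sharpened moment control will recover $d^3/\sigma_t^6$.
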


\begin{proof}[Proof of \cref{lemma:T2}]
    
Invoking \cref{lemma:partial-t-score}, we obtain the following upper bound: 
    \begin{align}
        & \EE\left[ \big\| \partial_t \nabla_x \log q_t(X_{T - t}) \big\|_2^2 \right] \label{eq:decomp-i-ii} \\
        &\lesssim  \underbrace{\EE\left[ \Big\| - \frac{\lambda_t (2 - \sigma_t^2)}{\sigma_t^4} m(t, X_{T - t}) + \frac{2\lambda_t^2}{\sigma_t^4} X_{T - t} \Big\|_2^2 \right]}_{(i)} + \underbrace{\EE \left[ \Big\|  - \frac{\lambda_t^2(2 - \sigma_t^2)}{\sigma_t^6} C_t(X_{T - t}) X_{T - t} + \frac{\lambda_t^3}{\sigma_t^6} v_t(X_{T - t}) \Big\|_2^2 \right]}_{(ii)}. \nonumber
    \end{align}
    We shall bound terms $(i)$ and $(ii)$ in \cref{eq:decomp-i-ii} separately in the sequel.
    
    Let us start with term $(i)$,  for which we observe that 
    \begin{align}
    \label{eq:reformulate-(i)}
        - \frac{\lambda_t (2 - \sigma_t^2)}{\sigma_t^4} m(t, X_{T - t}) + \frac{2\lambda_t^2}{\sigma_t^4} X_{T - t} = - \frac{\lambda_t}{\sigma_t^2} m(t, X_{T - t}) + \frac{2\lambda_t^2}{\sigma_t^3} \EE\big[g \mid \lambda_t \theta + \sigma_t g = X_{T - t}\big]. 
    \end{align}
    By Jensen's inequality, we have
    \begin{align}
    \label{eq:B-d}
        \EE\left[\|m(t, X_{T - t})\|_2^2\right] \leq \EE\left[\|\theta\|_2^2\right] \leq d, \qquad \EE\big[\|\EE[g \mid \lambda_t \theta + \sigma_t g = X_{T - t}]\|_2^2\big] \leq \EE\left[\|g\|_2^2\right] = d, 
    \end{align}
    where $(\theta, g) \sim q_0 \otimes \cN(0, I_d)$.
    Substituting \cref{eq:reformulate-(i),eq:B-d} into term $(i)$, we arrive at
    \begin{align}
    \label{eq:(i)}
        (i) \lesssim \frac{\lambda_t^2 d}{\sigma_t^4}  + \frac{\lambda_t^4 d}{\sigma_t^6}. 
    \end{align}

    Next, we turn attention to term $(ii)$. 
    To this end,  write $X_{T - t} = \lambda_t \Theta + \sigma_t G$, where $(\Theta, G) \sim q_0 \otimes \cN(0, I_d)$. Note that 
    \begin{align}
        & - \frac{\lambda_t^2(2 - \sigma_t^2)}{\sigma_t^6} C_t(X_{T - t}) X_{T - t} + \frac{\lambda_t^3}{\sigma_t^6} v_t(X_{T - t}) \nonumber \\
        = & - \frac{\lambda_t^2 + \lambda_t^4}{\sigma_t^5} C_t(X_{T - t}) G + \frac{\lambda_t^3}{\sigma_t^6} \EE\left[ \big(\theta - m(t, X_{T - t})\big) \big(\theta - m(t, X_{T - t})\big)^{\top} \theta \mid \lambda_t \theta + \sigma_t g = X_{T - t} \right] \nonumber\\
        & - \frac{\lambda_t^3}{\sigma_t^6} C_t(X_{T - t})\big(\Theta - m(t, X_{T - t})\big) - \frac{\lambda_t^5}{\sigma_t^6} C_t(X_{T - t}) \Theta \nonumber \\
        \begin{split}\label{eq:four-terms}
        = & - \frac{\lambda_t^2 + \lambda_t^4}{\sigma_t^5} C_t(X_{T - t}) G + \frac{\lambda_t^3}{\sigma_t^6} \EE \left[ \big(\theta - m(t, X_{T - t})\big) \big\|\theta - m(t, X_{T - t})\big\|_2^2 \mid \lambda_t \theta + \sigma_t g = X_{T - t} \right] \\
        & - \frac{\lambda_t^3 + \lambda_t^5}{\sigma_t^6} C_t(X_{T - t})\big(\Theta - m(t, X_{T - t})\big) + \frac{\lambda_t^3}{\sigma_t^4} C_t(X_{T - t}) m(t, X_{T - t}). 
        \end{split}
    \end{align}
    We then separately upper bound the terms in the lase line of \cref{eq:four-terms}. 
    To this end, we find the following expressions useful (recall that $X_{T - t} = \lambda_t \Theta + \sigma_t G$ for $(\Theta, G) \sim q_0 \otimes \cN(0, 1)$): 
    \begin{align}
    \label{eq:G-expression}
    \begin{split}
        & C_t(X_{T - t}) = \frac{\sigma_t^2}{\lambda_t^2} \EE\left[ \big(g - \EE[g \mid \lambda_t \theta + \sigma_t g = X_{T - t}]\big) \big(g - \EE[g \mid \lambda_t \theta + \sigma_t g = X_{T - t}]\big)^{\top} \mid \lambda_t \theta + \sigma_t g = X_{T - t} \right], \\
        & \EE \left[ \big(\theta - m(t, X_{T - t})\big) \big\|\theta - m(t, X_{T - t})\big\|_2^2 \mid \lambda_t \theta + \sigma_t g = X_{T - t} \right] \\
        & \qquad = - \frac{\sigma_t^3}{\lambda_t^3} \EE\left[ \big(g - \EE[g \mid \lambda_t \theta + \sigma_t g = X_{T - t}]\big)\big\|g - \EE[g \mid \lambda_t \theta + \sigma_t g = X_{T - t}]\big\|_2^2 \mid \lambda_t \theta + \sigma_t g = X_{T - t}\right], \\
        & \Theta - m_t(X_{T - t}) = - \frac{\sigma_t}{\lambda_t}\big(G - \EE[g \mid \lambda_t \theta + \sigma_t g = X_{T - t}]\big). 
    \end{split}
    \end{align}
    \begin{itemize}
    \item 
    Let us look at the first summand in the last line of \cref{eq:four-terms}, namely, the term $- \sigma_t^{-5} (\lambda_t^2 + \lambda_t^4) C_t(X_{T - t}) G$. 
In view of \cref{eq:G-expression}, we can deduce that 
    \begin{align}
    \label{eq:ab-upper-bound}
        \EE\left[ \big\| C_t(X_{T - t}) G \big\|_2^2 \right] \overset{(a)}{\leq} \EE\left[ \big\| C_t(X_{T - t}) \big\|_\mathrm{F}^4 \right]^{1/2} \EE\left[ \big\| G \big\|_2^4 \right]^{1/2} \overset{(b)}{\lesssim} \frac{\sigma_t^4 d^3}{\lambda_t^4},
    \end{align}
    where step $(a)$ is by the Cauchy–Schwarz inequality, and step $(b)$ arises from the Jensen inequality. 
    Using \cref{eq:ab-upper-bound}, we see that 
    \begin{align}
    \label{eq:four-term-1}
        \frac{(\lambda_t^2 + \lambda_t^4)^2}{\sigma_t^{10}} \EE\left[ \big\| C_t(X_{T - t}) G \big\|_2^2\right] \lesssim \frac{d^3}{\sigma_t^6}. 
    \end{align}

    \item 
    Regarding the second summand in \cref{eq:four-terms}, note that  ${\lambda_t^3}{\sigma_t^{-6}} \EE [ (\theta - m(t, X_{T - t})) \|\theta - m(t, X_{T - t})\|_2^2 \mid \lambda_t \theta + \sigma_t g = X_{T - t} ]$. Applying \cref{eq:G-expression}, the Cauchy–Schwarz inequality and Jensen's inequality yields 
    \begin{align*}
        & \EE\left[ \big\| \EE \left[ (\theta - m(t, X_{T - t})) \|\theta - m(t, X_{T - t})\|_2^2 \mid \lambda_t \theta + \sigma_t g = X_{T - t} \right] \big\|_2^2 \right] \\
        & \leq  \EE\left[ \EE[\|\theta - m(t, X_{T - t})\|_2^2  \mid \lambda_t \theta + \sigma_t g = X_{T - t}] \cdot \EE[\|\theta - m(t, X_{T - t})\|_2^4 \mid \lambda_t \theta + \sigma_t g = X_{T - t}] \right] \\
        & \leq   \EE\left[ \|\Theta - m(t, X_{T - t})\|_2^4 \right]^{1/2} \cdot \EE\left[ \|\Theta - m(t, X_{T - t})\|_2^8 \right]^{1/2} \lesssim  \frac{\sigma_t^6 d^3}{\lambda_t^6}, 
    \end{align*}
    which further implies that 
    \begin{align}
    \label{eq:four-term-2}
        \frac{\lambda_t^6}{\sigma_t^{12}}\EE\left[ \| \EE [ (\theta - m(t, X_{T - t})) \|\theta - m(t, X_{T - t})\|_2^2 \mid \lambda_t \theta + \sigma_t g = X_{T - t} ]\|_2^2 \right] \lesssim \frac{d^3}{\sigma_t^6}. 
    \end{align}

    \item 
    The remaining two terms in \cref{eq:four-terms} can be controlled in a similar manner. The proof idea is similar to that for the first two terms, and we skip a detailed explanation for the compactness of presentation. 
    Specifically, we obtain the following upper bounds: 
    \begin{align}
         \frac{(\lambda_t^3 + \lambda_t^5)^2}{\sigma_t^{12}} \EE\left[ \| C_t(X_{T - t})(\Theta - m(t, X_{T - t})) \|_2^2 \right] &\lesssim \frac{d^3}{\sigma_t^6}, \label{eq:four-term-3} \\
         \frac{\lambda_t^6}{\sigma_t^8} \EE\left[ \| C_t(X_{T - t}) m(t, X_{T - t}) \|_2^2 \right] &\lesssim \frac{\lambda_t^2 d^3}{\sigma_t^4}. \label{eq:four-term-4}
    \end{align}
    \end{itemize}
    Finally, we put together \cref{eq:four-term-1,eq:four-term-2,eq:four-term-3,eq:four-term-4}. Invoking \cref{eq:four-terms}, we can then conclude that 
    \begin{align}
    \label{eq:(ii)}
        (ii) \lesssim \frac{d^3}{\sigma_t^6}.
    \end{align}

    To finish up,  combine \cref{eq:(i),eq:(ii)} to obtain   
    \begin{align*}
        \EE\left[ \big\| \partial_t \nabla_x \log q_t(X_{T - t}) \big\|_2^2\right] \lesssim \frac{d^3}{\sigma_t^6}, 
    \end{align*}
    thus concluding the proof. 
\end{proof}

\section{Bounding the KL divergence between diffusion processes}

\subsection{Proof of \cref{lemma:KL-barX-X}}
\label{sec:proof-lemma:KL-barX-X}


First, we show that for any $\tau \in (0, t_{k + 1} - t_k)$, there exists a unique strong solution to the SDE \eqref{eq:process-H} on the interval $[t_k + \tau, t_{k + 1}]$.
To this end, we shall introduce an augmented process, and show that $(H_s^{\tau})_{t_k \leq s \leq t}$ is a subset of this process. 
Let us begin by determining the drift function of this process and proving its Lipschitz continuity.  
Recall that $\overline \cF$ is defined as the drift functional of process \eqref{eq:dbarXt-long-equation}. 
For all $t \in (t_k, t_{k + 1}]$, observe that we can write 
$$\overline \cF\big(t, Y_{t_k}, (H_s^{\tau})_{t_k \leq s \leq t}\big) = \cG_t\left(Y_{t_k}, H_t^{\tau}, \int_0^{t - t_k} e^{t - t_k - r} \dd H_{t_k + r}^{\tau}, \int_0^{t - t_k} e^{t - t_k - r} H_{t_k + r}^{\tau} \dd r\right).$$
for some continuous mapping $\cG_t: \RR^{4d} \rightarrow \RR^d$. 
By the first point of \cref{lemma:score-properties}, we see that under Assumption \ref{assumption:moments}, for all $t \in [0, T)$ the mapping $x \mapsto s(t, x)$ is Lipschitz continuous. 
As a consequence, for any $\tau \in (0, t_{k + 1} - t_k)$ and all $t \in [t_k + \tau, t_{k + 1}]$, $\cG_t$ is $C_{\tau}$-Lipschitz continuous for some $C_{\tau} \in (0, \infty)$ that depends only on $\tau$.

We then introduce an augmented process $L_t = (L_{1, t}, L_{2, t}, L_{3, t}, L_{4, t}) \in \RR^{4d}$, defined as the solution to the following SDE: 
\begin{align}
\label{eq:SDE-L}
\begin{split}
    \dd L_t &=  \left[ \begin{array}{l}
         L_{1, t} + 2 s(t, L_{1, t})  \\
         \big( L_{1, t} + 2 s(t, L_{1, t}) - \cG_t(L_{1, t}, L_{2, t}, L_{3, t}, L_{4, t})  \big) / \sqrt{2} \\
         L_{3, t} + \big( L_{1, t} + 2 s(t, L_{1, t}) - \cG_t(L_{1, t}, L_{2, t}, L_{3, t}, L_{4, t})  \big) / \sqrt{2} \\
         L_{2, t} + L_{4, t} 
    \end{array} \right] \dd t + \left[ \begin{array}{l}
        \sqrt{2} \dd B_t \\
        \dd B_t \\
        \dd B_t \\
        0_d
    \end{array} \right] \\
    &=  b(t, L_t) \dd t + \sigma(t, L_t) \dd B_t. 
\end{split}
\end{align}
Here, $b(t, L) \in \RR^{4d}$ and $\sigma(t, L) \in \RR^{4d \times d}$.
Since $\cG_t$ is $C_{\tau}$-Lipschitz continuous for all $t \in [t_k + \tau, t_{k + 1}]$, 
we obtain that the mappings $L \mapsto b(t, L)$ and $L \mapsto \sigma(t, L)$ are Lipschitz continuous with a uniformly upper bounded Lipschitz constant for all $t \in [t_k + \tau, t_{k + 1}]$. 
By  \citet[Theorem 8.3]{le2016brownian}, SDE \eqref{eq:SDE-L} has a unique strong solution, regardless of the initialization. 
This establishes the existence and uniqueness of process $(H_t^{\tau})_{t_k \leq t \leq t_{k + 1}}$ as a solution to \eqref{eq:process-H}. 

In what follows,  denote by $Q_k(y)$ the law of $(Y_t)_{t_k \leq t \leq t_{k + 1}}$, and $\overline Q_k(y)$ the law of $(\cA_t(Y_{t_k}))_{t_k \leq t \leq t_{k + 1}}$, conditioned on $Y_{t_k} = y$,  where 
\begin{align*}
    \cA_t(y) := e^{t - t_k} y + (e^{t - t_k} - e^{-t + t_k}) s(t_k, y + g_{t_k, t}) + \sqrt{2} \int_0^{t - t_k} e^{t - t_k - r} \dd W_{t_k + r}.
\end{align*}
In the above display, we recall that $(W_t)_{t \geq 0}$ represents a $d$-dimensional standard Brownian motion, and $g_{t_k, t}$ is defined in \cref{eq:double-index-g}. 
Using the decomposition of KL divergence, we obtain 
\begin{align*}
     \KL\big( Q_{T - \delta} \parallel \overline Q_{T - \delta} \big) =  \sum_{k = 0}^{K - 1}\EE_{Q_{T - \delta}} \Big[ \KL \big( Q_k(Y_{t_k}) \parallel \overline Q_k(Y_{t_k}) \big) \Big],
\end{align*}
where the expectation is taken with respect to $(Y_{t})_{0 \leq t \leq T - \delta} \sim Q_{T - \delta}$. 

For $\tau \in (0, t_{k + 1} - t_k)$, we define the process $(U_t^{\tau})_{t_k \leq t \leq t_{k + 1}}$ with $U_{t_k}^{\tau} = Y_{t_k}$, and 
\begin{align*} 
    & \dd U_t^{\tau} = \big(U_t^{\tau} + 2s(t, U_t^{\tau})\big) \dd t + \sqrt{2} \dd W_t, \qquad \qquad \qquad \qquad \,\,\,  \mbox{for }t_k \leq t \leq t_{k} + \tau, \\
    & \dd U_t^{\tau} = \overline\cF\big(t, Y_{t_k}, (W_s - W_{t_k})_{t_k \leq s \leq t}\big) \dd t + \sqrt{2}\, \dd W_t, \qquad \,\,\,\,\, ~~\mbox{for }t_k + \tau \leq t \leq t_{k + 1}. 
\end{align*}
As $\tau \to 0^+$, it holds that $\sup_{t_k \leq t \leq t_{k + 1}} \|U_t^{\tau} - \cA_t(Y_{t_k})\|_2 \overset{\mathrm{a.s.}}{\to} 0$. 
Denote  by $\overline Q_k^{\tau}(Y_{t_k})$ the conditional distribution of $(U_t^{\tau})_{t_k \leq t \leq t_{k + 1}}$ given $U_{t_k}^{\tau} = Y_{t_k}$. 
Therefore, for all $Y_{t_k} \in \RR^d$ the distribution $\overline Q_k^{\tau} (Y_{t_k})$ converges weakly 
 to $\overline Q_k (Y_{t_k})$. 
Invoking the same Girsanov-type arguments as in   \citet[Section 5.2]{chen2023sampling}, 
we see that 
\begin{align}
\label{eq:KL-tau}
    \KL \big( Q_k(Y_{t_k}) \parallel \overline Q_k^{\tau}(Y_{t_k}) \big) = \int_{t_k + \tau}^{t_{k + 1}} \EE\Big[ \big\| \cF(t, Y_t) - \overline \cF(t, Y_{t_k}, (H_s^{\tau})_{t_k \leq s \leq t}) \big\|_2^2 \, \big| \, Y_{t_k}  \Big] \dd t. 
\end{align}
Leveraging the lower semicontinuity of KL divergence, we obtain
\begin{align*}
    \EE_{Q_{T - \delta}} \Big[ \KL \big( Q_k(Y_{t_k}) \parallel \overline Q_k(Y_{t_k}) \big) \Big] &\leq  \EE_{Q_{T - \delta}} \Big[ \liminf_{\tau \to 0^+} \KL \big( Q_k(Y_{t_k}) \parallel \overline Q_k^{\tau}(Y_{t_k}) \big) \Big] \\
    &\leq  \liminf_{\tau \to 0^+} \EE_{Q_{T - \delta}} \Big[ \KL \big( Q_k(Y_{t_k}) \parallel \overline Q_k^{\tau}(Y_{t_k}) \big) \Big] \\
    &=  \liminf_{\tau \to 0^+} \int_{t_k + \tau}^{t_{k + 1}} \EE\Big[ \big\| \cF(t, Y_t) - \overline \cF(t, Y_{t_k}, (H_s^{\tau})_{t_k \leq s \leq t}) \big\|_2^2 \, \big| \, Y_{t_k}  \Big] \dd t,
\end{align*}
where the second inequality above follows from Fatou's Lemma, and the last equality arises from  \cref{eq:KL-tau}. 
This completes the proof.

\subsection{Proof of \cref{lemma:upper-bound-H}}
\label{sec:proof-lemma:upper-bound-H}

    By virtue of \cref{eq:process-H}, for all $t \in [t_k + \tau, t_{k + 1}]$ we have
    \begin{align}
    \label{eq:H-B-diff}
    H_t^{\tau} - H_{t_k}^{\tau} = B_t - B_{t_k} + v_{a, t} + v_{b, t} + v_{c, t} + v_{d, t} + v_{e, t} + v_{f, t}, 
    \end{align}
    where the residual terms $v_{a, t}, v_{b, t}, v_{c, t}, v_{d, t}, v_{e, t}, v_{f, t} \in \RR^d$ are defined respectively as follows: 
    \begin{align*}
        & v_{a, t} = \frac{1}{\sqrt{2}} \int_{t_k + \tau}^t (e^{\zeta - t_k} - e^{-\zeta + t_k}) s(t_k, Y_{t_k}) \dd \zeta, \\
        & v_{b, t} =  \int_{t_k + \tau}^t \int_0^{\zeta - t_k} e^{\zeta - t_k - r} (\dd B_{t_k + r} - \dd H_{t_k + r}^{\tau}) \dd \zeta, \\
        & v_{c, t} = - \frac{1}{\sqrt{2}} \int_{t_k + \tau}^{t}  (e^{\zeta - t_k} + e^{-\zeta + t_k}) \big(s(t_k, Y_{t_k} + h_{t_k, \zeta}) - s(t_k, Y_{t_k}) - \nabla_x s(t_k, Y_{t_k} + h_{t_k, \zeta}) h_{t_k, \zeta} \big) \dd \zeta, \\
        & v_{d, t} = \sqrt{2} \int_{t_k + \tau}^t \int_0^{\zeta - t_k} e^{\zeta - t_k - r} \big(s(t_k + r, Y_{t_k + r}) - s(t_k, Y_{t_k}) \big) \dd r \dd \zeta, \\
        & v_{e, t} = - 2 \int_{t_k + \tau}^{t} \nabla_x s(t_k, Y    _{t_k} + h_{t_k, \zeta}) \int_0^{\zeta - t_k} e^{\zeta - t_k - r} H_{t_k + r}^{\tau} \dd r \dd \zeta, \\
        & v_{f, t} = \sqrt{2} \int_{t_k + \tau}^t \big(s(\zeta, Y_{\zeta}) - s(t_k, Y_{t_k}) - \sqrt{2} \nabla_x s(t_k, Y_{t_k} + h_{t_k, \zeta})H_{\zeta}^{\tau} \big) \dd \zeta,
    \end{align*}
    %
    %
    In the above display, we let
    \begin{align*}
    h_{t_k, \zeta} = \frac{2\sqrt{2}}{e^{\zeta - t_k} - e^{-\zeta + t_k}}\int_0^{\zeta - t_k} e^{\zeta - t_k - r}H_{t_k + r}^{\tau} \dd r,
\end{align*}
which essentially replaces $W_{t_k + r} - W_{t_k}$ with $H_{t_k + r}^{\tau}$ in the definition of $g_{t_k, \zeta}$. 

Before proceeding to bounding the terms in Eq.~\eqref{eq:H-B-diff}, we find it helpful to first make some observations. 
For $t \in [t_k, t_{k + 1}]$,  define $\gamma_{t_k, t} = \|H_t^{\tau} - B_t + B_{t_k}\|_2$, which clearly obeys 
$\gamma_{t_k, t_k} = 0$. 
In view of the first point of \cref{lemma:score-properties} and Assumption \ref{assumption:moments}, we see that 
$$
\big\|\nabla_x s(t, x)\big\|_2 \leq 4 \big(\sigma_{T - t}^{-2} + \lambda_{T - t}^2 \sigma_{T - t}^{-4}\big) {d}$$ for all $x \in \RR^d$ and $t \in [0, T)$. 
Similarly, by the second point of \cref{lemma:score-properties}, we can deduce that
$$ \big\|\nabla_x^2 s(t, x)\big\|_2 \leq 8 \lambda_{T - t}^3 \sigma_{T - t}^{-6} d^{3/2}$$ for all $x \in \RR^d$ and $t \in [0, T)$.

Next, we adopt these upper bounds to analyze the terms on the right-hand side of \cref{eq:H-B-diff}. 
We first look at $v_{c, t}$. 
By the fundamental theorem of calculus, we have 
\begin{align*}
    s(t_k, Y_{t_k} + h_{t_k, \zeta}) - s(t_k, Y_{t_k}) = \int_0^1 \nabla_x s(t_k, Y_{t_k} + \eta h_{t_k, \zeta}) h_{t_k, \zeta} \dd \eta. 
\end{align*}
Using this decomposition and the triangle inequality, we obtain that 
\begin{align*}
    & \big\|s(t_k, Y_{t_k} + h_{t_k, \zeta}) - s(t_k, Y_{t_k}) - \nabla_x s(t_k, Y_{t_k} + h_{t_k, \zeta}) h_{t_k, \zeta} \big\|_2 \\
    & \leq \int_0^1 \|\nabla_x s(t_k, Y_{t_k} + \eta h_{t_k, \zeta})  \|_2 \|h_{t_k, \zeta}\|_2 \dd \eta\\
    & \leq 8 (\sigma_{T - t_k}^{-2} + \lambda_{T - t_k}^2 \sigma_{T - t_k}^{-4}) {d} \|h_{t_k, \zeta}\|_2. 
\end{align*}
Recall that under Assumption \ref{assumption:step-size}, we have $\Delta_k \leq \kappa \leq 1 / 4$. 
Therefore, for all $\zeta \in [t_k, t_{k + 1}]$, it holds that $e^{\zeta - t_k} + e^{-\zeta + t_k} < 2.1$, and  as a consequence, 
\begin{align}
\label{eq:bound-vc}
\begin{split}
    \|v_{c, t}\|_2 &\leq  12 \int_{t_k + \tau}^t  (\sigma_{T - t_k}^{-2} + \lambda_{T - t_k}^2 \sigma_{T - t_k}^{-4}) {d} \|h_{t_k, \zeta}\|_2 \dd \zeta  \\
    &\leq  \int_{t_k + \tau}^t \frac{34 (\sigma_{T - t_k}^{-2} + \lambda_{T - t_k}^2 \sigma_{T - t_k}^{-4}) {d}}{e^{\zeta - t_k} - e^{-\zeta + t_k}} \Big[ \int_0^{\zeta - t_k} e^{\zeta - t_k - r} \big( \| B_{t_k + r} - B_{t_k} \|_2 + \gamma_{t_k, t_k + r} \big) \dd r \Big] \dd \zeta. 
\end{split}
\end{align}
We then proceed to upper bound the norms of $v_{a, t}$ and $v_{d, t}$. More specifically, 
\begin{align}
    & \|v_{a, t}\|_2 \leq \frac{1}{\sqrt{2}} (e^{t - t_k} + e^{-t + t_k} - 2) \, \big\| s(t_k, Y_{t_k})\big\|_2 \leq \frac{(t - t_k)^2}{4} \, \big\|s(t_k, Y_{t_k})\big\|_2, \label{eq:bound-va} \\
    & \| v_{d, t} \|_2 \leq 2 (t - t_k) \int_0^{t - t_k} \big\| s(t_k + r, Y_{t_k + r}) - s(t_k, Y_{t_k}) \big\|_2 \dd r. \label{eq:bound-vd}
\end{align}
With regards to $v_{e, t}$, 
it follows from the triangle inequality and the assumption  $\Delta_k \leq \kappa \leq 1 / 4$ that  
\begin{align}
\label{eq:bound-ve}
     \| v_{e, t} \|_2 \leq  10 (t - t_k) d \big(\sigma_{T - t_k}^{-2} + \lambda_{T - t_k}^{2} \sigma_{T - t_k}^{-4} \big)   \int_0^{t - t_k} (\gamma_{t_k, t_k + r} + \|B_{t_k + r} - B_{t_k}\|_2) \dd r. 
\end{align}

To bound the norm of $v_{f, t}$, we find it helpful to bound the norm of the vector $s(\zeta, Y_{\zeta}) - s(t_k, Y_{\zeta})$, towards which we resort to the third point of \cref{lemma:score-properties}. More precisely, for all $r \in [t_k, \zeta]$,  the third point of \cref{lemma:score-properties} tells us that
        \begin{align*}
            &\partial_r s(r, Y_{\zeta}) 
            = \frac{\lambda_{T - r} + \lambda_{T - r}^3}{\sigma_{T - r}^4} m(r, Y_{\zeta}) - \frac{2\lambda_{T - r}^2}{\sigma_{T - r}^4} Y_{\zeta}  - \frac{\lambda_{T - r}}{\sigma_{T - r}^2} \EE \big[\big(\theta - m(r, Y_{\zeta}) \big)\cF(\theta, Y_{\zeta}, r) \mid \lambda_{T - r} \theta + \sigma_{T - r} g = Y_{\zeta} \big], 
        \end{align*}
where $(\theta, g) \sim q_0 \otimes \cN(0, I_d)$, and $m(r, y) = \EE[\theta \mid \lambda_{T - r} \theta + \sigma_{T - r} g = y]$. 
Under Assumption \ref{assumption:moments}, it holds that $\|m(r, Y_{\zeta})\|_2 \leq \sqrt{d}$ and $$|\cF(\theta, Y_{\zeta}, r)| \leq \lambda_{T - r}^2 \sigma_{T - r}^{-4} d + (\lambda_{T - r} + \lambda_{T - r}^3) \sigma_{T - r}^{-4} \sqrt{d} \|Y_{\zeta}\|_2.$$ Since $s(\zeta, Y_{\zeta}) - s(t_k, Y_{\zeta}) = \int_{t_k}^{\zeta} \partial_r s(r, Y_{\zeta}) \dd r$, 
we see that for all $\zeta \in [t_k, t_{k + 1}]$, 
\begin{align}
\label{eq:szeta-stk-diff}
\begin{split}
    & \big\|s(\zeta, Y_{\zeta}) - s(t_k, Y_{\zeta}) \big\|_2 
    \leq \frac{\sqrt{d} (\zeta - t_k)(\lambda_{T - \zeta} + \lambda_{T - \zeta}^3)}{\sigma_{T - \zeta}^4} + \frac{2(\zeta - t_k)\lambda_{T - \zeta}^2\|Y_{\zeta}\|_2}{\sigma_{T - \zeta}^4} \\
    &\qquad  + \Big( \frac{d \lambda_{T - \zeta}^2 }{\sigma_{T - \zeta}^4} + \frac{\sqrt{d}(\lambda_{T - \zeta} + \lambda_{T - \zeta}^3) \|Y_{\zeta}\|_2}{\sigma_{T - \zeta}^4} \Big) \cdot \int_{t_k}^{\zeta} \frac{\lambda_{T - r} \EE[\|\theta - m(r, Y_{\zeta})\|_2 \mid \lambda_{T - r} \theta + \sigma_{T - r} g = Y_{\zeta}]}{\sigma_{T - r}^2} \dd r. 
\end{split}
\end{align}
Further, we make the observation that  
\begin{align}
\label{eq:s-zeta-s-tk-nabla-diff}
    & \|s(\zeta, Y_{\zeta}) - s(t_k, Y_{t_k}) - \sqrt{2} \nabla_x s(t_k, Y_{t_k} + h_{t_k, \zeta})H_{\zeta}^{\tau}\|_2 \nonumber \\
    &\overset{(i)}{\leq}  \|s(\zeta, Y_{\zeta}) - s(t_k, Y_{\zeta})\|_2 + \|s(t_k, Y_{\zeta}) - s(t_k, Y_{t_k}) -  \nabla_x s(t_k, Y_{t_k} + h_{t_k, \zeta})(Y_{\zeta} - Y_{t_k})\|_2 \nonumber \\
    & \qquad + \| \nabla_x s(t_k, Y_{t_k} + h_{t_k, \zeta})( Y_{\zeta} - Y_{t_k} - \sqrt{2} B_{\zeta} + \sqrt{2} B_{t_k})\|_2  + \|\sqrt{2} \nabla_x s(t_k, Y_{t_k} + h_{t_k, \zeta})(B_{\zeta} - B_{t_k} - H_{\zeta}^{\tau})\|_2 \nonumber \\
  & \overset{(ii)}{\leq} \|s(\zeta, Y_{\zeta}) - s(t_k, Y_{\zeta})\|_2 + 8 d(\sigma_{T - t_k}^{-2} + \lambda_{T - t_k}^{2} \sigma_{T - t_k}^{-4}) \|Y_{\zeta} - Y_{t_k}\|_2 \nonumber  \\
    & \qquad + 4 d(\sigma_{T - t_k}^{-2} + \lambda_{T - t_k}^{2} \sigma_{T - t_k}^{-4}) \|Y_{\zeta} - Y_{t_k} - \sqrt{2} B_{\zeta} + \sqrt{2} B_{t_k}\|_2  + 6d (\sigma_{T - t_k}^{-2} + \lambda_{T - t_k}^{2} \sigma_{T - t_k}^{-4}) \gamma_{t_k, \zeta}. 
\end{align}
In the above display, $(i)$ comes from the triangle inequality, whereas $(ii)$ is by the first point of \cref{lemma:score-properties} and Assumption \ref{assumption:moments}. 
Substituting the upper bounds in \cref{eq:szeta-stk-diff,eq:s-zeta-s-tk-nabla-diff} into the definition of $v_{f, t}$ yields 
\begin{align}
    & \|v_{f, t}\|_2 \nonumber  \\ 
    & \leq  \sqrt{2}  \int_{t_k + \tau}^t \Big( \frac{\sqrt{d} (\zeta - t_k)(\lambda_{T - \zeta} + \lambda_{T - \zeta}^3)}{\sigma_{T - \zeta}^4} + \frac{2(\zeta - t_k)\lambda_{T - \zeta}^2\|Y_{\zeta}\|_2}{\sigma_{T - \zeta}^4} \Big) \dd \zeta \nonumber \\
    & \leq \sqrt{2}  \int_{t_k + \tau}^t  \Big( \frac{d \lambda_{T - \zeta}^2 }{\sigma_{T - \zeta}^4} + \frac{\sqrt{d}(\lambda_{T - \zeta} + \lambda_{T - \zeta}^3) \|Y_{\zeta}\|_2}{\sigma_{T - \zeta}^4} \Big) \cdot \int_{t_k}^{\zeta} \frac{\lambda_{T - r} \EE[\|\theta - m(r, Y_{\zeta})\|_2 \mid \lambda_{T - r} \theta + \sigma_{T - r} g = Y_{\zeta}]}{\sigma_{T - r}^2} \dd r \dd \zeta \nonumber \\
    & + d \big(\sigma_{T - t_k}^{-2} + \lambda_{T - t_k}^{2} \sigma_{T - t_k}^{-4} \big) \int_{t_k + \tau}^t  (9 \gamma_{t_k, \zeta} + 6\|Y_{\zeta} - Y_{t_k} - \sqrt{2} B_{\zeta} + \sqrt{2} B_{t_k}\|_2 + 12 \|Y_{\zeta} - Y_{t_k}\|_2) \dd \zeta. 
    \label{eq:bound-vf}
\end{align}

The next step is to upper bound the norm of $v_{b, t}$, towards which we first make note of the following expression
\begin{align*}
    v_{b, t}& =  \int_{t_k + \tau}^t \int_0^{\zeta - t_k} \int_0^{e^{\zeta - t_k - r}} \dd x (\dd B_{t_k + r} - \dd H_{t_k + r}^{\tau}) \dd \zeta \\
    &=  \int_{t_k + \tau}^t \int_0^{\zeta - t_k} \int_0^{e^{\zeta - t_k}} \mathbbm{1}\{r \leq \zeta - t_k - \log x\} \dd x (\dd B_{t_k + r} - \dd H_{t_k + r}^{\tau}) \dd \zeta \\
    &=  \int_{t_k + \tau}^t \int_0^{e^{\zeta - t_k}} (B_{\zeta - \log x} - B_{t_k} - H_{\zeta - \log x}^{\tau}) \dd x \dd \zeta. 
\end{align*}
As a consequence,   we have 
\begin{align}
\label{eq:bound-vb}
    \|v_{b, t}\|_2 \leq  \int_{t_k + \tau}^t \int_0^{e^{\zeta - t_k}} \gamma_{t_k, \zeta - \log x} \dd x \dd \zeta. 
\end{align}

Finally, we can conclude the proof of the lemma using the upper bounds derived above. 
For $t \in [t_k, t_{k + 1}]$, we define $$\gamma_{\ast}(t) = \sup_{s \in [t_k, t]} \gamma_{t_k, s}.$$ 
We see that at $t = t_k$ we have $\gamma_{\ast}(t_k) = 0$, and our goal is to upper bound $\gamma_{\ast}(t_{k + 1})$. 
In addition, since the processes $(B_t - B_{t_k})_{t_k \leq t \leq t_{k + 1}}$ and $(H_t^{\tau})_{t_k \leq t \leq t_{k + 1}}$ have continuous sample paths, the mapping $t \mapsto \gamma_{\ast}(t)$ is continuous on $ t \in [t_k, t_{k + 1}]$. 
Substitution of \cref{eq:bound-vc,eq:bound-va,eq:bound-vd,eq:bound-ve,eq:bound-vf,eq:bound-vb} into \cref{eq:H-B-diff} gives 
\begin{align}
\label{eq:gamma-C0-C1}
    \gamma_{\ast}(t) \leq C_0 + C_1 \gamma_{\ast}(t),
\end{align}
where $C_0, C_1 \in \RR_{> 0}$ are defined as follows:
\begin{align*}
    & C_0 = \frac{(t - t_k)^2}{4} \|s(t_k, Y_{t_k})\|_2 + \int_{t_k + \tau}^t \frac{44 (\sigma_{T - t_k}^{-2} + \lambda_{T - t_k}^2 \sigma_{T - t_k}^{-4}) {d}}{e^{\zeta - t_k} - e^{-\zeta + t_k}}  \int_0^{\zeta - t_k} \| B_{t_k + r} - B_{t_k} \|_2 \dd r \dd \zeta  \\
    & + 2 (t - t_k) \int_0^{t - t_k} \hspace{-0.4cm}\big\| s(t_k + r, Y_{t_k + r}) - s(t_k, Y_{t_k}) \big\|_2 \dd r + 10 (t - t_k) d(\sigma_{T - t_k}^{-2} + \lambda_{T - t_k}^{2} \sigma_{T - t_k}^{-4})   \int_0^{t - t_k} \hspace{-0.4cm}  \|B_{t_k + r} - B_{t_k}\|_2 \dd r\\
    & + 2\sqrt{2}  \int_{t_k + \tau}^t  \Big( \frac{d \lambda_{T - \zeta}^2 }{\sigma_{T - \zeta}^4} + \frac{\sqrt{d}(\lambda_{T - \zeta} + \lambda_{T - \zeta}^3) \|Y_{\zeta}\|_2}{\sigma_{T - \zeta}^4} \Big) \cdot \int_{t_k}^{\zeta} \frac{\lambda_{T - r} \sqrt{d}}{\sigma_{T - r}^2} \dd r \dd \zeta  \nonumber \\
    & + \sqrt{2}  \int_{t_k + \tau}^t \Big( \frac{\sqrt{d} (\zeta - t_k)(\lambda_{T - \zeta} + \lambda_{T - \zeta}^3)}{\sigma_{T - \zeta}^4} + \frac{2(\zeta - t_k)\lambda_{T - \zeta}^2\|Y_{\zeta}\|_2}{\sigma_{T - \zeta}^4} \Big) \dd \zeta \\
    & + d (\sigma_{T - t_k}^{-2} + \lambda_{T - t_k}^{2} \sigma_{T - t_k}^{-4}) \int_{t_k + \tau}^t  (6 \|Y_{\zeta} - Y_{t_k} - \sqrt{2} B_{\zeta} + \sqrt{2} B_{t_k}\|_2 + 12 \|Y_{\zeta} - Y_{t_k}\|_2) \dd \zeta, \\
    & C_1 = 1.3(t - t_k) +   {(53 (t - t_k) + 10 (t - t_k)^2) (\sigma_{T - t_k}^{-2} + \lambda_{T - t_k}^2 \sigma_{T - t_k}^{-4}) {d}}.   
\end{align*}
Under Assumption \ref{assumption:step-size}, we know that $C_1 \leq 1 / 2$ for all $t \in [t_k, t_{k + 1}]$, and hence $\gamma_{\ast}(t) \leq 2 C_0$. 
Invoking  \cref{lemma:W-B-diff} tells us that: for all $\zeta \in [t_k, t_{k + 1}]$ we have 
\begin{align*}
    & \EE\big[ \|Y_{\zeta} - Y_{t_k}\|_2^2 \big] \lesssim d(\zeta - t_k) + d \sigma_{T - \zeta}^{-2}(\zeta - t_k)^2, \qquad \EE\big[ \|Y_{\zeta}\|_2^2 \big] \lesssim d, \qquad \EE\big[ \|B_{\zeta} - B_{t_k}\|_2^2 \big] \lesssim d(\zeta - t_k),  \\
    & \EE\big[ \|s(\zeta, Y_{\zeta})\|_2^2 \big] \lesssim d \sigma_{T - \zeta}^{-2}, \qquad \EE\big[ \|Y_{\zeta} - Y_{t_k} - \sqrt{2} B_{\zeta} + \sqrt{2} B_{t_k}\|_2^2 \big] \lesssim d \sigma_{T - \zeta}^{-2} (\zeta - t_k)^2, \\
    & \EE\big[ \|Y_{\zeta} - Y_{t_k}\|_2^4 \big] \lesssim d^2(\zeta - t_k)^2 + d^2 \sigma_{T - \zeta}^{-4}(\zeta - t_k)^4, \qquad \EE\big[ \|Y_{\zeta}\|_2^4 \big] \lesssim d^2, \qquad \EE\big[ \|B_{\zeta} - B_{t_k}\|_2^4 \big] \lesssim d^2(\zeta - t_k)^2,  \\
    & \EE\big[ \|s(\zeta, Y_{\zeta})\|_2^2 \big] \lesssim d^2 \sigma_{T - \zeta}^{-4}, \qquad \EE\big[ \|Y_{\zeta} - Y_{t_k} - \sqrt{2} B_{\zeta} + \sqrt{2} B_{t_k}\|_2^2 \big] \lesssim d^2 \sigma_{T - \zeta}^{-4} (\zeta - t_k)^4. 
\end{align*}
Taking the expectation of $C_0^2$ and $C_0^4$ implies that for all $t \in [t_k, t_{k + 1}]$, 
\begin{align*}
     \EE[\gamma_{\ast}(t)^2] 
    &\lesssim  \sigma_{T - t}^{-2} (t - t_k)^4 d + (\sigma_{T - t_k}^{-2} + \lambda_{T - t_k}^2 \sigma_{T - t_k}^{-4})^2 (t - t_k)^3 d^3 + \lambda_{T - t}^4 \sigma_{T - t}^{-12} (t - t_k)^4 d^3 + \lambda_{T - t}^2 \sigma_{T - t}^{-8} (t - t_k)^4 d, \\
     \EE[\gamma_{\ast}(t)^4] 
    &\lesssim \sigma_{T - t}^{-4} (t - t_k)^8 d^2 + (\sigma_{T - t_k}^{-2} + \lambda_{T - t_k}^2 \sigma_{T - t_k}^{-4})^4 (t - t_k)^6 d^6 + \lambda_{T - t}^8 \sigma_{T - t}^{-24} (t - t_k)^8 d^6 + \lambda_{T - t}^4 \sigma_{T - t}^{-16} (t - t_k)^8 d^2. 
\end{align*}
The proof is thus complete. 

\subsection{Proof of \cref{lemma:Q-barQ}}
\label{sec:proof-lemma:Q-barQ}

    According to \cref{lemma:KL-barX-X}, we see that: in order to upper bound $\KL(Q_{T - \delta} \parallel \overline Q_{T - \delta})$, it suffices to control 
    \begin{align*}
        \sum_{k = 0}^{K - 1} \int_{t_k}^{t_{k + 1}} \EE\Big[ \big\| \cF(t, Y_t) - \overline \cF(t, Y_{t_k}, (H_s^{\tau})_{t_k \leq s \leq t}) \big\|_2^2  \Big] \dd t. 
    \end{align*}
    In view of \cref{lemma:score-properties} and Assumption \ref{assumption:moments}, we know that for all $t \in [0, T)$ and $x \in \RR^d$, it holds that 
    \begin{align*}
         \|\nabla_x s(t, x)\|_2 \lesssim (\sigma_{T - t}^{-2} + \lambda_{T - t}^2 \sigma_{T - t}^{-4}) d, \qquad \|\nabla_x^2 s(t, x)\|_2 \lesssim \lambda_{T - t}^3 \sigma_{T - t}^{-6} d^{3/2},
    \end{align*}
    which in turn allow one to show that $\bar \cF$ is Lipschitz continuous with respect to the last input. More precisely,  
    \begin{align*}
        & \big\|\overline \cF(t, Y_{t_k}, (H_s^{\tau})_{t_k \leq s \leq t}) - \overline \cF(t, Y_{t_k}, (B_s - B_{t_k})_{t_k \leq s \leq t}) \big\|_2 \\
        & \lesssim \|\nabla_x s\big(t_k, \overline Y_{t_k} + h_{t_k, t}\big)\|_2 \|B_t - B_{t_k} - H_t^{\tau}\|_2 + \|\nabla_x s\big(t_k, \overline Y_{t_k} + h_{t_k, t}\big) - \nabla_x s\big(t_k, \overline Y_{t_k} + b_{t_k, t}\big)\|_2 \cdot \|B_t - B_{t_k}\|_2 \\
        & + \|\nabla_x s\big(t_k, \overline Y_{t_k} + h_{t_k, t}\big)\|_2 \|h_{t_k, t} - b_{t_k, t}\|_2 + \|\nabla_x s\big(t_k, \overline Y_{t_k} + h_{t_k, t}\big) - \nabla_x s\big(t_k, \overline Y_{t_k} + b_{t_k, t}\big)\|_2 \cdot \|b_{t_k, t}\|_2 \\
        & \lesssim \Big(d (\sigma_{T - t}^{-2} + \lambda_{T - t}^2 \sigma_{T - t}^{-4}) + d^{3/2} \lambda_{T - t}^3 \sigma_{T - t}^{-6} (\|B_t - B_{t_k}\|_2 +  \| b_{t_k, t} \|_2)\Big)  \sup_{t_k \leq t \leq t_{k + 1}} \big\|B_t - B_{t_k} - H_t^{\tau} \big\|_2, 
    \end{align*}
    with 
    \begin{align*}
        & b_{t_k, t} = 2\sqrt{2} (e^{t - t_k} - e^{-t + t_k})^{-1} \int_0^{t - t_k} e^{t - t_k - r}(B_{t_k + r} - B_{t_k}) \dd r, \\
        & h_{t_k, t} = 2\sqrt{2} (e^{t - t_k} - e^{-t + t_k})^{-1} \int_0^{t - t_k} e^{t - t_k - r}H_{t_k + r}^{\tau} \dd r.
    \end{align*} 
    %
    Putting the above inequality together with \cref{lemma:upper-bound-H}  gives
    \begin{align*}
        & \sum_{k = 0}^{K - 1} \int_{t_k}^{t_{k + 1}} \EE\Big[ \big\| \cF(t, Y_t) - \overline \cF(t, Y_{t_k}, (H_s^{\tau})_{t_k \leq s \leq t}) \big\|_2^2  \Big]  \dd t \\
        & \lesssim  \sum_{k = 0}^{K - 1} \int_{t_k}^{t_{k + 1}}\EE\Big[ \big\| \cF(t, Y_t) - \overline \cF(t, Y_{t_k}, (B_s - B_{t_k})_{t_k \leq s \leq t}) \big\|_2^2  \Big] \dd t \\
        & \quad + \sum_{k = 0}^{K - 1} \int_{t_k}^{t_{k + 1}} \EE\Big[ \big\| \overline \cF(t, Y_{t_k}, (H_s^{\tau})_{t_k \leq s \leq t}) - \overline \cF(t, Y_{t_k}, (B_s - B_{t_k})_{t_k \leq s \leq t}) \big\|_2^2  \Big] \dd t \\
        & \lesssim  \sum_{k = 0}^{K - 1} \int_{t_k}^{t_{k + 1}}\EE\Big[ \big\| \cF(t, Y_t) - \overline \cF(t, Y_{t_k}, (B_s - B_{t_k})_{t_k \leq s \leq t}) \big\|_2^2  \Big] \dd t \\
        & \quad + \sum_{k = 0}^{K - 1}\Delta_k \Big(d^2 (\sigma_{T - t}^{-2}  + \lambda_{T - t}^2 \sigma_{T - t}^{-4})^2 + d^{3} \lambda_{T - t}^6 \sigma_{T - t}^{-12} \Delta_k \Big)  \\
        & \quad \times \Big(\sigma_{T - t_{k + 1}}^{-2} \Delta_k^4 d + (\sigma_{T - t_k}^{-2} + \lambda_{T - t_k}^2 \sigma_{T - t_k}^{-4})^2 \Delta_k^3 d^3 + \lambda_{T - t_{k + 1}}^4 \sigma_{T - t_{k + 1}}^{-12} \Delta_k^4 d^3 + \lambda_{T - t_{k + 1}}^2 \sigma_{T - t_{k + 1}}^{-8} \Delta_k^4 d \Big). 
    \end{align*}
    It is seen from the triangle inequality that 
    \begin{align*}
        \sum_{k = 0}^{K - 1} \int_{t_k}^{t_{k + 1}} \EE\Big[ \big\| \cF(t, Y_t) - \overline \cF(t, Y_{t_k}, (B_s - B_{t_k})_{t_k \leq s \leq t}) \big\|_2^2  \Big] \dd t \lesssim \TT_1 + \TT_2 + \TT_3 + \TT_4, 
    \end{align*}
    where
    \begin{align*}
        & \TT_1 = \sum_{k = 0}^{K - 1} \int_{t_k}^{t_{k + 1}} (e^{t - t_k} - e^{-t + t_k})^2 \EE\big[ \|  s(t_k, Y_{t_k})  \|_2^2 \big] \dd t, \\
        & \TT_2 = \sum_{k = 0}^{K - 1} \int_{t_k}^{t_{k + 1}} \EE\big[ \big\| s(t, Y_t) - s(t_k, Y_{t_k}) - \sqrt{2} \nabla_x s(t_k, Y_{t_k} + b_{t_k, t})(B_t - B_{t_k}) \big\|_2^2 \big] \dd t, \\
        & \TT_3 = \sum_{k = 0}^{K - 1} \int_{t_k}^{t_{k + 1}} \int_0^{t - t_k} \EE \big[ \| s(t_k + r, Y_{t_k + r}) - s(t_k, Y_{t_k}) - \sqrt{2} \nabla_x s(t_k, Y_{t_k} + b_{t_k, t})(B_{t_k + r} - B_{t_k})  \|_2^2 \big] \dd r \dd t, \\
        & \TT_4 = \sum_{k = 0}^{K - 1} \int_{t_k}^{t_{k + 1}} \EE\big[ \| s(t_k, Y_{t_k} + b_{t_k, t}) - s(t_k, Y_{t_k}) - \nabla_x s(t_k, Y_{t_k} + b_{t_k, t}) b_{t_k, t} \|_2^2 \big] \dd t. 
    \end{align*}
The  terms $\TT_1$, $\TT_2$, $\TT_3$ and $\TT_4$ are upper bounded separately in \cref{lemma:T1234} below, whose proof can be found in Appendix \ref{sec:proof-lemma:T1234}.  

\begin{lemma}
\label{lemma:T1234}
    Under the assumptions of \cref{lemma:Q-barQ}, it holds that
    \begin{enumerate}
        \item $\TT_1 \lesssim \sum_{k = 0}^{K - 1} \sigma_{T - t_k}^{-2} \Delta_k^3 d$; 
        \item $\TT_2 \lesssim \sum_{k = 0}^{K - 1} \big( d^3 \sigma_{T - t_{k + 1}}^{-6} \Delta_k^3  + d^7 \lambda_{T - t_k}^{8} \sigma_{T - t_k}^{-16} \Delta_k^4 \big)$;
        \item $\TT_3 \lesssim \sum_{k = 0}^{K - 1} \big( d^3 \sigma_{T - t_{k + 1}}^{-6} \Delta_k^4  + d^7 \lambda_{T - t_k}^{8} \sigma_{T - t_k}^{-16} \Delta_k^5 \big)$;
        \item $\TT_4 \lesssim \sum_{k = 0}^{K - 1} \big( d^3 \sigma_{T - t_{k + 1}}^{-6} \Delta_k^3  + d^7 \lambda_{T - t_k}^{8} \sigma_{T - t_k}^{-16} \Delta_k^4 \big)$.
    \end{enumerate}
\end{lemma}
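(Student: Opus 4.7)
My plan is to bound the four terms separately, drawing on a shared toolkit: (i) the closed-form expressions for $\partial_t s$, $\nabla_x^2 s$, and $\nabla_x^3 s$ from Lemmas~\ref{lemma:partial-t-score}--\ref{lemma:grad3-s}; (ii) the conditional identity $\theta - m(t, x) = -(\sigma_t/\lambda_t)\bigl(g - \EE[g\mid\lambda_t\theta+\sigma_t g = x]\bigr)$ appearing in \cref{eq:G-expression}, which lets me rewrite $\nabla_x^2 s$ and $\nabla_x^3 s$ in terms of posterior moments of $g - \EE[g\mid\cdot]$ and thereby trade the natural $\lambda^k\sigma^{-2k}$ prefactors for sharper $\sigma^{-k}$ ones; and (iii) the Gaussian moment bounds for $b_{t_k,t}$ supplied by Lemma~\ref{lemma:BM-integral-cov}, which show that the per-coordinate variance of $b_{t_k,t}$ scales like $\Delta_k$, so $\EE[\|b_{t_k,t}\|_2^{2p}]\lesssim(d\Delta_k)^p$.

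$\TT_1$ is immediate from $(e^{t-t_k}-e^{-t+t_k})^2 \lesssim (t-t_k)^2$ (valid since $\Delta_k\leq\kappa\leq 1/4$) together with the standard score-moment bound $\EE[\|s(t_k, Y_{t_k})\|_2^2]\lesssim d\sigma_{T-t_k}^{-2}$ (e.g.\ \citet[Lemma 6]{benton2024nearly}), integrated over $t\in[t_k, t_{k+1}]$. For $\TT_4$, I would apply the Taylor identity
\[
s(t_k, Y_{t_k}+b_{t_k,t}) - s(t_k, Y_{t_k}) - \nabla_x s(t_k, Y_{t_k}+b_{t_k,t})\,b_{t_k,t} = -\int_0^1 u\,\nabla_x^2 s(t_k, Y_{t_k}+u b_{t_k,t})[b_{t_k,t}, b_{t_k,t}]\,du,
\]
then further expand $\nabla_x^2 s(t_k, Y_{t_k}+u b_{t_k,t}) = \nabla_x^2 s(t_k, Y_{t_k}) + \int_0^u \nabla_x^3 s(t_k, Y_{t_k}+v b_{t_k,t})[b_{t_k,t}]\,dv$. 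For the leading second-order piece, conditioning on $Y_{t_k}$ renders $b_{t_k,t}$ centered Gaussian with covariance $\rho^2 I_d$, $\rho^2\asymp\Delta_k$, so Isserlis' formula gives $\EE_b[\|\nabla_x^2 s(t_k, Y_{t_k})[b_{t_k,t}, b_{t_k,t}]\|_2^2]\lesssim \rho^4\bigl(\|\nabla_x^2 s[I]\|_2^2 + \|\nabla_x^2 s\|_F^2\bigr)$; invoking \cref{eq:G-expression} bounds both norms (under $Y_{t_k}\sim q_{T-t_k}$) by $\sigma_{T-t_k}^{-3}d^{3/2}$ rather than the naive $\lambda_{T-t_k}^3\sigma_{T-t_k}^{-6}d^{3/2}$, yielding the $d^3\sigma^{-6}\Delta_k^3$ contribution after integration. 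The cubic remainder $\|\nabla_x^3 s\|_F\,\|b_{t_k,t}\|_2^3$ is treated using Lemma~\ref{lemma:grad3-s} together with $\EE[\|b_{t_k,t}\|_2^6]\lesssim(d\Delta_k)^3$, contributing the secondary $d^7\lambda^8\sigma^{-16}\Delta_k^4$ term.

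For $\TT_2$ and $\TT_3$, I would apply It\^o's formula to $s(s, Y_s)$ along the reverse SDE~\eqref{eq:reverse-OU}, producing
\[
s(t, Y_t) - s(t_k, Y_{t_k}) = \int_{t_k}^t \bigl[\partial_s s + \nabla_x s\cdot(Y_s+2s) + \nabla_x^2 s[I]\bigr]\,ds + \sqrt{2}\int_{t_k}^t \nabla_x s(s, Y_s)\,dB_s,
\]
and compare the stochastic-integral piece against the target $M_0:=\sqrt{2}\,\nabla_x s(t_k, Y_{t_k}+b_{t_k,t})(B_t-B_{t_k})$ via the $\cF_{t_k}$-measurable intermediate reference $M_1 := \sqrt{2}\,\nabla_x s(t_k, Y_{t_k})(B_t-B_{t_k})$. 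The gap $M-M_1$ is a bona fide It\^o integral, handled by It\^o isometry and Lemmas~\ref{lemma:partial-t-grad-s},~\ref{lemma:score-properties},~\ref{lemma:W-B-diff}; the gap $M_1-M_0$ is a product of a score-gradient difference and a Brownian increment, controlled by Cauchy--Schwarz combined with the $\nabla_x^2 s$ bound and $\EE[\|b_{t_k,t}\|_2^2]\lesssim d\Delta_k$. The bias integral, squared and time-averaged, contributes the $d^3\sigma^{-6}\Delta_k^3$ rate through Lemma~\ref{lemma:T2} for $\partial_t s$ and the identity-based bound on $\nabla_x^2 s[I]$ derived above, while $\nabla_x s\cdot(Y_s+2s)$ is handled using Lemma~\ref{lemma:score-properties} and the second-moment bounds on $Y_s$ and $s(s, Y_s)$. $\TT_3$ is analogous modulo one extra integration in $r\in[0, t-t_k]$, which supplies the additional $\Delta_k$ factor.

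The hard part will be bookkeeping the higher-order terms in which $\nabla_x^2 s$ or $\nabla_x^3 s$ is evaluated at perturbed arguments $Y_{t_k}+u\,b_{t_k,t}$ rather than at $Y_{t_k}$ itself: the sharp identity-based bounds from \cref{eq:G-expression} apply cleanly only under $Y_{t_k}\sim q_{T-t_k}$, so at perturbed points I would fall back on the pointwise estimates $\|\nabla_x^2 s(t,\cdot)\|_F \lesssim \lambda^3\sigma^{-6}d^{3/2}$ (and its analogue for $\nabla_x^3 s$) granted by Assumption~\ref{assumption:moments} together with Lemmas~\ref{lemma:score-properties} and~\ref{lemma:grad3-s}. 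This trade-off between sharpness and uniformity is exactly what produces the loose $\lambda_{T-t_k}^8\sigma_{T-t_k}^{-16}$ prefactor in the secondary terms of $\TT_2$, $\TT_3$, $\TT_4$.
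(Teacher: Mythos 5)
Your proposal follows essentially the same route as the paper's proof: the same bound for $\TT_1$, the same decomposition of $\TT_2$ into an It\^o drift/martingale expansion around $\nabla_x s(t_k, Y_{t_k})$ plus the gradient-perturbation term, the same FTC-plus-Wick treatment of that perturbation and of $\TT_4$ (with the pointwise $\nabla_x^3 s$ fallback producing the secondary $\lambda^8\sigma^{-16}$ terms), and the same ``one extra $r$-integration'' reduction of $\TT_3$ to $\TT_2$. The one caution is that for the $M_1-M_0$ gap in $\TT_2$ a literal Cauchy--Schwarz with tensor-norm and moment bounds loses dimension factors; you must run the same Isserlis computation you describe for $\TT_4$ (using that $(b_{t_k,t}, B_t-B_{t_k})$ is a centered Gaussian pair independent of $Y_{t_k}$, so only paired indices survive) to land on $d^3\sigma_{T-t_k}^{-6}\Delta_k^2$, exactly as the paper does in \cref{eq:ii-term1}.
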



\noindent 
As a consequence of \cref{lemma:T1234}, we reach
\begin{align*}
    \KL(Q_{T - \delta} \parallel \overline Q_{T - \delta}) \lesssim  \sum_{k = 0}^{K - 1} \big( d^3 \sigma_{T - t_{k + 1}}^{-6} \Delta_k^3  + d^7 \lambda_{T - t_k}^{8} \sigma_{T - t_k}^{-16} \Delta_k^4 \big). 
\end{align*}

Let $K_0 = \inf\{k: T - t_k \leq 1\}$, look at the indices that are above and below $K_0$ separately. 
Specifically, for all $k \leq K_0 - 1$ we have $\sigma_{T - t_k}^{-2} \lesssim 1$. 
In addition, for all $K - 1 \geq k \geq K_0$ we have $\sigma_{T - t_k}^{-2} \lesssim (T - t_k)^{-1}$, and hence under Assumption \ref{assumption:step-size} we have $d^3 \sigma_{T - t_{k + 1}}^{-6} \Delta_k^3  + d^7 \lambda_{T - t_k}^{8} \sigma_{T - t_k}^{-16} \Delta_k^4 \lesssim d^3 \kappa^2 \Delta_k + d^7 \lambda_{T - t_k}^{8} \sigma_{T - t_k}^{-4} \kappa^3 \Delta_k$. Putting these together yields
\begin{align*}
    & \sum_{k = 0}^{K - 1} \big( d^3 \sigma_{T - t_{k + 1}}^{-6} \Delta_k^3  + d^7 \lambda_{T - t_k}^{8} \sigma_{T - t_k}^{-16} \Delta_k^4 \big) \\
    & = \sum_{k = 0}^{K_0 - 1} \big( d^3 \sigma_{T - t_{k + 1}}^{-6} \Delta_k^3  + d^7 \lambda_{T - t_k}^{8} \sigma_{T - t_k}^{-16} \Delta_k^4 \big) +  \sum_{k = K_0}^{K - 1} \big(  d^3 \kappa^2 \Delta_k + d^7 \lambda_{T - t_k}^{8} \sigma_{T - t_k}^{-4} \kappa^3 \Delta_k \big) \\
    & \lesssim  d^3  \kappa^2 T  + d^7  \kappa^3 (\delta^{-1} + T), 
\end{align*}
as claimed in the lemma. 


\subsection{Proof of \cref{lemma:discretized-KL}}       
\label{sec:proof-lemma:discretized-KL}


Denoting by $\overline Q_{T - \delta}^{\dis}$ the distribution of $(\overline Y_{t_k})_{0 \leq k \leq K}$, we observe that 
\begin{align}
\label{eq:term1-and-2}
    \KL(Q^{\dis}_{T - \delta} \parallel \wh Q^{\dis}_{T - \delta}) = \underbrace{ \int \dd Q_{T - \delta}^{\dis} \log \frac{\dd Q_{T - \delta}^{\dis}}{\dd \overline Q_{T - \delta}^{\dis}}}_{(\text{i})} + \underbrace{\int \dd Q_{T - \delta}^{\dis} \log \frac{\dd \overline Q_{T - \delta}^{\dis}}{\dd \wh Q_{T - \delta}^{\dis}}}_{(\text{ii})}, 
\end{align}
leaving us with two terms to control. 

\subsubsection*{Bounding the term (\text{i})}
Note that the term (\text{i}) is essentially $(\text{i}) = \KL(Q_{T - \delta}^{\dis} \parallel \overline Q_{T - \delta}^{\dis})$. 
Recall that the data processing inequality \citep[Theorem 7.2]{polyanskiy2020information} asserts that 
$\KL\big(P_{f(X)} \parallel P_{f(Y)}\big) \leq \KL(P_X \parallel P_Y)$ holds for  any two random objects $X$ and $Y$ on the same space and any mapping $f$, with $P_Z$ the distribution of $Z$. 
Taking this together with \cref{lemma:Q-barQ}, we reach
\begin{align*}
    \int \dd Q_{T - \delta}^{\dis} \log \frac{\dd Q_{T - \delta}^{\dis}}{\dd \overline Q_{T - \delta}^{\dis}} \leq \int \dd Q_{T - \delta} \log \frac{\dd Q_{T - \delta}}{\dd \overline Q_{T - \delta}}  \lesssim \,  d^3  \kappa^2 T  + d^7  \kappa^3 (\delta^{-1} + T). 
\end{align*}

\subsubsection*{Bounding the term (\text{ii})}
Recall that $\wh Q_{T - \delta}$ is the distribution of process \eqref{eq:hat-X-tk} and $\overline Q_{T - \delta}$ is that of process \eqref{eq:bar-X-tk}. 
By \cref{eq:hat-X-tk} and \cref{eq:bar-X-tk}, we see that
for each $k \in \{0,1,\dots,K - 1\}$, 
\begin{align}
\label{eq:bar-hat-updates}
\begin{split}
      & \overline Y_{t_{k + 1}} =  e^{\Delta_k} \overline  Y_{t_k} + (e^{\Delta_k} - e^{-\Delta_k}) s(t_k, \overline Y_{t_k} + g_{t_k, t_{k + 1}}) + \sqrt{2} \int_0^{\Delta_k} e^{\Delta_k - r} \dd W_{t_k + r}, \\
     & \wh Y_{t_{k + 1}} =  e^{\Delta_k} \wh  Y_{t_k} + (e^{\Delta_k} - e^{-\Delta_k}) \widehat s(t_k, \wh Y_{t_k} + g_{t_k, t_{k + 1}}) + \sqrt{2} \int_0^{\Delta_k} e^{\Delta_k - r} \dd W_{t_k + r}, 
\end{split}
\end{align}
where we recall that $g_{t_k, t_{k + 1}}$ is defined in \cref{eq:double-index-g}. 
Note that $g_{t_k,t_{k+1}}$ and 
$\sqrt{2} \int_0^{\Delta_k} e^{\Delta_k - r} \dd W_{t_k + r}$ are correlated Gaussian random vectors, 
which admit simpler expressions. 
Specifically, from \cref{lemma:BM-integral-cov}, we can write 
\begin{align}
    g_{t_k, t_{k + 1}} = \zeta_{k, 1} g_{k, 1}, \qquad  \sqrt{2} \int_0^{\Delta_k} e^{\Delta_k - r} \dd W_{t_k + r} = \zeta_{k, 2} g_{k, 1} + \zeta_{k, 3} g_{k, 2}, 
\end{align}
where $g_{k, 1}, g_{k, 2} \overset{\mathrm{i.i.d.}}{\sim} \cN(0, I_d)$, and 
\begin{align}
\label{eq:three-zetas}
    & \zeta_{k, 1} = \frac{2\sqrt{2} f_1(\Delta_k)^{1/2}}{e^{\Delta_k} - e^{-\Delta_k}}, \qquad \zeta_{k, 2} = \frac{\sqrt{2} f_3(\Delta_k)}{f_1(\Delta_k)^{1/2}}, \qquad \zeta_{k, 3} = \sqrt{2f_2(\Delta_k) - \frac{2f_3(\Delta_k)^2}{f_1(\Delta_k)}}. 
\end{align}
In the above display, we recall the definitions of $f_1, f_2$ and $f_3$  in \cref{eq:covariance_functions}. 

For every $0\leq k \leq K$,  denote by $Q_k$ the distribution of $(Y_{t_0}, Y_{t_1}, \cdots,  Y_{t_k})$, $\overline Q_k$ the distribution of $(\overline Y_{t_0}, \overline Y_{t_1}, \cdots, \overline Y_{t_k})$, and $\wh Q_k$ that of $(\wh Y_{t_0}, \wh Y_{t_1}, \cdots, \wh Y_{t_k})$. 
Therefore, it follows from \cref{eq:bar-hat-updates} that
\begin{align*}
    & \int \dd Q_{T - \delta}^{\dis} \log \frac{\dd \overline Q_{T - \delta}^{\dis}}{\dd \wh Q_{T - \delta}^{\dis}} \\
    & = \sum_{k = 0}^{K - 1} \int \dd Q_{k + 1} \log \frac{\dd \overline Q_{k + 1}(Y_{t_{k + 1}} \mid Y_{t_0}, \cdots, Y_{t_{k}}) }{\dd \wh Q_{k + 1} (Y_{t_{k + 1}} \mid Y_{t_0}, \cdots, Y_{t_k})} \\
    & = \sum_{k = 0}^{K - 1} \int \dd Q_{k + 1} \log \frac{\int \exp\big(-\|Y_{t_{k + 1}} - e^{\Delta_k} Y_{t_k} - (e^{\Delta_k} - e^{-\Delta_k}) s(t_k, Y_{t_k} + \zeta_{k, 1} g) - \zeta_{k, 2} g\|_2^2 / (2\zeta_{k, 3}^2)\big) \phi(g) \dd g}{\int \exp\big(-\|Y_{t_{k + 1}} - e^{\Delta_k} Y_{t_k} - (e^{\Delta_k} - e^{-\Delta_k}) \widehat s(t_k, Y_{t_k} + \zeta_{k, 1} g) - \zeta_{k, 2} g\|_2^2 / (2\zeta_{k, 3}^2)\big) \phi(g) \dd g},
\end{align*}
where $\phi(\cdot)$ denotes the probability density function of $\cN(0, I_d)$. 
Recalling that $s(t, x) = \sigma_{T - t}^{-2} (-x + \lambda_{T - t} m(t, x))$ and $\widehat s(t, x) = \sigma_{T - t}^{-2} (-x + \lambda_{T - t} \widehat m(t, x))$ (with $m$ and $\widehat m$ introduced in Remark \ref{remark:m}),  
we can further deduce that 
\begin{align}
\label{eq:(ii)-form}
    & \int \dd Q_{T - \delta}^{\dis} \log \frac{\dd \overline Q_{T - \delta}^{\dis}}{\dd \wh Q_{T - \delta}^{\dis}} \\
    & = \sum_{k = 0}^{K - 1} \int \dd Q_{k + 1} \log \frac{\int \exp \big( \eta_k \langle m(t_k, Y_{t_k} + \zeta_{k, 1} g), v_k - \kappa_k g \rangle - \gamma_k \|m( t_k, Y_{t_k} + \zeta_{k, 1} g)\|_2^2 / 2 \big) \phi_{\xi_k, \tau_k}(g) \dd g}{\int \exp \big( \eta_k \langle \widehat m(t_k, Y_{t_k} + \zeta_{k, 1} g), v_k - \kappa_k g \rangle - \gamma_k \|\widehat m( t_k, Y_{t_k} + \zeta_{k, 1} g)\|_2^2 / 2 \big) \phi_{\xi_k, \tau_k}(g) \dd g}, \nonumber
\end{align}
where $\phi_{\xi, \tau}(\cdot)$ denotes the probability density function for $\cN(\xi, \tau^2 I_d)$, 
 and we set
\begin{align}
\label{eq:lots-of-definitions}
\begin{split}
    & \eta_k = \zeta_{k, 3}^{-2} \sigma_{T - t_k}^{-2} \lambda_{T - t_k} (e^{\Delta_k} - e^{-\Delta_k}), \\
    & v_k = Y_{t_{k + 1}} + \big( \sigma_{T - t_k}^{-2} (e^{\Delta_k} - e^{-\Delta_k}) - e^{\Delta_k}  \big) Y_{t_k}, \\
    & \kappa_k = \zeta_{k, 2} - \sigma_{T - t_k}^{-2} (e^{\Delta_k} - e^{-\Delta_k}) \zeta_{k, 1}, \\
    & \gamma_k = \zeta_{k, 3}^{-2} \sigma_{T - t_k}^{-4} \lambda_{T - t_k}^2 (e^{\Delta_k} - e^{-\Delta_k})^2, \\
    & \tau_k^2 = (\zeta_{k, 3}^2 + \kappa_k^{2})^{-1} \zeta_{k, 3}^2, \\
    & \xi_k =  \zeta_{k, 3}^{-2} \tau_k^2 \kappa_k v_k. 
\end{split}
\end{align}
We find it helpful to single out the following useful upper bounds (the proofs are omitted as they follow from straightforward calculus techniques):
\begin{align}
\label{eq:many-bounds}
   & \eta_k \lesssim \sigma_{T - t_k}^{-2} \lambda_{T - t_k}, \qquad  |\kappa_k| \lesssim \Delta_k^{1/2}, \qquad  \gamma_k \lesssim \kappa \lambda_{T - t_k}^2, \qquad \tau_k^2 \lesssim 1.  
\end{align}
In addition, it can be verified that: if $g \sim \cN(\xi_k, \tau_k^2 I_d)$, then one can write
\begin{align}
\label{eq:def-nuk}
    Y_{t_k} + \zeta_{k, 1} g = Y_{t_k} + \zeta_{k, 1} \zeta_{k, 3}^{-2} \tau_k^2 \kappa_k \big(Y_{t_{k + 1}} - e^{\Delta_k} Y_{t_k} + \sigma_{T - t_k}^{-2} (e^{\Delta_k} - e^{-\Delta_k}) Y_{t_k} \big) + \zeta_{k, 1}\tau_k g', 
\end{align}
where $g' \sim \cN(0, I_d)$ is independent of $(Y_{t_k}, Y_{t_{k + 1}})$.

Now let us analyze the quantities $\zeta_{k, 1}, \zeta_{k, 2}, \zeta_{k, 3}$ and those defined in \cref{eq:lots-of-definitions} in the lemma below. 
\begin{lemma}
\label{lemma:bound-constants}
    Under the assumptions of \cref{lemma:discretized-KL}, it holds that 
    \begin{align*}
        |\zeta_{k, 1}\zeta_{k, 3}^{-2} \tau_k^2 \kappa_k|  &\leq 0.65, \\
         \sigma_{T - t_k}^{-2} \big(1 - e^{-2\Delta_k}\big)  &\leq 3.2 \sqrt{\Delta_k \kappa} , \\
        \sigma_{T - t_k}^{-2} \big(e^{\Delta_k} - e^{-\Delta_k}\big)   &\leq 3.25 \sqrt{\Delta_k \kappa}. 
    \end{align*}
\end{lemma}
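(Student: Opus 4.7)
The plan is to handle the three bounds separately. For the latter two, the key structural input is that Assumption~\ref{assumption:step-size} gives $\Delta_k \leq \kappa(T-t_{k+1})^2 \leq \kappa(T-t_k)^2$, so $T-t_k \geq \sqrt{\Delta_k/\kappa}$, while $\Delta_k\leq\kappa\leq 1/4$ gives $\sqrt{\Delta_k/\kappa}\leq 1$. Together: $\sqrt{\Delta_k/\kappa}\leq \min(T-t_k,\,1)$. A preparatory calculus lemma will say that $g(u)\coloneqq u/(1-e^{-2u})$ is increasing on $(0,\infty)$ (differentiate and check the numerator of $g'$ starts at $0$ and has non-negative derivative $8ue^{-2u}$).

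For the second bound, write $\sigma_{T-t_k}^{-2}(1-e^{-2\Delta_k}) = (1-e^{-2\Delta_k})/(1-e^{-2(T-t_k)}) \leq 2\Delta_k/(1-e^{-2(T-t_k)})$. Setting $v=\sqrt{\Delta_k/\kappa}$ and $u=T-t_k$, I rewrite this as $2\sqrt{\Delta_k\kappa}\cdot v/(1-e^{-2u})$ and split on $u\leq 1$ vs.\ $u>1$: in the first case use $v\leq u$ plus monotonicity of $g$, in the second use $v\leq 1$ and $1-e^{-2u}\geq 1-e^{-2}$. Either way, $v/(1-e^{-2u})\leq 1/(1-e^{-2})$, giving a constant of $2/(1-e^{-2})\approx 2.313<3.2$. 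For the third bound, factor $e^{\Delta_k}-e^{-\Delta_k}=e^{\Delta_k}(1-e^{-2\Delta_k})$ and pick up the extra factor $e^{\Delta_k}\leq e^{1/4}<1.285$, giving the overall constant $\leq 1.285\cdot 2.313<2.97\leq 3.25$.

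For the first bound, the starting step is algebraic: $|\zeta_{k,1}\zeta_{k,3}^{-2}\tau_k^2\kappa_k| = |\zeta_{k,1}\kappa_k|/(\zeta_{k,3}^2+\kappa_k^2)$ by definition of $\tau_k^2$, and AM-GM on the denominator yields the clean upper bound $|\zeta_{k,1}|/(2|\zeta_{k,3}|)$. This is now a function of $\Delta_k$ alone; writing $a=e^{\Delta_k}$, one can verify $f_3 = (a-1)^2/2$ and the factorization
\begin{align*}
f_1 f_2-f_3^2 \;=\; \tfrac{1}{2}(a-1)\bigl[\Delta_k(a+1)-2(a-1)\bigr],
\end{align*}
which feeds into $\zeta_{k,1}^2/(4\zeta_{k,3}^2) = f_1^2/[(e^{\Delta_k}-e^{-\Delta_k})^2(f_1 f_2-f_3^2)]$. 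The leading-order behaviour is $f_1\sim\Delta_k^3/3$, $e^{\Delta_k}-e^{-\Delta_k}\sim 2\Delta_k$, and $\Delta_k(a+1)-2(a-1)\sim\Delta_k^3/3$, so the ratio tends to $1/12$ as $\Delta_k\to 0^+$, yielding the limiting constant $\sqrt{3}/3\approx 0.577$. To convert this limiting calculation into a uniform bound on $(0,1/4]$, I would plug in explicit Taylor-with-remainder estimates (upper bound on $f_1$, lower bounds on the two factors in the denominator) on $[0,1/4]$ and verify numerically that the ratio stays below $(2\cdot 0.65)^2=1.69$, leaving a comfortable margin.

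The main obstacle is the last part: the ratio in claim 1 is a quotient of three quantities all vanishing to order $\Delta_k^3$ or $\Delta_k^4$, so a careless bound loses the key cancellation. The factorization above isolates the cancellation cleanly, after which the remaining task is a one-variable monotonicity/remainder estimate on a compact interval, which is tedious but routine. Claims 2 and 3 are essentially the monotonicity lemma for $g(u)$ and a case split, and should be quick.
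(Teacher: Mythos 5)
Your treatment of the second and third inequalities is correct and takes a genuinely different route from the paper. The paper splits on $T-t_k\geq 1/2$ versus $T-t_k<1/2$ and uses the crude bounds $\sigma_{T-t_k}^{-2}\leq(1-e^{-1})^{-1}$ and $\sigma_{T-t_k}^{-2}\leq 0.8(T-t_k)^{-1}$, respectively, together with $\Delta_k\leq\sqrt{\Delta_k\kappa}$ or $\Delta_k/(T-t_k)\leq\sqrt{\Delta_k\kappa}$. Your monotonicity argument for $g(u)=u/(1-e^{-2u})$ combined with $\sqrt{\Delta_k/\kappa}\leq\min(T-t_k,1)$ packages both cases into one clean estimate and in fact yields sharper constants ($2/(1-e^{-2})\approx 2.31$ and $e^{1/4}\cdot 2/(1-e^{-2})\approx 2.97$); the only nit is that the derivative of the numerator of $g'$ is $4ue^{-2u}$, not $8ue^{-2u}$, which does not affect the sign. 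For the first inequality, your opening reduction $|\zeta_{k,1}\zeta_{k,3}^{-2}\tau_k^2\kappa_k|=|\zeta_{k,1}\kappa_k|/(\zeta_{k,3}^2+\kappa_k^2)\leq|\zeta_{k,1}|/(2|\zeta_{k,3}|)$ is exactly the paper's; the paper then simply asserts $\zeta_{k,1}\in[0.8\sqrt{\Delta_k},0.9\sqrt{\Delta_k}]$ and $\zeta_{k,3}\in[0.7\sqrt{\Delta_k},0.75\sqrt{\Delta_k}]$ for $\Delta_k\leq 1/4$ and concludes $0.9/(2\cdot 0.7)\leq 0.65$, whereas you propose to control the single-variable ratio directly via the factorization $f_1f_2-f_3^2=\tfrac12(a-1)[\Delta_k(a+1)-2(a-1)]$, which I checked is correct and does isolate the cancellation cleanly.

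However, fix three arithmetic slips in that last part before carrying out the deferred verification. First, $\Delta_k(a+1)-2(a-1)=\Delta_k^3/6+O(\Delta_k^4)$, not $\Delta_k^3/3$. Second, with the correct expansions ($f_1\sim\Delta_k^3/3$, $(e^{\Delta_k}-e^{-\Delta_k})^2\sim4\Delta_k^2$, $f_1f_2-f_3^2\sim\Delta_k^4/12$) the limit of $f_1^2/[(e^{\Delta_k}-e^{-\Delta_k})^2(f_1f_2-f_3^2)]=\zeta_{k,1}^2/(4\zeta_{k,3}^2)$ is $1/3$, not $1/12$; your stated limiting constant $\sqrt{3}/3$ for $|\zeta_{k,1}|/(2|\zeta_{k,3}|)$ is nevertheless correct and matches the paper's $\zeta_{k,1}\Delta_k^{-1/2}\to\sqrt{2/3}$, $\zeta_{k,3}\Delta_k^{-1/2}\to\sqrt{1/2}$. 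Third, and most importantly, the quantity you propose to bound already carries the factor $4$ in its denominator, so the target threshold is $0.65^2=0.4225$, not $(2\cdot 0.65)^2=1.69$; verifying the ratio stays below $1.69$ would prove nothing. The correct verification does go through (at $\Delta_k=1/4$ the ratio is about $0.37$, giving $|\zeta_{k,1}|/(2|\zeta_{k,3}|)\approx 0.61$), but the margin is modest, so the Taylor-with-remainder step needs to be done against the right threshold.
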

\begin{proof}[Proof of \cref{lemma:bound-constants}]
    Recall that by Assumption \ref{assumption:step-size} we have $\Delta_k \leq \kappa < 1 / 4$.
    As a consequence, we have $1 - e^{-2\Delta_k} \leq 2 \Delta_k$ and $e^{\Delta_k} - e^{-\Delta_k} \leq 81 \Delta_k / 40$. 
    
    If $T - t_k \geq 1 / 2$, then $\sigma_{T - t_k}^{-2} \leq (1 - e^{-1})^{-1}$. 
    In this case, we have $\sigma_{T - t_k}^{-2} (1 - e^{-2\Delta_k}) \leq 3.2 \Delta_k$, and $\sigma_{T - t_k}^{-2} (e^{\Delta_k} - e^{-\Delta_k}) \leq 3.25 \Delta_k$. 
    On the other hand, if $T - t_k < 1 / 2$, then $\sigma_{T - t_k}^{-2} \leq 0.8 (T - t_k)^{-1}$, hence  
    $\sigma_{T - t_k}^{-2} (1 - e^{-2\Delta_k}) \leq 1.6 \Delta_k / (T - t_k) \leq 1.6 \sqrt{\kappa \Delta_k}$ and $\sigma_{T - t_k}^{-2} (e^{\Delta_k} - e^{-\Delta_k}) \leq 1.62 \Delta_k / (T - t_k) \leq 1.62 \sqrt{\kappa \Delta_k}$.
    This establishes the second and the third inequalities. 

    As for the first inequality, observe that $|\zeta_{k, 1}\zeta_{k, 3}^{-2} \tau_k^2 \kappa_k| = |\zeta_{k, 1} \kappa_k / (\zeta_{k, 3}^2 + \kappa_k^2)| \leq |\zeta_{k, 1} \zeta_{k, 3}^{-1}| / 2$.  
    When $\Delta_k \leq 1 / 4$, it holds that $\zeta_{k, 1} \in [0.8 \Delta_k^{1/2}, 0.9 \Delta_k^{1/2}]$ and $\zeta_{k, 3} \in [0.7 \Delta_k^{1/2}, 0.75 \Delta_k^{1/2}]$. 
    Therefore, we have $|\zeta_{k, 1}\zeta_{k, 3}^{-2} \tau_k^2 \kappa_k| \leq |\zeta_{k, 1} \zeta_{k, 3}^{-1}| / 2 \leq 0.65$. 
    %
\end{proof}

Denoting by $\nu_k$ the marginal distribution of the random vector in \cref{eq:def-nuk}, 
we provide an important property about $\nu_k$ in the next lemma. 
\begin{lemma}
\label{lemma:nuk-distribution}
    Under the assumptions of \cref{lemma:discretized-KL}, it holds that $\nu_k \overset{\mathrm{d}}{=} a_k Y_{t_{k + 1}} + b_k g$, where $a_k, b_k$ are quantities satisfying $|a_k - 1| \leq 3.1 \sqrt{\Delta_k \kappa}$ and $|b_k| \leq 3.5 \sqrt{\Delta_k}$. Here, $g \sim \cN(0, I_d)$ is independent of $Y_{t_{k + 1}}$, and we recall that $Y_{t_{k + 1}} \sim q_{T - t_{k + 1}}$.  
\end{lemma}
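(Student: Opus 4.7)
\textbf{Proof plan for Lemma \ref{lemma:nuk-distribution}.}

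The plan is to first rewrite the right-hand side of \eqref{eq:def-nuk} explicitly as an affine combination of $Y_{t_k}$, $Y_{t_{k+1}}$ and an independent Gaussian, and then use the Markov property of the OU forward process to merge $Y_{t_k}$ into a sum involving only $Y_{t_{k+1}}$ plus an extra independent Gaussian. Setting $c_k := \zeta_{k,1}\zeta_{k,3}^{-2}\tau_k^2 \kappa_k$ and substituting the definition of $v_k$ from \eqref{eq:lots-of-definitions} into \eqref{eq:def-nuk}, the vector whose law is $\nu_k$ equals
\begin{align*}
    c_k\, Y_{t_{k+1}} + A_k\, Y_{t_k} + \zeta_{k,1}\tau_k\, g',
    \qquad
    A_k := 1 - c_k e^{\Delta_k} + c_k \sigma_{T-t_k}^{-2}\bigl(e^{\Delta_k} - e^{-\Delta_k}\bigr),
\end{align*}
with $g'\sim \cN(0,I_d)$ independent of $(Y_{t_k},Y_{t_{k+1}})$.

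Next, I would invoke the elementary identity (valid because the forward process \eqref{eq:forward} is OU) that
\begin{align*}
    Y_{t_k} \overset{\mathrm{d}}{=} e^{-\Delta_k} Y_{t_{k+1}} + \sqrt{1 - e^{-2\Delta_k}}\,\tilde g
\end{align*}
for some $\tilde g \sim \cN(0,I_d)$ independent of $Y_{t_{k+1}}$ (and which we can freely take independent of $g'$). Substituting this in and collecting terms yields
\begin{align*}
    \nu_k \overset{\mathrm{d}}{=} a_k\, Y_{t_{k+1}} + b_k\, g,
    \qquad
    a_k = c_k + A_k e^{-\Delta_k},
    \qquad
    b_k = \sqrt{A_k^2\bigl(1 - e^{-2\Delta_k}\bigr) + \zeta_{k,1}^2 \tau_k^2},
\end{align*}
for $g \sim \cN(0,I_d)$ independent of $Y_{t_{k+1}}$, using the fact that a sum of two independent spherical Gaussians is again spherical Gaussian with variance equal to the sum of variances.

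Finally, I would verify the two quantitative bounds. A short algebraic simplification gives the clean identity
\begin{align*}
    a_k - 1 = (e^{-\Delta_k} - 1) + c_k\,\sigma_{T-t_k}^{-2}\bigl(1 - e^{-2\Delta_k}\bigr),
\end{align*}
so using $\Delta_k \leq \kappa \leq 1/4$ (hence $|e^{-\Delta_k}-1| \leq \Delta_k \leq \sqrt{\Delta_k \kappa}$) together with $|c_k| \leq 0.65$ and $\sigma_{T-t_k}^{-2}(1 - e^{-2\Delta_k}) \leq 3.2\sqrt{\Delta_k \kappa}$ from \cref{lemma:bound-constants} yields $|a_k - 1| \leq (1 + 0.65\cdot 3.2)\sqrt{\Delta_k \kappa} \leq 3.1\sqrt{\Delta_k \kappa}$. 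For $b_k$, I would bound $|A_k|$ by a numerical constant (using $|c_k| \leq 0.65$, $e^{\Delta_k} \leq e^{1/4}$, and \cref{lemma:bound-constants}), use $1-e^{-2\Delta_k} \leq 2\Delta_k$, and use the direct bound $\zeta_{k,1}^2 \tau_k^2 \leq \zeta_{k,1}^2 \leq 0.81\,\Delta_k$ (coming from the range $\zeta_{k,1} \in [0.8\Delta_k^{1/2}, 0.9\Delta_k^{1/2}]$ established in the proof of \cref{lemma:bound-constants} and the fact that $\tau_k^2 \leq 1$). Summing the two contributions gives $b_k^2 \lesssim 12\Delta_k$, hence $|b_k| \leq 3.5\sqrt{\Delta_k}$.

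The main obstacle is conceptually minor but requires care: one must keep track of which Gaussian is independent of what. The cleanest route is to introduce $\tilde g$ and $g'$ as jointly independent from the outset, observe that the pair $(Y_{t_{k+1}}, \tilde g, g')$ has all the independence structure needed, and only then combine $\tilde g$ and $g'$ into a single Gaussian. The remaining work is routine algebra and numerical estimation, all driven by the step-size control $\Delta_k \leq \kappa < 1/4$ from Assumption \ref{assumption:step-size} and the arithmetic bounds in \cref{lemma:bound-constants}.
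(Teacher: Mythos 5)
Your proposal is correct and follows essentially the same route as the paper: both substitute the OU identity $Y_{t_k} \overset{\mathrm{d}}{=} e^{-\Delta_k} Y_{t_{k+1}} + \sqrt{1-e^{-2\Delta_k}}\,\tilde g$ into \cref{eq:def-nuk}, collect the coefficient of $Y_{t_{k+1}}$ (your $a_k = c_k + A_k e^{-\Delta_k}$ simplifies to exactly the paper's $e^{-\Delta_k} + c_k\sigma_{T-t_k}^{-2}(1-e^{-2\Delta_k})$), combine the two independent Gaussians, and invoke \cref{lemma:bound-constants} together with $\Delta_k \le \kappa < 1/4$ for the numerical bounds. The only cosmetic issue is the stray "$\lesssim$" in "$b_k^2 \lesssim 12\Delta_k$", which should be a plain "$\le$" since the subsequent conclusion $|b_k|\le 3.5\sqrt{\Delta_k}$ needs the explicit constant.
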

\begin{proof}[Proof of \cref{lemma:nuk-distribution}]
    Note that we can write 
    $Y_{t_{k}} = e^{-\Delta_k} Y_{t_{k + 1}} + \sqrt{1 - e^{-2\Delta_k}} G$, where $G \sim \cN(0, I_d)$ is independent of $Y_{t_{k + 1}}$. 
    Substituting this equation into \cref{eq:def-nuk} yields 
    \begin{align*}
        \nu_k & \overset{\mathrm{d}}{=}  \big(e^{-\Delta_k} + \sigma_{T - t_k}^{-2} (1 - e^{-2\Delta_k}) \zeta_{k, 1} \zeta_{k, 3}^{-2} \tau_k^2 \kappa_k \big) Y_{t_{k + 1}} + \zeta_{k, 1} \zeta_{k, 3}^{-2} \tau_k^2 \kappa_k \sqrt{1 - e^{-2\Delta_k}} \big(\sigma_{T - t_k}^{-2} (e^{\Delta_k} - e^{-\Delta_k}) - e^{\Delta_k} \big) G \\
        & \qquad + \sqrt{1 - e^{-2\Delta_k}} G + \zeta_{k, 1}\tau_k g' \\
       & \overset{\mathrm{d}}{=}  a_k Y_{t_{k + 1}} + b_k G,  
    \end{align*}
    where
    \begin{align*}
        & a_k = e^{-\Delta_k} + \sigma_{T - t_k}^{-2} (1 - e^{-2\Delta_k}) \zeta_{k, 1} \zeta_{k, 3}^{-2} \tau_k^2 \kappa_k, \\
        & b_k = \sqrt{\zeta_{k, 1}^2 \tau_k^2 + ({1 - e^{-2\Delta_k}}) \big(\zeta_{k, 1} \zeta_{k, 3}^{-2} \tau_k^2 \kappa_k  (\sigma_{T - t_k}^{-2} (e^{\Delta_k} - e^{-\Delta_k}) - e^{\Delta_k}) + 1\big)^2}. 
    \end{align*}
    Note that $|1 - e^{-2\Delta_k}| \leq 2\Delta_k \leq 2\kappa$ and $\tau_k^2 \leq 1$. Using these upper bounds and \cref{lemma:bound-constants},  we reach 
    \begin{align*}
         |a_k - 1| &\leq |e^{-\Delta_k} - 1| + \sigma_{T - t_k}^{-2} (1 - e^{-2\Delta_k}) |\zeta_{k, 1} \zeta_{k, 3}^{-2} \tau_k^2 \kappa_k| \leq 3.1 \sqrt{\Delta_k \kappa} \\
         |b_k| &\leq \sqrt{0.9^2 \Delta_k + 2 \Delta_k \big(1 + 0.65 \times (e^{1/4} + 3.25 \kappa) \big)^2 } \leq 3.5 \Delta_k^{1/2}
    \end{align*}
    as claimed. 
\end{proof}

By virtue of Assumption \ref{assumption:score} and \cref{lemma:nuk-distribution}, we know that 
\begin{align*}
    \EE_{y \sim \nu_k} \big[ \| m(t_k, y) - \widehat m(t_k, y) \|_2^2 \big] = \sigma_{T - t_k}^{4} \lambda_{T - t_k}^{-2} \EE_{y \sim \nu_k} \big[ \|s(t_k, y) - \widehat s(t_k, y)\|_2^2 \big]  \leq \sigma_{T - t_k}^{4} \lambda_{T - t_k}^{-2}\eps_{\mathsf{score},k}^2. 
\end{align*}
In the sequel, we make the convention that conditional on $(Y_{t_k}, Y_{t_{k + 1}})$, $g \sim \cN(\xi_k, \tau_k^2 I_d)$, where we recall that $(\xi_k, \tau_k^2)$ are defined in \cref{eq:lots-of-definitions}. Note that $\xi_k$ is a function of $(Y_{t_k}, Y_{t_{k + 1}})$. 
For $y_k, y_{k + 1} \in \RR^d$, we define
\begin{align*}
    p_k(y_k, y_{k + 1}) =  \PP\big( \|m(t_k, Y_{t_k} + \zeta_{k, 1} g) - \widehat m(t_k, Y_{t_k} + \zeta_{k, 1} g)\|_2 \geq \sigma_{T - t_k} \lambda_{T - t_k}^{-1 / 2}\eps_{\mathsf{score},k}^{1 / 2} \mid Y_{t_k} = y_k, Y_{t_{k + 1}} = y_{k + 1} \big). 
\end{align*}
Then by Chebyshev's inequality, one has $$\EE[p_k(Y_{t_k}, Y_{t_{k + 1}})] \leq  \sigma_{T - t_k}^{2} \lambda_{T - t_k}^{-1}\eps_{\mathsf{score},k}.$$ 
%
%
Conditioning on $(Y_{t_k}, Y_{t_{k + 1}})$, we introduce the conditional event 
\begin{align*}
    \cS_{Y_{t_k}, Y_{t_{k + 1}}}  =\,   \Big\{ g: \|m(t_k, Y_{t_k} + \zeta_{k, 1} g) - \widehat m(t_k, Y_{t_k} + \zeta_{k, 1} g)\|_2 \geq \sigma_{T - t_k} \lambda_{T - t_k}^{-1 / 2}\eps_{\mathsf{score},k}^{1/2} \Big\}. 
\end{align*}
Per the discussions above, we see that $\PP(\cS_{Y_{t_k}, Y_{t_{k + 1}}}^c) \leq p_k(Y_{t_k}, Y_{t_{k + 1}})$. 
For notational simplicity, we define 
\begin{align*}
    & N_k \coloneqq \int \mathbbm{1}\{g \in {\cS_{Y_{t_k}, Y_{t_{k + 1}}}}\} \exp \big( \eta_k \langle m(t_k, Y_{t_k} + \zeta_{k, 1} g), v_k - \kappa_k g \rangle - \gamma_k \|m(t_k, Y_{t_k} + \zeta_{k, 1} g)\|_2^2 / 2 \big) \phi_{\xi_k, \tau_k}(g) \dd g, \\
    & \widehat N_k \coloneqq \int \mathbbm{1}\{g \in {\cS_{Y_{t_k}, Y_{t_{k + 1}}}}\} \exp \big( \eta_k \langle \widehat m(t_k, Y_{t_k} + \zeta_{k, 1} g), v_k - \kappa_k g \rangle - \gamma_k \|\widehat m(t_k, Y_{t_k} + \zeta_{k, 1} g)\|_2^2 / 2 \big) \phi_{\xi_k, \tau_k}(g) \dd g, \\
    & N_k^c \coloneqq \int \mathbbm{1}\{g \in {\cS^c_{Y_{t_k}, Y_{t_{k + 1}}}}\} \exp \big( \eta_k \langle m(t_k, Y_{t_k} + \zeta_{k, 1} g), v_k - \kappa_k g \rangle - \gamma_k \|m(t_k, Y_{t_k} + \zeta_{k, 1} g)\|_2^2 / 2 \big) \phi_{\xi_k, \tau_k}(g) \dd g, \\
    & \widehat N_k^c \coloneqq \int \mathbbm{1}\{g \in {\cS^c_{Y_{t_k}, Y_{t_{k + 1}}}}\} \exp \big( \eta_k \langle \widehat m(t_k, Y_{t_k} + \zeta_{k, 1} g), v_k - \kappa_k g \rangle - \gamma_k \|\widehat m(t_k, Y_{t_k} + \zeta_{k, 1} g)\|_2^2 / 2 \big) \phi_{\xi_k, \tau_k}(g) \dd g, \\
    & \widehat D_k \coloneqq \int \exp \big( \eta_k \langle \widehat m(t_k, Y_{t_k} + \zeta_{k, 1} g), v_k - \kappa_k g \rangle - \gamma_k \|\widehat m(t_k, Y_{t_k} + \zeta_{k, 1} g)\|_2^2 / 2 \big) \phi_{\xi_k, \tau_k}(g) \dd g, 
\end{align*}
which clearly obey $\widehat D_k^{-1} (\widehat N_k + \widehat N_k^c) = 1$, and 
%
\begin{align*}
     \log \frac{\int \exp \big( \eta_k \langle m(t_k, Y_{t_k} + \zeta_{k, 1} g), v_k - \kappa_k g \rangle - \gamma_k \|m( t_k, Y_{t_k} + \zeta_{k, 1} g)\|_2^2 / 2 \big) \phi_{\xi_k, \tau_k}(g) \dd g}{\int \exp \big( \eta_k \langle \widehat m(t_k, Y_{t_k} + \zeta_{k, 1} g), v_k - \kappa_k g \rangle - \gamma_k \|\widehat m( t_k, Y_{t_k} + \zeta_{k, 1} g)\|_2^2 / 2 \big) \phi_{\xi_k, \tau_k}(g) \dd g} = \log \frac{N_k + N_k^c}{\widehat D_k}. 
\end{align*}
Hence, in order to upper bound term $(ii)$ (cf.~\cref{eq:term1-and-2}), it suffices to upper bound $\widehat D_k^{-1}(N_k + N_k^c)$, 
towards which 
we intend to upper bound $\widehat D_k^{-1} |N_k - \widehat N_k|$ and $\widehat D_k^{-1} |N_k^c - \widehat N_k^c|$ separately.

Let us start with the first term $\widehat D_k^{-1} |N_k - \widehat N_k|$. 
Note that for $a_1, a_2 \in \RR$, we have $|e^a - e^b| \leq e^{\max\{a, b\}}|a - b|$. 
As a consequence,
when $g$ falls inside $\cS_{Y_{t_k, t_{k + 1}}}$, we have 
\begin{align}
\label{eq:g-S-exp}
    & \widehat D_k^{-1}|N_k - \widehat N_k|   \\
    & \leq \frac{\eps_{\mathsf{score},k}^{1 / 2}\sigma_{T - t_k}}{\widehat D_k \lambda_{T - t_k}^{1 / 2}} {\int e^ { 2\eta_k \sqrt{d} (\|v_k\|_2 + \kappa_k \|g - \xi_k\|_2 + \kappa_k\|\xi_k\|_2)  }    (\eta_k\|v_k\|_2 +  \eta_k\|\kappa_k \xi_k\|_2 + \eta_k\kappa_k \|g - \xi_k\|_2 + 3\gamma_k\sqrt{d} / 2)\phi_{\xi_k, \tau_k}(g) \dd g}. \nonumber 
\end{align}
Note that 
\begin{align}
    & \int e^ { 2\eta_k \sqrt{d} (\|v_k\|_2 + \kappa_k \|g - \xi_k\|_2 + \kappa_k\|\xi_k\|_2)  }    (\eta_k\|v_k\|_2 +  \eta_k\|\kappa_k \xi_k\|_2 + \eta_k\kappa_k \|g - \xi_k\|_2 + 3\gamma_k\sqrt{d} / 2)\phi_{\xi_k, \tau_k}(g) \dd g \nonumber \\
    & \leq  \int e^ { 2\eta_k \sqrt{d} (\|v_k\|_2 + \kappa_k\|\xi_k\|_2) + 4\eta_k \kappa_k d + \eta_k \kappa_k \|g - \xi_k\|_2^2 /4 }    (\eta_k\|v_k\|_2 +  \eta_k\|\kappa_k \xi_k\|_2 + \eta_k\kappa_k \|g - \xi_k\|_2 + 3\gamma_k\sqrt{d} / 2)\phi_{\xi_k, \tau_k}(g) \dd g \nonumber \\
    & \leq \frac{e^ { 2\eta_k \sqrt{d} (\|v_k\|_2 + \kappa_k\|\xi_k\|_2) + 4\eta_k \kappa_k d}}{(1 -  \tau_k^2 \eta_k \kappa_k / 2)^{d / 2}} \Big(\eta_k\|v_k\|_2 +  \eta_k\|\kappa_k \xi_k\|_2 + 3\gamma_k\sqrt{d} / 2 + \eta_k \kappa_k \big( \frac{d \tau_k^2}{1 - \tau_k^2 \eta_k \kappa_k / 2 } \big)^{1/2}\Big). \label{eq:upper-bound-DN1}
\end{align}
Note that the validity of \cref{eq:upper-bound-DN1} is conditional on $\eta_k \kappa_k \tau_k^2 < 2$: Only under this condition can we apply Gaussian integral to derive the last upper bound.  
We verify this condition in the next lemma. 
\begin{lemma}
\label{lemma:a-condition}
    Under the assumptions of \cref{lemma:discretized-KL}, it holds that $1 - \eta_k \kappa_k \tau_k^2 / 2 \geq 0.4$ for all $0\leq k \leq K - 1$. 
\end{lemma}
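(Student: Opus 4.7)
}

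The plan is to first rewrite $\eta_k \kappa_k \tau_k^2$ in a form where the cancellations between $\zeta_{k,3}^{-2}$ and $\tau_k^2$ are explicit. Using the definitions in \cref{eq:lots-of-definitions},
\begin{align*}
\eta_k \kappa_k \tau_k^2
= \zeta_{k,3}^{-2}\sigma_{T-t_k}^{-2}\lambda_{T-t_k}(e^{\Delta_k}-e^{-\Delta_k})\cdot \kappa_k \cdot \frac{\zeta_{k,3}^2}{\zeta_{k,3}^2+\kappa_k^2}
= \sigma_{T-t_k}^{-2}\lambda_{T-t_k}(e^{\Delta_k}-e^{-\Delta_k})\cdot \frac{\kappa_k}{\zeta_{k,3}^2+\kappa_k^2}.
\end{align*}
If $\kappa_k \le 0$, then the right-hand side is non-positive (recall $\eta_k,\tau_k^2\ge 0$) and hence $1-\eta_k\kappa_k\tau_k^2/2 \ge 1 \ge 0.4$, so the inequality holds trivially. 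Thus it remains to treat the case $\kappa_k > 0$.

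In the case $\kappa_k > 0$, I would invoke the AM--GM inequality $\zeta_{k,3}^2 + \kappa_k^2 \ge 2\zeta_{k,3}|\kappa_k|$, which gives
\begin{align*}
\frac{\kappa_k}{\zeta_{k,3}^2 + \kappa_k^2} \le \frac{1}{2\zeta_{k,3}}.
\end{align*}
Combining this with $\lambda_{T-t_k}\le 1$ and the bound $\sigma_{T-t_k}^{-2}(e^{\Delta_k}-e^{-\Delta_k})\le 3.25\sqrt{\Delta_k \kappa}$ from \cref{lemma:bound-constants}, we arrive at
\begin{align*}
\eta_k \kappa_k \tau_k^2 \;\le\; \frac{3.25\sqrt{\Delta_k\kappa}}{2\zeta_{k,3}}.
\end{align*}

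The remaining step is to lower-bound $\zeta_{k,3}$ by a universal constant times $\Delta_k^{1/2}$. Using the explicit formulas for $f_1,f_2,f_3$ in \cref{eq:covariance_functions}, one has $\zeta_{k,3}^2 \Delta_k^{-1} \to 1/2$ as $\Delta_k \to 0^+$, and a direct numerical/monotonicity check on $[0,\kappa]\subset[0,1/4]$ shows $\zeta_{k,3}\ge 0.7\,\Delta_k^{1/2}$ throughout (this is the only routine calculation the proof needs, and is noted already in the proof of \cref{lemma:bound-constants}). Plugging this in yields
\begin{align*}
\eta_k\kappa_k\tau_k^2 \;\le\; \frac{3.25\sqrt{\Delta_k\kappa}}{1.4\,\sqrt{\Delta_k}} \;=\; \frac{3.25}{1.4}\,\sqrt{\kappa} \;\le\; \frac{3.25}{2.8} \;<\; 1.2,
\end{align*}
where the last step uses $\kappa\le 1/4$ from Assumption~\ref{assumption:step-size}. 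Consequently $1-\eta_k\kappa_k\tau_k^2/2 \ge 1 - 0.6 = 0.4$, as claimed.

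The main obstacle is the step where one needs a sharp (constant-level) lower bound on $\zeta_{k,3}/\Delta_k^{1/2}$; the asymptotic value $\sqrt{1/2}\approx 0.707$ is only slightly larger than the $0.7$ we need, so one has to verify monotonicity (or at least a non-trivial lower bound) on the entire range $\Delta_k\in(0,1/4]$ rather than relying on the small-$\Delta_k$ limit. All other steps are short inequalities that follow directly from \cref{lemma:bound-constants} and AM--GM.
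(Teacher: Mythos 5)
Your proposal is correct and follows essentially the same route as the paper's proof: rewrite $\eta_k\kappa_k\tau_k^2$ so that $\zeta_{k,3}^{-2}$ and $\tau_k^2$ cancel, bound $|\kappa_k|/(\zeta_{k,3}^2+\kappa_k^2)\le 1/(2\zeta_{k,3})$ via AM--GM, and then invoke $\sigma_{T-t_k}^{-2}(e^{\Delta_k}-e^{-\Delta_k})\le 3.25\sqrt{\Delta_k\kappa}$ together with $\zeta_{k,3}\ge 0.7\Delta_k^{1/2}$ from (the proof of) \cref{lemma:bound-constants} to get $\eta_k\kappa_k\tau_k^2\le 2.4\sqrt{\kappa}\le 1.2$. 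The only difference is that you make the trivial case $\kappa_k\le 0$ explicit, where the paper simply works with absolute values; the numerical verification of $\zeta_{k,3}\ge 0.7\Delta_k^{1/2}$ on all of $(0,1/4]$ that you flag is likewise taken for granted in the paper.
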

\begin{proof}[Proof of \cref{lemma:a-condition}]
    Note that 
    \begin{align*}
        \eta_k \kappa_k \tau_k^2 = \frac{\kappa_k \lambda_{T - t_k} (e^{\Delta_k} - e^{-\Delta_k})}{\sigma_{T - t_k}^2 (\zeta_{k, 3}^2 + \kappa_k^2)}. 
    \end{align*}
    %
    Inspecting the proof of \cref{lemma:bound-constants}, we see that: under the current assumptions, we have $\sigma_{T - t_k}^{-2} (e^{\Delta_k} - e^{-\Delta_k}) \leq 3.25 \sqrt{\Delta_k \kappa}$,  $\zeta_{k, 1} \in [0.8 \Delta_k^{1/2}, 0.9 \Delta_k^{1/2}]$ and $\zeta_{k, 3} \in [0.7 \Delta_k^{1/2}, 0.75 \Delta_k^{1/2}]$. 
    As a consequence, one has
    \begin{align*}
        \Big| \frac{\kappa_k \lambda_{T - t_k} (e^{\Delta_k} - e^{-\Delta_k})}{\sigma_{T - t_k}^2 (\zeta_{k, 3}^2 + \kappa_k^2)} \Big| \leq \Big|\frac{ (e^{\Delta_k} - e^{-\Delta_k})}{2 \sigma_{T - t_k}^2 \zeta_{k, 3}}\Big| \leq 2.4 \sqrt{\kappa},   
    \end{align*}
    which is no larger than $1.2$ when $\kappa < 1 / 4$. 
\end{proof}
\noindent 
Note that under Assumption \ref{assumption:score}, the quantity $\widehat D_k$ admits the following lower bound: 
\begin{align}
\label{eq:exp-eta-m-hat}
    \widehat D_k & \geq  \int \exp \big( - 2\eta_k\sqrt{d}\|v_k\|_2 - 2 \eta_k \kappa_k \sqrt{d} \|g - \xi_k\|_2 - 2\eta_k \kappa_k \sqrt{d} \|\xi_k\|_2 - 2 \gamma_k d \big) \phi_{\xi_k, \tau_k}(g) \dd g \nonumber \\
    & \geq  \exp \big( - 2\eta_k\sqrt{d}\|v_k\|_2 - \eta_k \kappa_k d - 2\eta_k \kappa_k \sqrt{d} \|\xi_k\|_2 - 2 \gamma_k d  \big) / (1 + \tau_k^2 \eta_k \kappa_k)^{d} \\
    & \gtrsim  \exp \big( - 2\eta_k\sqrt{d}\|v_k\|_2 - \eta_k \kappa_k d - 2\eta_k \kappa_k \sqrt{d} \|\xi_k\|_2 - 2 \gamma_k d  \big), \nonumber
\end{align}
where the last inequality is due to the upper bound $(1 + \tau_k^2 \eta_k \kappa_k)^{d} \leq \exp(d \tau_k^2 \eta_k \kappa_k)$, which by the proof of \cref{lemma:a-condition} is  no larger than $\exp(2.4 \sqrt{\kappa} d) \lesssim 1$. Here, we utilize the assumption that $\kappa d^2 \lesssim 1$. 
Again by inspecting the proof of \cref{lemma:a-condition}, we see that $(1 - \tau_k^2 \eta_k \kappa_k / 2)^{d / 2} \geq (1 - 1.2\sqrt{\kappa})^{d / 2} \geq \exp(-\sqrt{\kappa} d)$, which by Assumption \ref{assumption:step-size} is lower bounded by a positive numerical constant. 
Using this lower bound, we arrive at the following conclusion:  
\begin{align}
\label{eq:upper-bound-eq57}
\begin{split}
    & \mbox{The last line of \cref{eq:upper-bound-DN1}} \\
    & \lesssim {e^ { 2\eta_k \sqrt{d} (\|v_k\|_2 + \kappa_k\|\xi_k\|_2) + 4\eta_k \kappa_k d}} \Big(\eta_k\|v_k\|_2 +  \eta_k\|\kappa_k \xi_k\|_2 + 3\gamma_k\sqrt{d} / 2 + \eta_k \kappa_k  \tau_k d^{1/2}\Big). 
\end{split}
\end{align}
Putting together \cref{eq:upper-bound-DN1,eq:exp-eta-m-hat,eq:upper-bound-eq57} and using \cref{eq:many-bounds}, we have
\begin{align*}
    & \widehat D_k^{-1}|N_k - \widehat N_k| \\
    & \lesssim \frac{\eps_{\mathsf{score},k}^{1 / 2}\sigma_{T - t_k}}{\lambda_{T - t_k}^{1 / 2}} {e^ { 4\eta_k \sqrt{d} (\|v_k\|_2 + \kappa_k\|\xi_k\|_2) + 5\eta_k \kappa_k d + 2\gamma_k d}} \Big(\eta_k\|v_k\|_2 +  \eta_k\|\kappa_k \xi_k\|_2 + 3\gamma_k\sqrt{d} / 2 + \eta_k \kappa_k  \tau_k d^{1/2}\Big). 
\end{align*}
Taking the expectation over $Q_{k + 1}$ leads to
\begin{align}
\label{eq:hatD-N-hatN}
    \EE_{Q_{k + 1}}\big[ \widehat D_k^{-1}|N_k - \widehat N_k| \big] \lesssim \frac{\eps_{\mathsf{score},k}^{1 / 2}\sigma_{T - t_k}}{\lambda_{T - t_k}^{1 / 2}} \exp(d \sqrt{\kappa} )\kappa^{1/2} d^{1/2} \lesssim \frac{\eps_{\mathsf{score},k}^{1 / 2} \kappa^{1/2}\sigma_{T - t_k}  d^{1/2}}{\lambda_{T - t_k}^{1 / 2}}.
\end{align}
%

We then move on to control $\EE_{Q_{k + 1}}\big[\widehat D_k^{-1} |N_k^c - \widehat N_k^c|\big]$. 
Once again using the fact that for any $a_1, a_2 \in \RR$, we have $|e^a - e^b| \leq e^{\max\{a, b\}}|a - b|$.
As a result, 
\begin{align*}
    & \widehat D_k^{-1}|N_k^c - \widehat N_k^c| \\
    & \leq \widehat D_k^{-1} \int \mathbbm{1}\{g \in {\cS^c_{Y_{t_k}, Y_{t_{k + 1}}}}\} e^{ 2\eta_k \sqrt{d} (\|v_k\|_2 + \kappa_k \|g - \xi_k\|_2 + \kappa_k\|\xi_k\|_2)  } \big( 3\eta_k \sqrt{d} (\|v_k\|_2 + \kappa_k \|g\|_2 ) + 2\gamma_k d \big) \phi_{\xi_k, \tau_k}(g) \dd g \\
    & \leq \widehat D_k^{-1} \PP\big(\cS^c_{Y_{t_k}, Y_{t_{k + 1}}} \big)^{1/2} \Big( \int e^{4\eta_k \sqrt{d} (\|v_k\|_2 + \kappa_k \|g - \xi_k\|_2 + \kappa_k\|\xi_k\|_2)  } \big( 3\eta_k \sqrt{d} (\|v_k\|_2 + \kappa_k \|g\|_2) + 2\gamma_k d \big)^2 \phi_{\xi_k, \tau_k}(g) \dd g\Big)^{1/2}, 
\end{align*}
where the last inequality above arises from the Cauchy-Schwarz inequality. 
Recall that $\PP(\cS_{Y_{t_k}, Y_{t_{k + 1}}}^c) \leq p_k(Y_{t_k}, Y_{t_{k + 1}})$, which satisfies $\EE[p_k(Y_{t_k}, Y_{t_{k + 1}})] \leq \sigma_{T - t_k}^{2} \lambda_{T - t_k}^{-1}\eps_{\mathsf{score},k}$. 
Taking the expectation over $Q_{k + 1}$ gives 
\begin{align}
\label{eq:hatD-Nc-hatNc}
    \EE_{Q_{k + 1}} \big[ \widehat D_k^{-1}|N_k^c - \widehat N_k^c| \big] \lesssim \frac{\eps_{\mathsf{score},k}^{1 / 2} \kappa^{1/2}\sigma_{T - t_k} d}{\lambda_{T - t_k}^{1 / 2}}. 
\end{align}

Finally,  put together \cref{eq:hatD-N-hatN,eq:hatD-Nc-hatNc} to demonstrate that 
\begin{align*}
    (ii) & = \EE_{Q_{k + 1}} \big[ \log \big(\widehat D_k^{-1} (N_k + N_k^c)\big) \big] \\ & \leq   \EE_{Q_{k + 1}} \big[ \log \big(1 + \widehat D_k^{-1} |N_k - \widehat N_k| + \widehat D_k^{-1} |N_k^c - \widehat N_k^c|\big) \big] \\
    & \leq \EE_{Q_{k + 1}} \big[\widehat D_k^{-1} |N_k - \widehat N_k|\big] + \EE_{Q_{k + 1}} \big[\widehat D_k^{-1} |N_k^c - \widehat N_k^c|\big] \\
    & \lesssim \frac{\eps_{\mathsf{score},k}^{1 / 2} \kappa^{1/2}\sigma_{T - t_k} d}{\lambda_{T - t_k}^{1 / 2}}, 
\end{align*}
thus concluding the proof.

\subsection{Proof of \cref{lemma:T1234}}
\label{sec:proof-lemma:T1234}


\subsubsection*{Proof of the first point}

To prove the first point, we note that by \cref{eq:parametric-form},
\begin{align*}
    \TT_1 & =  \sum_{k = 0}^{K - 1} \int_{t_k}^{t_{k + 1}} (e^{t - t_k} - e^{-t + t_k})^2 \EE\big[ \|  s(t_k, Y_{t_k})  \|_2^2 \big] \dd t \\
    & \lesssim \sum_{k = 0}^{K - 1} \Delta_k^3 \sigma_{T - t_{k}}^{-2} \EE \big[ \| \EE[g \mid \lambda_{T - t_k} \theta + \sigma_{T - t_k}g = Y_{t_k}] \|_2^2 \big] \\
    & \lesssim  \sum_{k = 0}^{K - 1} \sigma_{T - t_{k}}^{-2} \Delta_k^3 d,  
\end{align*}
where the last upper bound is by Jensen's inequality. 

\subsubsection*{Proof of the second point}

By the triangle inequality, we can show that  
\begin{align*}
     \TT_2 & \leq \underbrace{\sum_{k = 0}^{K - 1} \int_{t_k}^{t_{k + 1}} \EE\big[ \big\| s(t, Y_t) - s(t_k, Y_{t_k}) - \sqrt{2} \nabla_x s(t_k, Y_{t_k})(B_t - B_{t_k}) \big\|_2^2 \big] \dd t}_{(i)} \\
     & + \underbrace{ \sum_{k = 0}^{K - 1} \int_{t_k}^{t_{k + 1}} \EE\big[ \big\| \nabla_x s(t_k, Y_{t_k} + b_{t_k, t})(B_t - B_{t_k}) - \nabla_x s(t_k, Y_{t_k})(B_t - B_{t_k}) \big\|_2^2 \big] \dd t}_{(ii)}. 
\end{align*}
In what follows, let us upper bound terms $(i)$ and $(ii)$ separately.

To upper bound term $(ii)$, we note that by the fundamental theorem of calculus, we have 
\begin{align*}
     & \nabla_x s(t_k, Y_{t_k} + b_{t_k, t}) (B_t - B_{t_k}) - \nabla_x s(t_k, Y_{t_k}) (B_t - B_{t_k}) \\
     & = \int_0^1 \nabla_x^2 s(t_k, Y_{t_k} + \eta b_{t_k, t}) [b_{t_k, t} \otimes (B_t - B_{t_k})] \dd \eta.  
\end{align*}
%
%
%
In view of the above decomposition, it suffices to separately upper bound $\EE[\|\nabla_x^2 s(t_k, Y_{t_k}) [b_{t_k, t} \otimes (B_t - B_{t_k})]\|_2^2]$ and $\EE[\|(\nabla_x^2 s(t_k, Y_{t_k} + \eta b_{t_k, t}) - \nabla_x^2 s(t_k, Y_{t_k})) [b_{t_k, t} \otimes (B_t - B_{t_k})]\|_2^2]$. 
Let us start with the first term. 
Invoking the second point of \cref{lemma:score-properties}, we see that 
\begin{align}
\label{eq:nabla-x-2-Y-tk}
    \nabla_x^2 s(t_k, Y_{t_k}) = -\frac{1}{\sigma_{T - t_k}^3} \EE \big[ (g - \EE[g \mid \lambda_{T - t_k} \theta + \sigma_{T - t_k}g = Y_{t_k}])^{\otimes 3} \mid  \lambda_{T - t_k} \theta + \sigma_{T - t_k}g = Y_{t_k}\big], 
\end{align}
where the expectation is over $(\theta, g) \sim q_0 \otimes \cN(0, I_d)$. 
For simplicity, we write $z_1 = b_{t_k, t}$ and $z_2 = B_t - B_{t_k}$. 
Observe that $z_1$ and $z_2$ are jointly normal with mean zero, with $(z_1, z_2) \perp Y_{t_k}$.  
As for the covariance structure, by \cref{lemma:BM-integral-cov}, it holds that 
\begin{align*}
\Cov[z_1, z_2] &= 2\sqrt{2} (e^{t - t_k} - e^{-t + t_k})^{-1} (e^{t - t_k} - t + t_k - 1) I_d,\\ 
\Cov[z_1] &= 8 (e^{t - t_k} - e^{-t + t_k})^{-2} f_1(t - t_k) I_d,\\
\Cov[z_2] &= (t - t_k) I_d.
\end{align*}
For $i \in [d]$, we define $L_i = g_i - \EE[g_i \mid \lambda_{T - t_k} \theta + \sigma_{T - t_k} g = Y_{t_k}]$. 
For all indices $j_1, j_2, \ell_1, \ell_2 \in [d]$ that are not paired up\footnote{If $j_1, j_2, \ell_1, \ell_2$ are paired up, then we must have $\{j_1, j_2, \ell_1, \ell_2\} = \{x, x, y, y\}$ or $\{x, x, x, x\}$ for some $x, y \in [d]$. }, it holds that
\begin{align*}
        \EE\big[ \EE\left[ L_iL_{j_1}L_{\ell_1} \mid \lambda_{T - t_k} \theta + \sigma_{T - t_k} g = Y_{t_k} \big] \EE\big[ L_iL_{j_2}L_{\ell_2} \mid \lambda_{T - t_k} \theta + \sigma_{T - t_k} g = Y_{t_k} \right] z_{1, j_1} z_{2, \ell_1} z_{1, j_2} z_{2, \ell_2} \big] = 0. 
    \end{align*}
Substituting the above equation into \cref{eq:nabla-x-2-Y-tk}, 
and applying the Cauchy-Schwartz inequality and the Jensen inequality, we arrive at
\begin{align}
\label{eq:ii-term1}
    \EE\big[ \|\nabla_x^2 s(t_k, Y_{t_k}) [b_{t_k, t} \otimes (B_t - B_{t_k})]\|_{\mathrm{F}}^2 \big] \lesssim \sigma_{T - t_k}^{-6} \Delta_k^2 d^3. 
\end{align}
We then upper bound $\EE[\|(\nabla_x^2 s(t_k, Y_{t_k} + \eta b_{t_k, t}) - \nabla_x^2 s(t_k, Y_{t_k})) [b_{t_k, t} \otimes (B_t - B_{t_k})]\|_2^2]$. 
Once again by the fundamental theorem of calculus, we have 
%
\begin{align}
\label{eq:ii-term2}
\begin{split}
    & \EE[\|(\nabla_x^2 s(t_k, Y_{t_k} + \eta b_{t_k, t}) - \nabla_x^2 s(t_k, Y_{t_k})) [b_{t_k, t} \otimes (B_t - B_{t_k})]\|_2^2] \\
   & \qquad = \EE\big[ \| \int_0^1 \nabla_x^3 s(t_k, Y_{t_k} + \kappa \eta b_{t_k, t}) [\eta b_{t_k, t} \otimes b_{t_k, t} \otimes (B_t - B_{t_k})] \dd \kappa \|_2^2 \big] \\
   & \qquad \lesssim \Delta_k^3 d^3 \int_0^1 \EE[ \|\nabla_x^3 s(t_k, Y_{t_k} + \kappa \eta b_{t_k, t})\|_{\mathrm{F}}^2 ] \dd \kappa \\
   & \qquad \lesssim \lambda_{T - t_k}^{8} \sigma_{T - t_k}^{-16} \Delta_k^3 d^7,
\end{split}
\end{align}
where the last inequality arises from  \cref{lemma:grad3-s}. 
Taking \cref{eq:ii-term1,eq:ii-term2} together, we derive an upper bound on term $(ii)$ as follows:
\begin{align}
\label{eq:(ii)-bound}
    (ii) \lesssim \sum_{k = 0}^{K - 1} \Big(\sigma_{T - t_k}^{-6} \Delta_k^3 d^3 + \lambda_{T - t_k}^{8} \sigma_{T - t_k}^{-16} \Delta_k^4 d^7 \Big). 
\end{align}

We now turn attention to term $(i)$.  Define 
\begin{align*}
    E_{t_k, t} = \EE\big[ \| s(t, Y_t) - s(t_k, Y_{t_k}) - \sqrt{2} \nabla_x s(t_k, Y_{t_k})(B_t - B_{t_k}) \|_2^2 \big]. 
\end{align*}
By the \ito formula, one has
\begin{subequations}
\begin{align*}
    & s(t, Y_t) - s(t_k, Y_{t_k}) - \sqrt{2} \nabla_x s(t_k, Y_{t_k}) (B_{t} - B_{t_k}) \\
   & =  \int_{t_k}^{t} \left[ \partial_{\tau} s(\tau, Y_{\tau}) + \nabla_x s(\tau, Y_{\tau}) (Y_{\tau} + 2 s(\tau, Y_{\tau}))  + \nabla_x^2 s(\tau, Y_{\tau}) [I_d] \right] \dd \tau \\
    & \qquad + \sqrt{2} \int_{t_k}^{t}  \big[ \nabla_x s(\tau, Y_{\tau}) - \nabla_x s(t_k, Y_{t_k}) \big] \dd B_{\tau}. 
\end{align*}
\end{subequations}
Hence, in order to upper bound $E_{t_k, t}$, it suffices to control the following two quantities:
\begin{subequations}
\begin{align}
    & c_{t_k, t}^{(1)} = \EE \left[ \Big\| \int_{t_k}^{t}  [ \nabla_x s(\tau, Y_{\tau}) - \nabla_x s(t_k, Y_{t_k})] \dd B_{\tau} \Big\|_2^2 \right], \label{eq:c-row1} \\
    & c_{t_k, t}^{(2)} = \EE \left[ \Big\| \int_{t_k}^{t} \left[ \partial_{\tau} s(\tau, Y_{\tau}) + \nabla_x s(\tau, Y_{\tau}) (Y_{\tau} + 2 s(\tau, Y_{\tau})) + \nabla_x^2 s(\tau, Y_{\tau}) [I_d] \right] \dd \tau \Big\|_2^2  \right],\label{eq:c-row2} 
\end{align}
\end{subequations}
and then invoke $E_{t_k, t} \lesssim c_{t_k, t}^{(1)} + c_{t_k, t}^{(2)}$.
In what follows, we upper bound $c_{t_k, t}^{(1)}$ and $c_{t_k, t}^{(2)}$ separately. 
\begin{itemize}
\item 
Note that $c_{t_k, t_k}^{(1)} = 0$. 
Hence, in order to upper bound $c_{t_k, t}^{(1)}$, we can take the differential of $c_{t_k, t}^{(1)}$ with respect to $t$. 
Specifically, according to the \ito formula, 
\begin{align}
\label{eq:dc1}
    \dd c_{t_k, t}^{(1)} & = 2 \EE \left[ \left< \int_{t_k}^{t}  [ \nabla_x s(\tau, Y_{\tau}) - \nabla_x s(t_k, Y_{t_k})] \dd B_{\tau}, \,  [ \nabla_x s(t, Y_{t}) - \nabla_x s(t_k, Y_{t_k})] \dd B_{t}\right> \right] \nonumber \\
    & \qquad + \EE \left[  \|\nabla_x s(t, Y_{t}) - \nabla_x s(t_k, Y_{t_k})\|_{\mathrm{F}}^2  \right]\dd t \\
    & =  3 \EE \left[  \|\nabla_x s(t, Y_{t}) - \nabla_x s(t_k, Y_{t_k})\|_{\mathrm{F}}^2  \right]\dd t. \nonumber
\end{align}
The term $\EE [  \|\nabla_x s(t, Y_{t}) - \nabla_x s(t_k, Y_{t_k})\|_{\mathrm{F}}^2  ]$ shall be bounded in the next lemma, whose proof is postponed to Appendix \ref{sec:proof-lemma:upper-bound-2}.  
\begin{lemma}
\label{lemma:upper-bound-2}
For $t_k \leq t \leq t_{k + 1}$, we define 
\begin{align*}
    M_{t_k, t} = \EE \left[ \|\nabla_x s(t, Y_t) - \nabla_x s(t_k, Y_{t_k})\|_{\mathrm{F}}^2 \right]. 
\end{align*}
Then, under the conditions of \cref{lemma:Q-barQ}, for all $t \in [t_k, t_{k + 1}]$ we have $M_{t_k, t} \lesssim d^3 \sigma_{T - t_{k + 1}}^{-6} \Delta_k$. 
\end{lemma}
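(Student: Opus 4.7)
The plan is to apply It\^o's formula to the matrix-valued process $\nabla_x s(t, Y_t)$ and derive a scalar differential inequality for $M_{t_k, t}$. Working entrywise and using the reverse SDE $\dd Y_t = (Y_t + 2s(t, Y_t))\dd t + \sqrt{2}\,\dd B_t$, I obtain
\begin{align*}
\dd \big[\nabla_x s(t, Y_t)\big] = A_t\, \dd t + \sqrt{2}\,\nabla_x^2 s(t, Y_t)\, \dd B_t,
\end{align*}
where the drift assembles three pieces,
\begin{align*}
A_t = \partial_t \nabla_x s(t, Y_t) + \nabla_x^2 s(t, Y_t)\big(Y_t + 2 s(t, Y_t)\big) + \nabla_x^3 s(t, Y_t)[I_d],
\end{align*}
and the last summand denotes the trace-in-the-last-two-indices of the third-order score tensor. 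Setting $W_t = \nabla_x s(t, Y_t) - \nabla_x s(t_k, Y_{t_k})$ and applying It\^o to $\|W_t\|_{\mathrm{F}}^2$ (with $W_{t_k} = 0$) eliminates the martingale part in expectation, yielding
\begin{align*}
\frac{\dd M_{t_k, t}}{\dd t} = 2\,\EE\big[\langle W_t, A_t\rangle_{\mathrm{F}}\big] + 2\,\EE\big[\|\nabla_x^2 s(t, Y_t)\|_{\mathrm{F}}^2\big].
\end{align*}

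Next I bound the two right-hand-side terms separately. For the quadratic-variation term, I invoke the representation $\nabla_x^2 s(T-t, x) = \lambda_t^3 \sigma_t^{-6}\,\EE\big[(\theta - m(t,x))^{\otimes 3} \,|\, \lambda_t \theta + \sigma_t g = x\big]$ from \cref{lemma:score-properties}, combine it with the identity $\theta - m(t,x) = -(\sigma_t/\lambda_t)(g - \EE[g\mid \cdot])$, and use Jensen's inequality together with $\EE[\|g - \EE[g\mid Y_t]\|_2^6] \lesssim d^3$ to conclude that $\EE[\|\nabla_x^2 s(t, Y_t)\|_{\mathrm{F}}^2] \lesssim d^3 \sigma_{T-t}^{-6} \leq d^3 \sigma_{T-t_{k+1}}^{-6}$ for $t \in [t_k, t_{k+1}]$. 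For the drift term, Cauchy-Schwarz gives $|\EE\langle W_t, A_t\rangle| \leq M_{t_k, t}^{1/2}\,\EE[\|A_t\|_{\mathrm{F}}^2]^{1/2}$, and the expected Frobenius norm of each of the three summands comprising $A_t$ can be bounded via the explicit formulas from \cref{lemma:partial-t-grad-s}, \cref{lemma:score-properties}, and \cref{lemma:grad3-s}, again invoking the bounded-support hypothesis so that all conditional moments of $\theta - m(t,Y_t)$ are controlled; each bound takes the schematic form $d^{c_1} \sigma_{T-t_{k+1}}^{-c_2}$ for explicit constants $c_1, c_2$.

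Plugging both estimates into the ODE produces the differential inequality
\begin{align*}
\frac{\dd M_{t_k,t}}{\dd t} \leq 2\,C_1(t)\, M_{t_k, t}^{1/2} + 2\,C_2(t), \qquad M_{t_k, t_k} = 0,
\end{align*}
with $C_2(t) \lesssim d^3 \sigma_{T-t_{k+1}}^{-6}$. A standard comparison argument (for instance, first treating it as a quadratic in $M^{1/2}$ and then integrating) yields $M_{t_k, t} \lesssim (\int_{t_k}^t C_1)^2 + \int_{t_k}^t C_2$. To close the bound at $d^3 \sigma_{T-t_{k+1}}^{-6}\Delta_k$, I must check that the $C_1$-contribution is subdominant, which is where \cref{assumption:step-size} enters essentially: the constraints $\Delta_k \leq \kappa(T - t_{k+1})^2$ and $d\,\Delta_k (\sigma_{T-t_{k+1}}^{-2} + \lambda_{T-t_{k+1}}^2 \sigma_{T-t_{k+1}}^{-4}) \lesssim 1$ convert negative powers of $\sigma_{T-t_{k+1}}$ appearing in $C_1$ into harmless constants once multiplied by enough factors of $\Delta_k$. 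The main obstacle is exactly this bookkeeping step: the raw bound on $\EE[\|A_t\|_{\mathrm{F}}^2]$, driven by the $\sigma_{T-t}^{-8}$ prefactor of $\nabla_x^3 s$ (cf.~\cref{lemma:grad3-s}) and the $\sigma_{T-t}^{-6}$ prefactor of $\partial_t \nabla_x s$ (cf.~\cref{lemma:partial-t-grad-s}), would otherwise dominate the target rate; the step-size assumption converts each extra inverse power of $\sigma_{T-t_{k+1}}$ into an extra factor of $\Delta_k^{1/2}$ via $\Delta_k \lesssim \sigma_{T-t_{k+1}}^4$, which is precisely what is needed to preserve the $d^3 \sigma_{T-t_{k+1}}^{-6}\Delta_k$ scaling.
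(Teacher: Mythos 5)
Your proof is correct in outline but takes a genuinely different route from the paper's. The paper does not bound the raw It\^o drift $A_t$ at all: it invokes the Fokker--Planck-type identity $\partial_t s = -\big[s + \nabla_x s\, x + \Delta s + 2\nabla_x s\, s\big]$ (from the proof of Lemma 3 in \cite{benton2024nearly}), differentiates it in $x$, and substitutes into the It\^o expansion, whereupon the $\nabla_x^3 s[I_d]$ and $\nabla_x^2 s$ terms cancel and the drift collapses to $-2\big[\nabla_x s + (\nabla_x s)^2\big]$. With that cancellation, Cauchy--Schwarz together with $M_{t_k,t}^{1/2} \lesssim d\sigma_{T-t}^{-2}$ gives $\dd M_{t_k,t} \lesssim d^3\sigma_{T-t}^{-6}\,\dd t$ directly, with no appeal to the step-size constraint. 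You instead keep all three pieces of $A_t$ and pay for it: the dominant contributions ($\nabla_x^3 s[I_d]$ and $\nabla_x^2 s\cdot s$, each of order $d^4\sigma^{-8}$ in squared Frobenius norm after passing to the $g$-representation) force you to verify $\Delta_k^2\,\EE[\|A_t\|_{\mathrm{F}}^2] \lesssim \Delta_k d^3\sigma_{T-t_{k+1}}^{-6}$, which reduces to $\Delta_k\, d\,\sigma_{T-t_{k+1}}^{-2}\lesssim 1$; this is indeed supplied by \cref{assumption:step-size} (via \eqref{eq:Delta-k-size-small} together with $\Delta_k\le\kappa(T-t_{k+1})^2$ and $\kappa d^2\lesssim 1$), so your argument closes. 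Two small points of hygiene: your comparison step is cleaner if stated as $2C_1 M^{1/2}\le \Delta_k C_1^2 + M/\Delta_k$ followed by Gr\"onwall, yielding $M\lesssim \Delta_k\int C_1^2 + \int C_2$ (which is what you actually need, and is not quite the same as $(\int C_1)^2$); and your $W_t$ collides notationally with the Brownian motion already used for the sampler. The trade-off is that the paper's cancellation yields a bound independent of the step size and with less bookkeeping, whereas your version is more mechanical and would survive even if one did not know the time-derivative identity for the score, at the cost of leaning on the discretization assumption.
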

%

Armed with \cref{lemma:upper-bound-2} and \cref{eq:dc1}, we conclude that for all $t \in [t_k, t_{k + 1}]$, 
\begin{align}
\label{eq:c1-upper-bound}
    c_{t_k, t}^{(1)} \lesssim d^3 \sigma_{T - t_{k + 1}}^{-6} \Delta_k^2. 
\end{align}

\item We now turn to establishing an upper bound on $c_{t_k, t}^{(2)}$, as defined in \cref{eq:c-row2}. 
Leveraging the Cauchy–Schwarz inequality, we obtain that for all $t \in [t_k, t_{k + 1}]$, 
\begin{align}
\begin{split}
    c_{t_k, t}^{(2)} & \lesssim  \EE\left[ (t - t_k) \int_{t_k}^{t} \big\| \partial_{\tau} s(\tau, Y_{\tau}) + \nabla_x s(\tau, Y_{\tau}) \big(Y_{\tau} + 2 s(\tau, Y_{\tau})\big) + \nabla_x^2 s(\tau, Y_{\tau}) [I_d] \big\|_2^2 \dd \tau  \right] \label{eq:c-tk-r-2-b} \\
    & \lesssim  (t - t_k) \int_{t_k}^{t} \left( \EE\big[\big\|\partial_{\tau} s(\tau, Y_{\tau}) \big\|_2^2\big] +  \EE\big[\big\| \nabla_x s(\tau, Y_{\tau}) \big(Y_{\tau} + s(\tau, Y_{\tau})\big)\big\|_2^2\big] + \EE\big[\big\|\nabla_x^2 s(\tau, Y_{\tau}) [I_d] \big\|_2^2\big]\right) \dd \tau. 
\end{split}
\end{align}
We develop separate upper bounds for the above summands in the lemma below; the proof is deferred to Appendix~\ref{sec:proof-lemma:four-summands}.  
\begin{lemma}
\label{lemma:four-summands}
    Under the conditions of \cref{lemma:Q-barQ}, the following upper bounds hold for all $\tau \in [0, T)$: 
    \begin{enumerate}
        \item $\EE\big[\|\partial_{\tau} s(\tau, Y_{\tau})\|_2^2\big] \lesssim d^3{\lambda_{T - \tau}^4 }{\sigma_{T - \tau}^{-6}} + d {\lambda_{T - \tau}^2 }{\sigma_{T - \tau}^{-4}}$; 
        \item $\EE\big[\| \nabla_x s(\tau, Y_{\tau}) s(\tau, Y_{\tau})\|_2^2\big] \lesssim {d^3}{\sigma_{T - \tau}^{-6}}$;
        \item $\EE\big[\| \nabla_x s(\tau, Y_{\tau}) Y_{\tau}\|_2^2\big] \lesssim {d^3}{\sigma_{T - \tau}^{-4}}$;  
        \item $\EE\big[\|\nabla_x^2 s(\tau, Y_{\tau}) [I_d] \|_2^2\big] \lesssim \sigma_{T - \tau}^{-6} d^3$. 
    \end{enumerate}
\end{lemma}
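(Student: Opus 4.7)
\smallskip\noindent\textbf{Proof plan.}
I would prove each of the four bounds by applying the explicit identities collected in \cref{lemma:score-properties,lemma:partial-t-score,lemma:T2}, combined with the Cauchy--Schwarz and Jensen inequalities and the following a.s.\ consequences of Assumption~\ref{assumption:moments}: $\|\theta\|_2 \leq \sqrt{d}$, hence $\|m(t,y)\|_2 \leq \sqrt{d}$, $\|\theta - m(t,y)\|_2 \leq 2\sqrt{d}$, and $\mathrm{tr}(C_t(y)) \lesssim d$. Throughout I write $\lambda \coloneqq \lambda_{T-\tau}$ and $\sigma \coloneqq \sigma_{T-\tau}$ for brevity, use $Y_\tau \overset{\mathrm{d}}{=} X_{T-\tau}$, and repeatedly invoke the Tweedie-type identity $\sigma\, s(T-t, y) = -\EE[g \mid \lambda\theta + \sigma g = y]$ with $(\theta, g) \sim q_0 \otimes \cN(0, I_d)$.

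Points~2 and~3 share a common structure. The first part of \cref{lemma:score-properties} gives $\nabla_x s(\tau, y)\, v = -\sigma^{-2} v + \lambda^2 \sigma^{-4}\, C_\tau(y)\, v$ for any $v \in \RR^d$, so that
\begin{align*}
\|\nabla_x s(\tau, y)\, v\|_2 \;\lesssim\; \bigl(\sigma^{-2} + \lambda^2 d\, \sigma^{-4}\bigr)\, \|v\|_2
\end{align*}
after bounding $\|C_\tau(y)\|_{\mathrm{op}} \leq \mathrm{tr}(C_\tau(y)) \lesssim d$. For point~2 I plug in $v = s(\tau, Y_\tau)$ and use $\EE[\|s(\tau, Y_\tau)\|_2^2] \lesssim d/\sigma^2$ (consequence of Tweedie and Jensen on $\EE[g|\cdot]$); for point~3 I plug in $v = Y_\tau$ and use $\EE[\|Y_\tau\|_2^2] \leq d$. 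For point~4, the second part of \cref{lemma:score-properties} yields $\nabla_x^2 s(\tau, y)[I_d] = (\lambda^3/\sigma^6)\, \EE[(\theta - m)\,\|\theta - m\|_2^2 \mid \cdot]$; bounding the conditional expectation by Cauchy--Schwarz and alternating between the a.s.\ bound $\|\theta - m\|_2 \leq 2\sqrt{d}$ and the representation $\theta - m = -(\sigma/\lambda)(g - \EE[g|\cdot])$ gives the claimed bound.

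For point~1 I would appeal directly to \cref{lemma:T2}, which already yields $\EE[\|\partial_\tau s(\tau, Y_\tau)\|_2^2] \lesssim d^3/\sigma^6$. To recover the sharper $\lambda$-weighted form in the statement, I would revisit the decomposition in the third part of \cref{lemma:score-properties}, combining the $m$- and $\EE[g|\cdot]$-summands via Tweedie into $-\lambda\sigma^{-2} m - 2\lambda^2 \sigma^{-2}\, s(\tau, y)$ (whose expected squared norm is $\lesssim d\lambda^2/\sigma^4 + d\lambda^4/\sigma^6$), and handling the residual term $\lambda\sigma^{-2}\, \EE[(\theta - m)\, f_0 \mid \cdot]$ by conditional Cauchy--Schwarz together with $\EE[\theta - m \mid \cdot] = 0$ (which allows replacing $f_0$ by $f_0 - \EE[f_0 \mid \cdot]$ and bounding its conditional variance via $\|\theta\|_2 \leq \sqrt{d}$ and $\EE[\|Y_\tau\|_2^2] \leq d$).

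The main obstacle is keeping the powers of $\lambda$ and $\sigma$ sharp in points~1 and~4. A naive use of $\|\theta - m\|_2 \leq 2\sqrt{d}$ in the conditional Cauchy--Schwarz arguments produces extra factors of $\lambda^{2k}/\sigma^{2k}$, which can exceed the stated bound when $\sigma \lesssim \lambda$. Absorbing these factors requires alternating between the two expressions for $\theta - m$ (the a.s.\ bound versus the $g$-posterior representation), exploiting $\EE[\|g - \EE[g|\cdot]\|_2^2] \leq d$ unconditionally, and taking advantage of the centering $\EE[\theta - m \mid \cdot] = 0$ so that the relevant quantity is the conditional variance of $f_0$ rather than its square.
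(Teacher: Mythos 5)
Your overall route---the explicit score identities from \cref{lemma:score-properties}, Cauchy--Schwarz and Jensen, and switching between the $\theta$- and $g$-representations of posterior moments---is the same as the paper's, and your treatment of point~4 is essentially the paper's argument. However, there are concrete gaps in points~1--3 where your stated bounds come out strictly weaker than claimed, with the discrepancy blowing up as $\sigma_{T-\tau}\to 0$ (i.e., $\tau\to T$). For points~2 and~3 you bound the Jacobian in operator norm via $\lambda^2\sigma^{-4}\|C_\tau(y)\|_{\mathrm{op}}\lesssim \lambda^2 d\,\sigma^{-4}$; squaring and multiplying by $\EE[\|s(\tau,Y_\tau)\|_2^2]\lesssim d\sigma^{-2}$ (resp.\ $\EE[\|Y_\tau\|_2^2]\leq d$) yields a dominant term $\lambda^4 d^3\sigma^{-10}$ (resp.\ $\lambda^4 d^3\sigma^{-8}$), which exceeds the claimed $d^3\sigma^{-6}$ (resp.\ $d^3\sigma^{-4}$) by the unbounded factor $\lambda^4\sigma^{-4}$. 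The paper avoids this by writing $\lambda^2\sigma^{-4}C_\tau(y)=\sigma^{-2}\,\EE\big[(g-\EE[g\mid\cdot])^{\otimes 2}\mid\cdot\big]$ and controlling $\EE[\|\cdot\|_{\mathrm{F}}^4]^{1/2}\lesssim d^2$ by Jensen; that is, the ``alternate to the $g$-representation'' step you reserve for points~1 and~4 is exactly what is needed in points~2 and~3 as well, and a pointwise operator-norm bound on $C_\tau$ cannot substitute for it.

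The same issue is fatal for the residual term in point~1 as you describe it. Bounding the conditional variance of $f_0$ directly from its definition \eqref{eq:B2-cF} via $\|\theta\|_2\leq\sqrt d$ and $\EE[\|Y_\tau\|_2^2]\leq d$ retains the $\sigma^{-8}$ prefactor of $f_0^2$; after combining with $\EE[\|\theta-m\|_2^2\mid\cdot]\leq\sigma^2\lambda^{-2}\EE[\|g-\EE[g\mid\cdot]\|_2^2\mid\cdot]$ and the outer prefactor $\lambda^2\sigma^{-4}$, you land at $\lambda^2 d^3\sigma^{-10}$ rather than the claimed $\lambda^4 d^3\sigma^{-6}$. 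The missing idea is the paper's computation in \eqref{eq:complicated-cF}: substituting $x=\lambda\theta+\sigma g$ into $f_0$ itself shows that, conditionally on $Y_\tau$, the centered quantity $f_0-\EE[f_0\mid\cdot]$ equals $-\lambda\sigma^{-1}\langle\theta,g\rangle+\lambda^2\sigma^{-2}\|g\|_2^2$ minus its conditional mean, whose fourth moment carries only $\sigma^{-2}$ and $\sigma^{-4}$ prefactors (using that $\langle\theta,g\rangle$ given $\theta$ is $\cN(0,\|\theta\|_2^2)$). Without rewriting $f_0$ in the $g$-variables---not just $\theta-m$---the conditional Cauchy--Schwarz step cannot recover the stated powers of $\lambda$ and $\sigma$. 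Your reduction of the two leading summands of $\partial_\tau s$ to $-\lambda\sigma^{-2}m-2\lambda^2\sigma^{-2}s$ is correct and matches the paper.
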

%
%
Substituting the upper bounds derived in \cref{lemma:four-summands} into \cref{eq:c-tk-r-2-b}, we conclude that for all $t \in [t_k, t_{k + 1}]$, 
\begin{align}
\label{eq:c-tk-r-2}
    c_{t_k, t}^{(2)} \lesssim d^3 \sigma_{T - t_{k + 1}}^{-6} \Delta_k^2. 
\end{align}
\end{itemize}
Combining the preceding bounds in \cref{eq:c1-upper-bound,eq:c-tk-r-2}, we obtain 
\begin{align}
\label{eq:E-tk-r-1}
    E_{t_k, t} \lesssim c_{t_k, t}^{(1)} + c_{t_k, t}^{(2)} \lesssim d^3 \sigma_{T - t_{k + 1}}^{-6} \Delta_k^2,  
\end{align}
which further implies that 
\begin{align}
\label{eq:(i)-bound}
    (i) \lesssim \sum_{k = 0}^{K - 1} d^3 \sigma_{T - t_{k + 1}}^{-6} \Delta_k^3. 
\end{align}
Putting \cref{eq:(ii)-bound,eq:(i)-bound} together results in
\begin{align*}
    \TT_2 \lesssim  \sum_{k = 0}^{K - 1} \Big( d^3 \sigma_{T - t_{k + 1}}^{-6} \Delta_k^3  + d^7 \lambda_{T - t_k}^{8} \sigma_{T - t_k}^{-16} \Delta_k^4  \Big), 
\end{align*}
thus completing the proof. 

\subsubsection*{Proof of the third point}

From the triangle inequality, we obtain 
\begin{align*}
         \TT_3 & \lesssim {\sum_{k = 0}^{K - 1} \int_{t_k}^{t_{k + 1}} \int_0^{t - t_k} \EE\big[ \big\| s(t_k + r, Y_{t_k + r}) - s(t_k, Y_{t_k}) - \sqrt{2} \nabla_x s(t_k, Y_{t_k})(B_{t_k + r} - B_{t_k}) \big\|_2^2 \big] \dd t} \\
     & \qquad + { \sum_{k = 0}^{K - 1} \int_{t_k}^{t_{k + 1}} \int_0^{t - t_k} \EE\Big[ \big\| \nabla_x s(t_k, Y_{t_k} + b_{t_k, t})(B_{t_k + r} - B_{t_k}) - \nabla_x s(t_k, Y_{t_k})(B_{t_k + r} - B_{t_k}) \big\|_2^2 \Big] \dd t}.
\end{align*}
Similar to the derivation of point 2, for all $t \in [0, t_{k + 1} - t_k]$ and all $r \in [0, t - t_k]$ it holds that 
\begin{align*}
     \EE\big[ \big\| s(t_k + r, Y_{t_k + r}) - s(t_k, Y_{t_k}) - \sqrt{2} \nabla_x s(t_k, Y_{t_k})(B_{t_k + r} - B_{t_k}) \big\|_2^2 \big] &\lesssim d^3 \sigma_{T - t_{k + 1}}^{-6} \Delta_k^2, \\
     \EE\big[ \big\| \nabla_x s(t_k, Y_{t_k} + b_{t_k, t})(B_{t_k + r} - B_{t_k}) - \nabla_x s(t_k, Y_{t_k})(B_{t_k + r} - B_{t_k}) \big\|_2^2 \big] &\lesssim \sigma_{T - t_k}^{-6} \Delta_k^2 d^3 + \lambda_{T - t_k}^{8} \sigma_{T - t_k}^{-16} \Delta_k^3 d^7. 
\end{align*}
%
The desired claim then follows.

\subsubsection*{Proof of the fourth point}

This is similar to the proof of the second point. We skip the proof for the sake of brevity. 


\subsection{Proof of \cref{lemma:upper-bound-2}}
\label{sec:proof-lemma:upper-bound-2}

\begin{proof}[Proof of \cref{lemma:upper-bound-2}]

By \ito's lemma we can write 
\begin{align}
\label{eq:17}
    \dd \nabla_x s(t, Y_t) = \partial_t \nabla_x s(t, Y_t) \dd t + \nabla_x^2 s(t, Y_t) (Y_t + 2s(t, Y_t)) \dd t + \sqrt{2} \nabla_x^2 s(t, Y_t) \dd B_t + \nabla_x^3 s(t, Y_t) [I_d] \dd t.
\end{align}
From the proof of \citet[Lemma 3]{benton2024nearly}, we deduce that for all $x \in \RR^d$ and $t \in [0, T)$,  
\begin{align*}
    \partial_t s(t, x) = - \big[ s(t, x) + \nabla_x s(t, x) x + \Delta s(t, x) + 2 \nabla_x s(t, x) s(t, x) \big].
\end{align*}
where $\Delta$ denotes the Laplace operator. 
Further taking the Jacobian of the above mapping, we obtain that
\begin{align}
\label{eq:18}
    \partial_t \nabla_x s(t, x) = - \big[ 2\nabla_x s(t, x) + \nabla_x^2 s(t, x) x + \nabla_x^3 s(t, X_t)[I_d] + 2 \nabla_x s(t, x)^2 + 2 \nabla_x^2 s(t, x) s(t, x)  \big]. 
\end{align}
Putting together \cref{eq:17,eq:18}, we derive 
    \begin{align}
    \label{eq:dnablas}
        \dd \nabla_x s(t, Y_t) = - \left[ 2\nabla_x s(t, Y_t) + 2 \nabla_x s(t, Y_t)^2 \right] \dd t + \sqrt{2}\nabla_x^2 s(t, Y_t) \dd B_t. 
    \end{align}
    With \cref{eq:dnablas}, we can analyze the differential of $M_{t, t_k}$ with respect to $t$. In particular, 
    \begin{align}
    \label{eq:dMttk}
       \dd M_{t_k, t} = -4 \EE\left[ \left< \nabla_x s(t, Y_t) - \nabla_x s(t_k, Y_{t_k}), \nabla_x s(t, Y_t) + \nabla_x s(t, Y_t)^2  \right> \right] \dd t + 2\EE\left[ \| \nabla_x^2 s(t, Y_t) \|_{\mathrm{F}}^2 \right] \dd t. 
    \end{align}
    By \cref{lemma:score-properties}, we obtain that (recall $m_g(t, x) = \EE[g \mid \lambda_{T - t} \theta + \sigma_{T - t} g = x]$)
    \begin{align}
    \label{eq:three-upper-bounds}
    \begin{split}
         \EE\left[ \|\nabla_x^2 s(t, Y_t)\|_{\mathrm{F}}^2 \right] &= \frac{1}{\sigma_{T - t}^{6}} \EE\left[ \| \EE[(g - m_g(t, Y_t))^{\otimes 3} \mid \lambda_{T - t} \theta + \sigma_{T - t} g = Y_t] \|_{\mathrm{F}}^2 \right] \lesssim \frac{d^3}{\sigma_{T - t}^6}, \\
         \EE\left[ \|\nabla_x s(t, Y_t)\|_{\mathrm{F}}^2 \right] &\lesssim \EE\left[ \| \sigma_{T - t}^{-2} I_d \|_{\mathrm{F}}^2 \right]  + \EE\left[ \| \sigma_{T - t}^{-2} \EE[(g - m_g(t, Y_t))^{\otimes 2} \mid \lambda_{T - t} \theta + \sigma_{T - t} g = Y_t] \|_{\mathrm{F}}^2 \right] \lesssim  \frac{d^2}{\sigma_{T - t}^4}, \\
         \EE\left[ \|\nabla_x s(t, Y_t)^2\|_{\mathrm{F}}^2 \right] &\lesssim \EE\left[ \| \sigma_{T - t}^{-4} I_d \|_{\mathrm{F}}^2 \right] + \EE\left[ \|  \sigma_{T - t}^{-4} \EE[(g - m_g(t, Y_t))^{\otimes 2} \mid \lambda_{T - t} \theta + \sigma_{T - t} g = Y_t]^2 \|_{\mathrm{F}}^2 \right] \lesssim \frac{d^4}{\sigma_{T - t}^8 }. 
    \end{split}
    \end{align}
    Applying the Cauchy-Schwartz inequality to \cref{eq:dMttk} and substituting in the upper bounds from \cref{eq:three-upper-bounds} give
    \begin{align*}
        \dd M_{t_k, t} & \lesssim \EE\left[ \|\nabla_x s(t, Y_t) - \nabla_x s(t_k, Y_{t_k}) \|_{\mathrm{F}}^2 \right]^{1/2} \cdot \EE\left[ \| \nabla_x s(t, Y_t) + \nabla_x s(t, Y_t)^2 \|_{\mathrm{F}}^2 \right]^{1/2} \dd t + \EE\left[ \| \nabla_x^2 s(t, Y_t) \|_{\mathrm{F}}^2 \right] \dd t \\
        & \lesssim \frac{d^3}{\sigma_{T - t}^6} \dd t. 
    \end{align*}
    Observe that $M_{t_k, t_k} = 0$. 
    As a consequence, for all $t \in [t_k, t_{k + 1}]$, it holds that $M_{t_k, t} \lesssim d^3 \sigma_{T - t_{k + 1}}^{-6} \Delta_k$. 
    The proof is complete. 
\end{proof}

\subsection{Proof of \cref{lemma:four-summands}}
\label{sec:proof-lemma:four-summands}


\begin{proof}[Proof of \cref{lemma:four-summands}, point 1]

By the third point of \cref{lemma:score-properties}, 
\begin{align*}
    -\partial_{\tau} s(\tau, Y_{\tau}) &= - \frac{\lambda_{T - \tau}}{\sigma_{T - \tau}^2} \EE\big[\theta \mid \lambda_{T - \tau} \theta + \sigma_{T - \tau} g = Y_{\tau}\big] + \frac{2\lambda_{T - \tau}^2}{\sigma_{T - \tau}^3} \EE\big[g \mid \lambda_{T - \tau} \theta + \sigma_{T - \tau} g = Y_{\tau}\big] \\
            & + \frac{\lambda_{T - \tau}}{\sigma_{T - \tau}^2}\EE\big[\big(\theta - m({\tau}, Y_{\tau})\big)\big(\cF(\theta, Y_{\tau}, {\tau}) - m_{\cF}\big)\mid \lambda_{T - \tau} \theta + \sigma_{T - \tau} g = Y_{\tau}\big],
\end{align*}
where $m_{\cF} = \EE[\cF(\theta, Y_{\tau}, {\tau}) \mid \lambda_{T - \tau} \theta + \sigma_{T - \tau} g = Y_{\tau}]$.
Under Assumption \ref{assumption:moments}, by Jensen's inequality we have  
\begin{align*}
    & \EE\left[ \| \EE[\theta \mid \lambda_{T - \tau} \theta + \sigma_{T - \tau} g = Y_{\tau}]\|_2^2\right] \lesssim d, \\
    & \EE\left[ \|\EE[g \mid \lambda_{T - \tau} \theta + \sigma_{T - \tau} g = Y_{\tau}]\|_2^2 \right] \lesssim d. 
\end{align*}
Recall that $\cF$ is defined in \cref{eq:B2-cF}. Conditional on $\lambda_{T - \tau}\theta + \sigma_{T - \tau} g =  Y_{\tau}$, we have 
    \begin{align}
    \label{eq:complicated-cF}
         &  \cF(\theta, Y_{\tau}, \tau) - \EE\big [ \cF(\theta, Y_{\tau}, \tau) \mid \lambda_{T - \tau}\theta + \sigma_{T - \tau} g = Y_{\tau}\big] \nonumber \\
         &=  \frac{\lambda_{T - \tau}^2}{\sigma_{T - \tau}^4} \|\theta\|_2^2 - \frac{\lambda_{T - \tau} + \lambda_{T - \tau}^3}{\sigma_{T - \tau}^4} \langle Y_{\tau}, \theta \rangle \nonumber \\
         & \qquad - \EE\left[ \frac{\lambda_{T - \tau}^2}{\sigma_{T - \tau}^4} \|\theta\|_2^2 - \frac{\lambda_{T - \tau} + \lambda_{T - \tau}^3}{\sigma_{T - \tau}^4} \langle Y_{\tau}, \theta \rangle \,\Big|\, \lambda_{T - \tau}\theta + \sigma_{T - \tau} g =  Y_{\tau} \right] \\
         & =  - \frac{\lambda_{T - \tau}}{\sigma_{T - \tau}} \langle \theta, g \rangle + \frac{\lambda_{T - \tau}^2}{\sigma_{T - \tau}^2} \|g\|_2^2  + \EE\left[ \frac{\lambda_{T - \tau}}{\sigma_{T - \tau}} \langle \theta, g \rangle - \frac{\lambda_{T - \tau}^2}{\sigma_{T - \tau}^2} \|g\|_2^2 \,\Big|\, \lambda_{T - \tau}\theta + \sigma_{T - \tau} g =  Y_{\tau} \right]. \nonumber
    \end{align}
Note that conditioning on $\theta$, $\langle \theta, g \rangle$ has conditional distribution $\cN(0, \|\theta\|_2^2)$. 
Therefore, 
\begin{align*}
    & \EE\left[ \big\| \EE\big[\big(\theta - m({\tau}, Y_{\tau})\big)\big(\cF(\theta, Y_{\tau}, {\tau}) - m_{\cF}\big)\mid \lambda_{T - \tau} \theta + \sigma_{T - \tau} g = Y_{\tau}\big] \big\|_2^2 \right] \\
    & \lesssim  \EE\left[ \EE\big[\|\theta - m({ \tau}, Y_{\tau})\|_2^2 \mid \lambda_{T - \tau} \theta + \sigma_{T - \tau} g = Y_{\tau}\big] \EE\big[ \big(\cF(\theta, Y_{\tau}, {\tau}) - m_{\cF}\big)^2 \mid \lambda_{T - \tau} \theta + \sigma_{T - \tau} g = Y_{\tau}\big]\right] \\
    & \lesssim  \EE\left[ \big\|\Theta - m( \tau, Y_{\tau})\big\|_2^4 \right]^{1/2} \EE\left[ \big(\cF(\Theta, Y_{\tau}, {\tau}) - m_{\cF}\big)^4  \right]^{1/2} \\
    & \lesssim   \frac{\lambda_{T - \tau}^2 d^3}{\sigma_{T - \tau}^2},
\end{align*}
where we write $Y_{\tau} = \lambda_{T - \tau} \Theta + \sigma_{T - \tau} G$ for $(\Theta, G) \sim q_0 \otimes \cN(0, 1)$. 
In the above display, the last inequality is by Jensen's inequality and Assumption \ref{assumption:moments}. 
The proof is thus complete. 
\end{proof}

\begin{proof}[Proof of \cref{lemma:four-summands}, point 2]
By \cref{lemma:score-properties}, we have 
\begin{align*}
    & \EE[\| \nabla_x s(\tau, Y_{\tau}) s(\tau, Y_{\tau})\|_2^2] \\
    & =  \EE\left[ \| ( \sigma_{T - \tau}^{-3} I_d - \lambda_{T - \tau}^2 \sigma_{T - \tau}^{-5} \Cov[\theta \mid \lambda_{T - \tau} \theta + \sigma_{T - \tau} g = Y_{\tau}]) \EE[g \mid \lambda_{T - \tau} \theta + \sigma_{T - \tau} g = Y_{\tau}] \|_2^2 \right] \\
    & \lesssim   \frac{1}{\sigma_{T - \tau}^6} \EE\left[ \|\EE[g \mid \lambda_{T - \tau} \theta + \sigma_{T - \tau} g = Y_{\tau}]\|_2^2 \right] \\
    & \qquad + \frac{1}{\sigma_{ T - \tau}^{6}} \EE\left[ \|\EE[(g - m_g( \tau, Y_{\tau}))^{\otimes 2} \mid \lambda_{T - \tau} \theta + \sigma_{T - \tau} g = Y_{\tau}] \EE[g \mid \lambda_{T - \tau} \theta + \sigma_{T - \tau} g = Y_{\tau}] \|_2^2 \right] \\
    & \lesssim  \, {d^3}{\sigma_{T - \tau}^{-6}},
\end{align*}
where we recall that $m_g(\tau, Y_{\tau}) = \EE[g \mid \lambda_{T - \tau} \theta + \sigma_{T - \tau} g = Y_{\tau}]$. 
  %
\end{proof}

\begin{proof}[Proof of \cref{lemma:four-summands}, point 3]
Next, we look at the third term $\EE[\| \nabla_x s(\tau, Y_{\tau}) Y_{\tau}\|_2^2]$. 
By the first point of \cref{lemma:score-properties}, we have 
\begin{align}
\label{eq:grad-s-X-four}
     \EE\big[\big\| \nabla_x s(\tau, Y_{\tau}) Y_{\tau}\big\|_2^2\big] 
    & =  \EE\big[\big\| \big( \sigma_{T - \tau}^{-2} I_d - \lambda_{T - \tau}^2 \sigma_{T - \tau}^{-4} \Cov[\theta \mid \lambda_{T - \tau} \theta + \sigma_{T - \tau} g = Y_{\tau}]\big) Y_{\tau}\big\|_2^2\big] \nonumber  \\
    & \lesssim  \frac{1}{\sigma_{T - \tau}^{4}} \EE[\|Y_{\tau}\|_2^2] + \frac{1}{\sigma_{T - \tau}^4} \EE\left[ 
    \|\EE[(g - m_g(\tau, Y_{\tau}))^{\otimes 2} \mid \lambda_{T - \tau} \theta + \sigma_{T - \tau} g = Y_{\tau}] Y_{\tau}\|_2^2  \right] \nonumber  \\
    & \leq   \frac{1}{\sigma_{T - \tau}^{4}} \EE[\|Y_{\tau}\|_2^2] + \frac{1}{\sigma_{T - \tau}^4} \EE\left[ \| \EE[gg^{\top} \mid \lambda_{T - \tau} \theta + \sigma_{T - \tau} g = Y_{\tau}] Y_{\tau}  \|_2^2 \right].  
\end{align}
Given that $(\mathbb{E}[M\mid \cF])^2 \preceq \mathbb{E}[M^2\mid \cF]$ for any random matrix $M$ and filtration $\cF$, we can derive
\begin{align}
\label{eq:gggg}
\begin{split}
    \EE\left[ \big\| \EE[gg^{\top} \mid \lambda_{T - \tau} \theta + \sigma_{T - \tau} g = Y_{\tau}] Y_{\tau}  \big\|_2^2 \right] &\leq \EE\big[Y_{\tau}^{\top} \EE\big[\|g\|_2^2gg^{\top} \mid \lambda_{T - \tau} \theta + \sigma_{T - \tau} g = Y_{\tau}\big] Y_{\tau}\big] \\
    &=  \EE\big[(Y_{\tau}^{\top} G)^2 \|G\|_2^2\big], 
\end{split}
\end{align}
where $Y_{\tau} = \lambda_{T - \tau} \Theta + \sigma_{T - \tau} G$ for $(\Theta, G) \sim q_0 \otimes \cN(0, I_d)$. 
Also, observe that
\begin{align}
\label{eq:X-tau-G-G}
    \EE\big[(Y_{\tau}^{\top} G)^2 \|G\|_2^2\big] \lesssim d^3. 
\end{align}
Substituting \cref{eq:X-tau-G-G,eq:gggg} into \cref{eq:grad-s-X-four} gives  
\begin{align*}
    \EE\big[\big\| \nabla_x s(\tau, Y_{\tau}) Y_{\tau}\big\|_2^2\big] \lesssim  \frac{d^3}{\sigma_{T - \tau}^4},  
\end{align*}
which completes the proof of the lemma. 
\end{proof}

\begin{proof}[Proof of \cref{lemma:four-summands}, point 4]

Finally, we upper bound $\EE[\|\nabla_x^2 s(\tau, Y_{\tau}) [I_d] \|_2^2]$. By the second point of \cref{lemma:score-properties}, the $i$-th entry of $\nabla_x^2 s(\tau, X_{\tau}) [I_d]$ admits the form  
\begin{align}
\label{eq:g-third-moment}
      \sum_{j \in [d]}\sigma_{T - \tau}^{-3} \EE[(g_i - m_g(\tau, Y_{\tau})_i)(g_j - m_g(\tau, Y_{\tau})_j)^2 \mid \lambda_{T - \tau} \theta + \sigma_{T - \tau}g = Y_{\tau}], 
\end{align}
where we recall that $m_g(t, x) = \EE[g \mid \lambda_{T - t} \theta + \sigma_{T - t} g = x]$, $m_g(t, x)_i$ is the $i$-th entry of $m_g(t, x)$ and $g_i$ is the $i$-th entry of $g$. 
Applying Jensen's inequality to \cref{eq:g-third-moment}, we conclude that $\EE[\|\nabla_x^2 s(\tau, Y_{\tau}) [I_d] \|_2^2] \lesssim \sigma_{T - \tau}^{-6} d^3$. 
This concludes the proof.
\end{proof}

\subsection{Proof of Corollary \ref{cor:example}}
\label{sec:proof-cor:example}

    Given $\delta$ and $\kappa$, we first construct a sequence of step sizes as follows: 
    \begin{itemize}
    \item set $\Delta_{K - 1} = \kappa \delta^2$. 
    
    \item For $k = K - 1, K - 2, \cdots, 1$, 
    \begin{align}
    \label{eq:stepsizes-example}
    \Delta_{k - 1} = 
    \begin{cases}
    \Delta_k (1 + \sqrt{\kappa \Delta_k})^2,\qquad  & \text{if }\Delta_k (1 + \sqrt{\kappa \Delta_k})^2 \leq \kappa; \\
    \kappa, & \text{else}.
    \end{cases}
    \end{align}
    %
    
\end{itemize}

    Next, we prove that the step sizes defined as above satisfies $$\Delta_k \leq \kappa \min \big\{1, (T - t_{k + 1})^2\big\}, 
    \qquad k = 0, 1, \cdots, K - 1.$$ 
    Let us prove this claim by induction. When $k = K - 1$, this is true by definition. Now suppose  $\Delta_k \leq \kappa \min \{1, (T - t_{k + 1})^2\}$ for some $k$, and we shall use this upper bound to prove $\Delta_{k - 1} \leq \kappa \min \{1, (T - t_{k})^2\}$. If $\Delta_k (1 + \sqrt{\kappa \Delta_k})^2 > \kappa$, then this is automatically true as $\Delta_{k - 1} = \kappa$. 
    Otherwise if $\Delta_k (1 + \sqrt{\kappa \Delta_k})^2 \leq \kappa$, we have $\Delta_{k - 1} = \Delta_k (1 + \sqrt{\kappa \Delta_k})^2$. 
    By induction hypothesis, $T - t_{k + 1} \geq \sqrt{\Delta_k / \kappa}$. Therefore, 
\begin{align*}
    T - t_k = T - t_{k + 1} + \Delta_k  \geq \sqrt{\Delta_k / \kappa} + \Delta_k = \sqrt{\Delta_{k - 1} / \kappa}. 
\end{align*}
%
In this case, $\Delta_{k - 1} \geq \Delta_k$ holds for all $k = 1, 2, \cdots, K - 1$, which further implies that $\Delta_k \geq \kappa \delta^2$ for all $k = 0, 1, \cdots, K - 1$. 
As a consequence, $\Delta_{k} \geq \Delta_{K - 1} (1 + \kappa \delta)^{K - k - 1} = \kappa \delta^2 (1 + \kappa \delta)^{K - k - 1}$. 

We then upper bound the number of steps $K$ needed as a function of $\delta, \kappa$ and $T$. 
Let $K_1 = \sup\{k: \Delta_k = \kappa\}$, then $K \leq T / \kappa + K - K_1$. 
Recall that $\Delta_{k} \geq \kappa \delta^2 (1 + \kappa \delta)^{K - k - 1}$, hence 
\begin{align*}
    K - K_1 \lesssim \frac{1}{\kappa \delta} \log (1 + 1 / \delta). 
\end{align*}
We define
\begin{align*}
    T = \frac{1}{2} \log (d / \eps^2), \qquad \kappa = \min \bigg\{ \frac{\eps}{d^{3 / 2} T^{1 / 2}},\,\, \frac{1}{d^2} \bigg\}. 
\end{align*}
By definition, we have $\kappa d^2 \leq 1$. 

When $\eps \leq \sqrt{d} / 2$, it holds that $T \gtrsim 1$. With $T$ and $\kappa$ selected as above, we have $d^3 \kappa^2 T \leq \eps^2$ and $ de^{-2T} = \eps^2$. 
In addition, it is seen that
\begin{align*}
    d^7  \kappa^3 \delta^{-1} \lesssim d^{5 / 2} \eps^3 \delta^{-1}, \qquad  d^7 \kappa^3 T \lesssim d^{5 / 2} \eps^3. 
\end{align*}
Also, note that 
\begin{align*}
    \sum_{k = 0}^{K - 1}  \frac{\eps_{\mathsf{score},k}^{1 / 2} \kappa^{1/2}\sigma_{T - t_k} d}{\lambda_{T - t_k}^{1 / 2}} \leq \sum_{k = 0}^{K - 1} \frac{\eps_{\mathsf{score},k}^{1/2} \eps^{1/2} d e^{T / 2} }{d^{3 / 4} T^{1 / 4}} \lesssim \sum_{k = 0}^{K - 1} d^{1/2}\eps_{\mathsf{score},k}^{1/2}. 
\end{align*}
Combining the preceding upper bounds yields 
\begin{align*}
    \KL (q_{\delta} \parallel p_{\mathsf{output}}) \lesssim \eps^2 + \eps^3 d^{5/2} \delta^{-1} + \sum_{k = 0}^{K - 1} d^{1/2}\eps_{\mathsf{score},k}^{1/2}.
\end{align*}
In this case, $K \lesssim \frac{1}{\kappa \delta} \log (1 + 1 / \delta) + \frac{1}{2\kappa} \log (d / \eps^2)$. 
\begin{itemize}
\item
If $\eps \leq 1 / \sqrt{d}$, then 
\begin{align*}
      K \lesssim \frac{d^{3/2}}{\delta \eps} \log (1 + 1 / \delta) \sqrt{\log (d / \eps^2)} + \frac{d^{3/2} }{\eps}\big[\log (d / \eps^2)\big]^{3/2} = \widetilde O\big(d^{3/2} (\eps \delta)^{-1}\big).
\end{align*}
\item
 In addition, if $\eps \geq 1 / \sqrt{d}$, then 
\begin{align*}
    K \lesssim \frac{d^2}{\delta} \log (1 + 1 / \delta) \sqrt{\log (d / \eps^2)} + d^2 \big[\log (d / \eps^2)\big]^{3/2} = \widetilde{O}\big(d^2 \delta^{-1}\big). 
\end{align*}
\end{itemize}
The proof is thus complete. 
 

\bibliographystyle{plainnat}
\bibliography{pratik}

\end{document}